\pgfplotsset{%
    compat=newest, 
    tick label style={font=\footnotesize},
    label style={font=\footnotesize},
    legend style={font=\small},
    axis x line = center,
    axis y line = center,
    }
\newcommand{\MCE}{\operatorname{MisclassificationError_T}}
\newcommand{\AbE}{\operatorname{AbstentionError_T}}
\newcommand{\VCdim}{\operatorname{VCdim}}
\newcommand{\Ldim}{\operatorname{Ldim}}
\newcommand{\pos}{\operatorname{pos}}
\newcommand{\DIS}{\operatorname{DIS}}
\newcommand{\er}{\operatorname{er}}
\newcommand{\XS}{0.6cm}
\title{Adversarial Resilience against Clean-Label Attacks in Realizable and Noisy Settings}
\author{Carolin Christin Heinzler}
\email{cheinzler@ethz.ch}
\institute{Mathematics MSc\\
D-MATH IFOR\\[2pt]
ETH Zürich}
\date{\today}
\begin{document}

\frontmatter 
\maketitle

\cleardoublepage

\begin{abstract}

We investigate the challenge of establishing stochastic-like guarantees when sequentially learning from a stream of i.i.d. data that includes an unknown quantity of clean-label adversarial samples. We permit the learner to abstain from making predictions when uncertain. The regret of the learner is measured in terms of misclassification and abstention error, where we allow the learner to abstain for free on adversarial injected samples.
This approach is based on the work of Goel, Hanneke, Moran, and Shetty from \cite{goel2023AdversarialResilienceSequential}. We explore the methods they present and manage to correct inaccuracies in their argumentation.\\
However, this approach is limited to the realizable setting, where labels are assigned according to some function $f^*$ from the hypothesis space $\mathcal{F}$. Based on similar arguments, we explore methods to make adaptations for the agnostic setting where labels are random. Introducing the notion of a clean-label adversary in the agnostic context, we are the first to give a theoretical analysis of a disagreement-based learner for thresholds, subject to a clean-label adversary with noise.

\end{abstract}

\tableofcontents

\mainmatter 

\chapter{Introduction} \label{sec:intro}

\section{Introduction}

How can we learn from data when it is impossible to determine if it has been corrupted? When all labels are correct, verifiable even by an expert? One might naively wonder what could go wrong in this case if all labels are indeed correct. However, one should not overlook that in Machine Learning, we often rely on the assumption of independence and an identical distribution (i.i.d.) of samples to leverage statistical properties and issue performance guarantees.\\
But what happens now when unknown parts of the data do \textit{not} come from the distribution we are assuming? Especially in the case of sequential prediction, when we want to classify one sample after another, it would not be feasible to perform a one-sample outlier detection every time we receive a new sample. This question of how to provide stochastic-like guarantees when sequentially learning from a stream of i.i.d. data, which contains an unknown amount of clean-label adversarial samples, will be the topic of this thesis.\\

In a first step, we will focus on the realizable setting, where we can assume that samples are labeled according to some hypothesis $f^*$ from a binary function space $\mathcal{F}$. This setting has been studied by Goel, Hanneke, Moran, and Shetty in their 2023 paper \cite{goel2023AdversarialResilienceSequential}, which can be considered the basis and inspiration of this thesis. They derive sequential learning protocols that are resilient against a clean-label adversary, in the sense that we can recover stochastic-like error guarantees. The approach they propose relies on disagreement-based classification and shattering and takes inspiration from active learning. They manage to give results for binary function classes $\mathcal{F}$ with finite VC dimension. Assuming a fully adversarial stream of data would result in error bounds based on the Littlestone dimension instead of the VC dimension of the problem class. Therefore, their results can be seen as a beyond-worst case analysis of error.\\
One important aspect of their algorithm is that the learner is allowed to abstain from predicting in case he is not certain of his prediction, taking inspiration from selective classification. Having certainty associated with making a prediction is helpful for the learner, in order to have the possibility to learn from his mistakes. Furthermore, in real-life applications, particularly in critical domains such as medicine or self-driving cars, opting for an 'I don't know' prediction is preferable over making a wrong prediction with low certainty.\\

We build on these results and extend them to the agnostic setting, where not only samples but also labels are considered random. This scenario is more challenging to analyze; however, it is a better reflection of real-life data, allowing for noisy observations, a misspecified model, or general random labels. We find a novel way to define clean-label adversaries in this agnostic setting. To our knowledge, we are the first to study a clean-label adversary injecting samples in a classification task subject to noise from a theoretical perspective. As an introduction to this setting, we derive a disagreement-based learner for the class of threshold functions on an interval. Again, we manage to recover stochastic-like error guarantees. The primary challenge in this scenario is deciding when to update our knowledge, given the potential noise in individual labels. In the realizable setting, after observing each new labeled sample, we can be sure that we have learned something new about the target. However, the presence of noisy labels introduces an additional layer of uncertainty, as we cannot rely on individual labels.\\
Lastly, we explore ways to extend the results from \cite{goel2023AdversarialResilienceSequential} to go beyond disagreement-based methods when learning in the presence of noise.

\section{Thesis Overview}
In the first part of the thesis, we will rigorously introduce the setting of sequential prediction with adversarial clean-label injections in the realizable case. Furthermore, we will give a thorough overview of research fields in Section \ref{sec:lit}, which are connected to and have inspired the work of Goel et al. We will discuss learning in the stochastic and adversarial setting, as well as active learning and selective classification before providing a brief overview of the research conducted on clean-label adversaries. With this context, in Section \ref{sec:results_real}, we introduce the learning algorithms of \cite{goel2023AdversarialResilienceSequential} and give guarantees on expected misclassification and abstention error. We partially reformulate the proofs given in \cite{goel2023AdversarialResilienceSequential} to account for inaccuracies, and we present the revised version of the algorithm based on our suggestions. \\

The second part of the thesis focuses on the agnostic setting. We introduce the new concept of a clean-label adversary when labels are random. In Section \ref{sec:lit_agno}, we provide an overview of research in agnostic learning, specifically in agnostic active learning, which serves as an inspiration for our approach. We manage to derive a disagreement-based learner in the case of Random Classification Noise in Section \ref{sec:results_agno}, and we prove error guarantees on expected misclassification and abstention error. Lastly, we explore ways in which we can move beyond disagreement-based learning and adopt the techniques used in \cite{goel2023AdversarialResilienceSequential}. We pinpoint the challenges that emerge for this approach in the agnostic setting. To conclude, we outline potential steps to build a broader theory for clean-label adversaries in the agnostic setting.

\newpage

\chapter{Realizable Setting}

\section{Literature for the Realizable Setting} \label{sec:lit}

\subsection{Setting}\label{sub_sec:setting_realizable}

First, we consider the general case of sequentially learning some binary hypothesis. Let $\mathcal{X}\neq\emptyset$ be the instance space, equipped with a $\sigma$-algebra which defines measurable subsets and $\mathcal{Y}=\{0,1\}$ the corresponding label space. The goal of the learning problem is to find the best possible classifier from the hypothesis space $\mathcal{F}=\{f:\mathcal{X}\rightarrow \mathcal{Y}: f \text{ measurable}\}$ through samples $(x_i,y_i)\in \mathcal{X}\times\mathcal{Y}$. In the statistical model, data points $x_i$ are drawn i.i.d. from some fixed but generally unknown distribution $\mathcal{D}$ over $\mathcal{X}$ and are assigned a label according to a true labelling function $f^*\in \mathcal{F}$. We call this setting the \textit{realizable case}, as there exists a function which can deterministically assign the true label, given the point $x_i$. In Chapter \ref{chap:agno} we will focus on a generalization of this setting, which is called the \textit{agnostic case}. There, the labels are drawn at random from some conditional distribution given $x_i$.\\

We consider a time horizon $T\in\mathbb{N}^+$. In classical online (also referred to as sequential) learning, the algorithm receives at time $t\in \{1,\dots,T\}$ the new data point $x_t$ and based on the previously observed set of points $S_{t-1}=\{(x_i,y_i):1\leq i \leq t-1\}$ makes a prediction $\hat{y}_t$ for its label. Subsequently, the true label $f^*(x_t)=y_t$ is revealed and some loss might be incurred, depending on whether the prediction was correct or not. Then, $(x_t,y_t)$ is added to $S_{t-1}$ to inform the next prediction on $x_{t+1}$.\\
Any hypothesis $f\in \mathcal{F}$ can be assigned a corresponding true error $\er_{\mathcal{D}}(f)$ and an empirical error over a set $S_T$ of labeled points $\er_{S_T}(f)$, which are defined as follows
$$\text{true error }\er_{\mathcal{D}}(f):=\underset{X\sim \mathcal{D}}{P}(f(X)\neq f^*(X))\qquad \text{empirical error }\er_{S_T}(f):=\frac{1}{T}\sum_{t=1}^T \mathbbm{1}_{\{f(x_i)\neq y_i\}}.$$
Throughout this thesis, we will only consider the error with respect to the $0-1$ loss. 
The objective of the learning algorithm can be formulated such as to find a classifier $\Tilde{f}\in \mathcal{F}$ which minimizes the true error, this is called the true risk minimizer. In the realizable case, the true labelling function $f^*\in \mathcal{F}$ achieves the minimum with $\er_{\mathcal{D}}(f^*)=0$, as the label of every point $x\in\mathcal{X}$ can be determined trough $f^*\in \mathcal{F}$ and is given by $f^*(x)=y$ (with probability one). Given a data set $S_T$ we can calculate the corresponding empirical risk minimizer (ERM) $\hat{f}$:
$$\text{true risk minimizer }\Tilde{f}:= \underset{f\in \mathcal{F}}{\arg \inf} \er_{\mathcal{D}}(f)\qquad \text{empirical risk minimizer }\hat{f}:= \underset{f\in \mathcal{F}}{\arg \inf} \er_{S_T}(f).$$

In the following, we will adopt different methods than an ERM approach. We consider an adversary, trying to corrupt this learning task. More specifically, we will allow for clean-label injections: an adversary is allowed to modify the input $x_t\in \mathcal{X}$ at time $t$ without having access to the true labelling function/distribution, i.e. an adversary can sample from a different unknown distribution in every round, and the injected samples are not guaranteed to be independent.  This means we can no longer assume that the sample $x_t\in\mathcal{X}$ presented to the learner was drawn i.i.d..
However, the label $y_t\in\mathcal{Y}$, which is given to the learner afterwards, is generated according to the correct labelling function $f^*\in \mathcal{F}$. Note that the learner is not informed about which examples are adversarial (also not in hindsight) and there is no bound on the number of adversarial injections of this type. However, the adversary can choose in each round $t$ whether to inject an example that he chose (from an unknown distribution and not i.i.d.) or whether not to inject and present an i.i.d. example to the learner. The adversary makes his choice, without having observed the i.i.d. sample. \\
As a consequence, the learner cannot impose any distributional assumption on the observed samples. We will denote the sample presented to the learner in round $t$ by $\hat{x}_t$ to stress that we no longer can be sure, whether it is an uncorrupted $x_t$ drawn from $\mathcal{D}$ or whether it is an injection by an adversary.\\

The original paper \cite{goel2023AdversarialResilienceSequential} addresses two settings for the realizable case: first, the case for which the learner knows the true distribution $\mathcal{D}$ of samples from $\mathcal{X}$. This will help to make correct predictions and evaluate if an example might have come from an adversary. Secondly, we study the case when there is no information about the distribution of samples $x\in \mathcal{X}$. Then one needs to rely on structural assumptions of the learning problem. It is important to note that in both scenarios, the learner does not know the true labelling function; the objective is to acquire this knowledge through the learning process.\\

In order to improve performance in this setting of clean-label injections by an adversary, the algorithm is allowed to abstain on examples where it is uncertain. In literature this concept is known as the classification with a reject option or selective classification.\\
The predictions of the algorithm are within the set $\{\perp,0,1\}$, where the symbol $\perp$ indicates an abstention from prediction. However the problem is different compared to multiclass classification, as no point has an inherent label $\perp$.

\subsubsection*{Objective}

The performance of our algorithm is evaluated based on two different types of error. On one hand, whenever the learner chooses to predict, there is a possibility that the prediction is wrong. In order to ensure correct predictions, the learner suffers a \textit{misclassification error} for a wrong prediction in $\{0,1\}$. On the other hand, the learner is discouraged from abstaining excessively on i.i.d. samples, and is subject to an \textit{abstention error} for an abstention on what would have been an i.i.d. sample. Without this penalty, a trivial strategy to minimize classification error would be to always abstain. Consequently, we infer a zero-one loss in both scenarios and define the $\MCE$ and $\AbE$ at time $T$ accordingly
\begin{align*}
    \MCE&= \sum_{t=1}^T \mathbbm{1}_{\{\hat{y}_t=1-y_t\}}\\
    \AbE&= \sum_{t=1}^T\mathbbm{1}_{\{x_t \text{ was drawn i.i.d. from }\mathcal{D} \text{ and } \hat{y}_t=\perp\}}
\end{align*}
The objective is to minimize the sum of these two and comprehend the trade-off between misclassification and abstention error. Note that abstention does not suffer a cost in general (except for i.i.d. samples) such as to account for an arbitrary amount of injections without incurring a linear cost in the error for abstaining. \\
Furthermore, an interesting aspect to note is that the learner itself is not aware of how much error he suffers. This is due to the fact that it is never revealed which samples were adversarial and which were not, thus making the calculation of the true abstention error not possible for the learner. Theoretically one approach could be to reveal to the learner if it was an adversarial sample only after an abstention. A strategy could then be to abstain as long as a large enough sample of $n\in\mathbb{N}$ i.i.d. samples is collected and estimate the underlying distribution if it is not already known. The abstention error until this point would be $n$, as one only pays for abstentions on non-adversarial samples. In case the distribution is unknown, we will nevertheless adopt a different approach that is distribution-free and relies on structural properties of the hypothesis class.

\begin{definition}[Shattering and VC dimension \cite{vladimirvapnik1982EstimationDependencesBased,vapnik1971UniformConvergenceRelative}]
    Let $\mathcal{F}$ be a binary function class over $\mathcal{X}$ and $d\in \mathbb{N}$.\\
    The class $\mathcal{F}$ is said to shatter a set of $d$ points $\{x_1,\dots,x_d\}\subset\mathcal{X}$ if for all $y\in \{0,1\}^d$ there exists an $f\in \mathcal{F}$ such that $f(x_i)=y_i$ for all $1\leq i\leq d$. We define the Vapnik-Chernovenkis (VC) dimension of $\mathcal{F}$ to be the cardinality $d$ of the largest set in $\mathcal{X}$ that is shattered by $\mathcal{F}$ and we write $\VCdim(\mathcal{F})=d$.
\end{definition}

\begin{definition}[Littlestone dimension \cite{littlestone1988LearningQuicklyWhen}] \label{def:little_dim}
    Let $\mathcal{F}$ be a binary function class over $\mathcal{X}$ and $\ell\in \mathbb{N}$.\\
    First, we define a mistake tree by a full binary decision tree of depth $\ell$ with internal nodes labeled with elements of $\mathcal{X}$. We associate every root to leaf path in this mistake tree by a sequence of $((x_i,y_i))_{i=1}^\ell$ by assigning every element $x_i$ a label $y_i\in \{0,1\}$ depending if it is the left ($y_i=0$) or right ($y_i=1$) child of its parent.\\
    The function class $\mathcal{F}$ is said to shatter a mistake tree of depth $\ell$, if for any root to leaf path $((x_i,y_i))_{i=1}^\ell$, there exits an $f\in \mathcal{F}$ such that $f(x_i)=y_i$ for all $1\leq i\leq \ell$. We define the Littlestone dimension of $\mathcal{F}$ to be largest depth $l$ of a mistake tree which is shattered by $\mathcal{F}$ and we write $\Ldim(\mathcal{F})=\ell$.
\end{definition}

\begin{definition}[Set of shattered $k$-sets]
    Let $\mathcal{F}$ be a binary function class over $\mathcal{X}$ and $k\in \mathbb{N}^+$.\\
    The set of shattered $k$-sets $\mathcal{S}_k$ is defined as all subsets of $\mathcal{X}$ of size $k$, which are shattered by $\mathcal{F}$:
    $$\mathcal{S}_k=\{\{x_1,\dots, x_k\}\subset \mathcal{X}: \mathcal{F} \text{ shatters } \{x_1,\dots, x_k\}\}.$$
    Furthermore, let $\mathcal{D}$ be a probability distribution over $\mathcal{X}$.\\
    The $k$ shattering probability of $\mathcal{F}$ with respect to $\mathcal{D}$, $\rho_k(\mathcal{F}, \mathcal{D})$, is defined as 
    $$\rho_k(\mathcal{F}, \mathcal{D})=\mathcal{D}^{\otimes k}(\mathcal{S}_k(\mathcal{F}))= P_{\{x_1,\dots, x_k\}\sim \mathcal{D}^{\otimes k}}(\{x_1,\dots, x_k\} \text{ is shattered by } \mathcal{F}).$$
    In case the distribution $\mathcal{D}$ is clear from the context, we just write $\rho_k(\mathcal{F})$.
\end{definition}
The set of shattered $k$-sets for $k=1$ is of particular interest and deserves its own name.
\begin{definition}[Disagreement region]
    Let $\mathcal{F}$ be a binary function class over $\mathcal{X}$.\\
    The disagreement region of $\mathcal{F}$ is defined as the set of all points $x\in \mathcal{X}$ which can be shattered by $\mathcal{F}$:
    $$\mathcal{S}_1(\mathcal{F})= \{x\in \mathcal{X}: \exists f,g\in \mathcal{F} \text{ such that } f(x)\neq g(x)\}.$$
    We will also refer to this quantity as $\DIS(\mathcal{F})$.
    Furthermore, let $\mathcal{D}$ be a probability distribution over $\mathcal{X}$.\\
    Analogously, we define the disagreement probability of $\mathcal{F}$ with respect to $\mathcal{D}$ as 
    $$\rho_1(\mathcal{F},\mathcal{D})=\mathcal{D}(\mathcal{S}_1(\mathcal{F}))=P_{x\sim \mathcal{D}}(\exists f,g \text{ such that } f(x)\neq g(x))=P_{x\sim \mathcal{D}}(x\in \DIS(\mathcal{F})).$$
    We write $\rho_1(\mathcal{F})$ if the distribution $\mathcal{D}$ is clear from the context.
\end{definition}

\subsubsection*{Notation}

We explain some notation specific for this thesis. The set of natural numbers $\mathbb{N}$ contains 0, whilst $\mathbb{N}^+$ only contains positive integers. Similarly for $\mathbb{R}^+=(0,\infty)$.  \\
Let $\mathcal{X}$ be the input space and $\mathcal{D}$ a probability distribution over $\mathcal{X}$. We define the $k$ dimensional product measure to be $\mathcal{D}^{\otimes k}$, i.e. for $A\subseteq\mathcal{X}^k$ then $\mathcal{D}^{\otimes k}(A)=P((X_1,\dots,X_k)\in A)$ for $X_1,\dots, X_k$ i.i.d. random variables drawn from $\mathcal{D}$.
Given a set $S_t=\{(x_1,y_1),\dots,(x_t,y_t)\}$, we write $\mathcal{F}|_{S_t}=\{f\in\mathcal{F}:f(x_i)=y_i \text{ for all } 1\leq i\leq t\}$ to define the restriction of the function class $\mathcal{F}$ to the set $S_t$. Whenever we want to restrict $\mathcal{F}$ to a single point $S=\{(x,y)\}$, we can also write $\mathcal{F}|_S=\mathcal{F}^{x\rightarrow y}$. For a vector $x\in\mathbb{R}^n$, when writing $(x)_i$, we refer to the $i$-th coordinate of $x$. While we typically address the binary classification task with labels in $\mathcal{Y}=\{0,1\}$, we sometimes refer to a sample with the label 0 as a 'negative' sample and as a 'positive' sample when the label is 1. We define the indicator function  $\mathbbm{1}_{\{A\}}:\mathcal{X}\rightarrow \{0,1\}$ for an event $A\subseteq\mathcal{X}$ to be the following
\begin{equation*}
    \mathbbm{1}_{\{x\in A\}}=\begin{cases}
    1 & x\in A \\
    0 & x\notin A.
    \end{cases}
\end{equation*} 
Based on this, one can define the $0-1$ loss as a function $\ell_{0,1}:\mathcal{X}\times \mathcal{Y}\rightarrow\{0,1\}$ as $\ell_{0,1}(x,y)=\mathbbm{1}_{\{x\neq y\}}$.\\
Lastly, we use Landau notation, to describe asymptotics of a function $f$, in terms of another function $g$ where $f(x)\in\mathcal{O}(g(x))$ if $\lim\sup _{x\rightarrow\infty}\frac{|f(x)|}{g(x)}< \infty$ which is equivalent to $g(x)\in\Omega(f(x))$, and
$f(x)\in o(g(x))$ if $\lim_{x \rightarrow \infty} \frac{f(x)}{g(x)}=0$.



\subsection{Related Work}

\subsubsection{On Learning}\label{sec:learning}
For a first notion of what it means to learn, we look at PAC-learning. The cornerstone for this theory was first developed by Vladimir Vapnik and Alexey Chervonenkis. Together they were the founding fathers of a whole new subbranch of statistical learning theory between 1960 and 1990: the VC Theory (for Vapnik Chernovenkis).
Their fundamental work focused on establishing uniform laws of large numbers and high probability bounds for empirical risk \cite{vladimirvapnik1964ClassAlgorithmsPattern, vapnik1971UniformConvergenceRelative,vladimirvapnik1974TheoryPatternRecognition, vladimirvapnik1982EstimationDependencesBased}.\\
This so called \textit{uniform convergence} property justifies learning through empirical risk minimization (ERM) if the hypothesis class is not too rich (exactly for bounded VC dimension \cite{vapnik1971UniformConvergenceRelative}). Their results encapsulate fundamental properties of what one would associate with learning: when the goal is to discover general rules from observed data, we seek for our rules to generalize well, even to data we have not observed whilst for the process or learning we wish to be consistent, i.e. the more data we observe, the better we wish to approximate the true rules \cite{vapnik2000NatureStatisticalLearning}.\\

These principles were put in the context of learning by Valiant in 1984 \cite{valiant1984TheoryLearnable}, who introduced the notion of PAC-learning. By making minimal assumptions, he was able to put learning in the most general context, to establish distribution-free error bounds. For this, we first rigorously define what a learner is in statistical learning theory and we state the definition of PAC learnable:\\

\begin{definition}[Learner or Learning Algorithm]
    A learner $L$ (or for a less human version: a learning algorithm) is a mapping from $\bigcup_{k=1}^\infty (\mathcal{X}\times \mathcal{Y})^k\rightarrow \mathcal{F}$ which takes a sequence of any length of points and labels and maps it to a function in $\mathcal{F}$.
\end{definition}
\begin{definition}[PAC learnable (from \cite{shalev-shwartz2014UnderstandingMachineLearning})]\label{def:pac}
    Let $\mathcal{F}$ be a binary hypothesis class over $\mathcal{X}$ and $\mathcal{D}$ a probability distribution over $\mathcal{X}$.\\
    We call a hypothesis class $\mathcal{F}$ Probably Approximately Correct (PAC) learnable, if there exists a function $m_\mathcal{F}:(0,1)\times (0,1)\rightarrow \mathbb{N}$ and a learner $L$ such that the following holds:
    For any $\varepsilon, \delta\in (0,1)$ and any probability distribution $\mathcal{D}$ over $\mathcal{X}$ and every labelling function $f\in \mathcal{F}$, when running the learning algorithm on $m \geq m_\mathcal{F}(\varepsilon,\delta)$ i.i.d. samples drawn from $\mathcal{D}$ and labeled by $f$, the algorithm returns a hypothesis $h$. For this $h$ we have with probability of at least $1-\delta$ (over the randomness of the samples), that for the true risk the following bound holds: $\er_{\mathcal{D}}(h)\leq \varepsilon$. \\
    The function $m_\mathcal{F}$ is referred to as sample complexity for the learning algorithm $L$.
\end{definition}
Note that the bound has to hold for \textit{any} probability distribution $\mathcal{D}$, this can be interpreted a worst-case error. Furthermore, we usually assume that the data which is presented to the learner, is drawn i.i.d. from a distribution $\mathcal{D}$. We will refer to this as the \textit{stochastic} setting. \\

In 1989 Blumer, Ehrenfeucht, Warmuth and Haussler \cite{blumer1989LearnabilityVapnikChervonenkisDimension} put the VC dimension and uniform convergence results from Vapnik and Chernovenkis into the context of Valiant's PAC learning. This in turn lead to the following theorem which provides a unique characterization of those binary function classes which are PAC-learnable. It combines results on PAC-learnability, VC-dimension, uniform convergence and ERM:

\begin{theorem}[Fundamental Theorem of Statistical Learning Part I]\label{theo:fund_stat_learn}

 Let $\mathcal{F}$ be a binary hypothesis class over $\mathcal{X}$. Then the following are equivalent:
\begin{enumerate}
    \item The hypothesis class $\mathcal{F}$ is PAC-learnable
    \item The hypothesis class $\mathcal{F}$ has the uniform convergence property
    \item An ERM rule is a successful PAC learner for $\mathcal{F}$
    \item The hypothesis class $\mathcal{F}$ has finite VC dimension $d$
\end{enumerate}
Furthermore, if $\mathcal{F}$ is PAC-learnable, we have for the sample complexity there exist absolute constants $C_1,C_2$ such that:
$$C_1\frac{d+log(1/\delta)}{\varepsilon}\leq 
m_\mathcal{F}\leq C_2\frac{d\log(1/\varepsilon)+\log(1/\delta)}{\varepsilon}.$$
\end{theorem}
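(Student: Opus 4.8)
\emph{Proof strategy.} The plan is to prove the four equivalences by the cycle $(2)\Rightarrow(3)\Rightarrow(1)\Rightarrow(4)\Rightarrow(2)$, and then to extract the quantitative sample-complexity bounds from the arguments that prove $(4)\Rightarrow(2)$ and $(1)\Rightarrow(4)$.

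The implications $(2)\Rightarrow(3)\Rightarrow(1)$ are the soft direction. If $\mathcal F$ has the uniform convergence property, choose $m$ large enough that with probability at least $1-\delta$ over a sample $S$ of size $m$ one has $\sup_{f\in\mathcal F}|\er_S(f)-\er_{\mathcal D}(f)|\le\varepsilon/2$; then any ERM output $\hat f$ satisfies $\er_{\mathcal D}(\hat f)\le\er_S(\hat f)+\varepsilon/2\le\er_S(f^*)+\varepsilon/2\le\varepsilon$ (using $\er_S(f^*)=0$ in the realizable case), so ERM is a successful PAC learner and in particular $\mathcal F$ is PAC-learnable. For $(1)\Rightarrow(4)$ I would argue the contrapositive by a No-Free-Lunch argument: if $\VCdim(\mathcal F)=\infty$, then for every $m$ there is a set $C$ of $2m$ points shattered by $\mathcal F$; putting the uniform distribution on $C$ and drawing the target labelling uniformly among the $2^{2m}$ consistent hypotheses, a learner that has seen $m$ labels retains, in expectation, no information about at least $m$ of the other points, so its expected true error is at least $1/4$ — contradicting PAC-learnability for small $\varepsilon,\delta$.

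The substantive step is $(4)\Rightarrow(2)$, the Vapnik--Chervonenkis theorem. Writing $\Pi_{\mathcal F}(m)$ for the growth function, the argument goes: (i) the Sauer--Shelah lemma bounds $\Pi_{\mathcal F}(m)\le\sum_{i=0}^{d}\binom{m}{i}=O(m^d)$ when $\VCdim(\mathcal F)=d$; (ii) a symmetrization (ghost-sample) argument replaces $\sup_f|\er_S(f)-\er_{\mathcal D}(f)|$ by a quantity depending on $\mathcal F$ only through its restriction to a double sample of size $2m$, hence through $\Pi_{\mathcal F}(2m)$; (iii) a Hoeffding bound with a union bound over these at most $\Pi_{\mathcal F}(2m)$ effective hypotheses closes the estimate. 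In the realizable regime the lighter $\varepsilon$-net version suffices and directly gives the upper bound $m_{\mathcal F}\le C_2\bigl(d\log(1/\varepsilon)+\log(1/\delta)\bigr)/\varepsilon$, namely that with probability $\ge 1-\delta$ no $f$ with $\er_{\mathcal D}(f)>\varepsilon$ is consistent with a sample of that size; and the matching lower bound $m_{\mathcal F}\ge C_1\bigl(d+\log(1/\delta)\bigr)/\varepsilon$ follows from two explicit hard instances on a $d$-point shattered set, one spreading mass so that $\Omega(d/\varepsilon)$ draws are needed before enough coordinates are observed and one two-point instance forcing the $\log(1/\delta)/\varepsilon$ term via a binomial-tail estimate. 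I expect step (ii), the symmetrization, to be the main obstacle: it is where the purely combinatorial growth-function bound is turned into a probabilistic statement about the infinite class $\mathcal F$, and it needs care with the auxiliary randomization and with the measurability of the suprema involved.
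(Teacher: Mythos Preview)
Your proof sketch is the standard textbook argument (essentially the treatment in Shalev-Shwartz--Ben-David, which the paper cites) and is correct in outline. However, the paper does \emph{not} prove this theorem at all: it is stated in the literature-review section as a classical result, attributed to Vapnik--Chervonenkis for the uniform-convergence direction and to Blumer--Ehrenfeucht--Haussler--Warmuth for the link to PAC learning, with the sample-complexity bounds credited to \cite{vapnik1971UniformConvergenceRelative, ehrenfeucht1989GeneralLowerBound, haussler1994PredictingFunctionsRandomly}. So there is no ``paper's own proof'' to compare against; the theorem serves purely as background for the later results on adversarial resilience.
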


It is important to note that this holds only for the problem of binary classification. For example, when studying private learning, a finite VC dimension does not imply learnability \cite{alon2019PrivatePACLearning}.\\

The upper and lower bounds on sample complexity have been established by \cite{vapnik1971UniformConvergenceRelative, ehrenfeucht1989GeneralLowerBound,haussler1994PredictingFunctionsRandomly}, thus characterizing the rate of convergence of the true error of the hypothesis $\mathbb{E}(\er_{\mathcal{D}}(h))$ tending to 0 as $m\rightarrow\infty$ in the PAC model. Note that convergence is at most linear in $\frac{d}{m}$ where $m$ is the number of samples if $\VCdim(\mathcal{F})=d<\infty$ or the true error is bounded away from 0 for infinite VC dimension. \\

With these foundations of PAC-learnability and uniform convergence in mind, we now move on to a characterisation of learnability for binary classification in the online setting. Instead of receiving one batch of samples and finding the best classifier to describe the data, we will receive new information in a stream and wish to continuously improve our knowledge. Furthermore, we are tasked with predicting on every incoming sample and will only receive the true label afterwards. The goal of learning in this setting is to make the fewest mistakes possible. This was first rigorously studied in 1988 by Nick Littlestone \cite{littlestone1988LearningQuicklyWhen}. Note that in this setting, we make no assumption on the origin of the samples or labels (no i.i.d. assumption), and thus the quantity we study is the maximum number of mistakes a learner might incur. Again, we study the worst-case behavior in terms of the maximum number of mistakes an algorithm will make.

\begin{definition}[Online learnability (see \cite{shalev-shwartz2014UnderstandingMachineLearning}]
Let $\mathcal{F}$ be a binary function class over the domain $\mathcal{X}$ and $T\in \mathbb{N}^+$.\\
Define $M_L(S)$ to be the number of mistakes an online learning algorithm $L$ makes on any given sequence $S=((x_1,f^*(x_1)),\dots, (x_T,f^*(x_T)))$ of labeled samples for some $f^*\in \mathcal{F}$. We define $M_L(\mathcal{F})$ to be the maximum over all sequences of this form. Furthermore, all $B<\infty$ which upper bound $M_A(\mathcal{F})$ are called a mistake bound. \\
A hypothesis class $\mathcal{F}$ is said to be online learnable, if there exists a learner $L$ for which such a mistake bound is satisfied, i.e. $M_A(\mathcal{F})\leq B < \infty$ .
\end{definition}

Similar to Theorem \ref{theo:fund_stat_learn} characterizing PAC-learnability through the VC dimension of the hypothesis class $\mathcal{F}$, online-learnability can be established through a similar combinatorial property: the Littlestone dimension of $\mathcal{F}$ (see Definition \ref{def:little_dim}). A fundamental result proven by Littlestone \cite{littlestone1988LearningQuicklyWhen} says that the optimal mistake bound in online learning, is exactly the Littlestone dimension. This combinatorial property thus characterizes learnability (if it is finite), whilst exactly quantifying the mistake bound. 

\begin{remark}
    It is easy to see that $\VCdim(\mathcal{F})\leq\Ldim(\mathcal{F})$. Furthermore, note that for example the class thresholds on the interval $[0,1]$ has VC dimension 1 whereas the Littlestone dimension is already infinite.
\end{remark}

Lastly, to give an even more general viewpoint on learning, we discuss a more recent addition to the field of learning theory. We introduce the notion of \textit{Universal Learning} which was rigorously introduced by Bousquet et al. in 2021
\cite{bousquet2021TheoryUniversalLearning}. In learning theory, and for the context of the PAC model, research interest lies in understanding the behaviour of the true error for the classifier $h\in \mathcal{F}$ which the learner produces after observing $m$ i.i.d. samples. The Fundamental Theorem of Statistical Learning (see Theorem \ref{theo:fund_stat_learn}) established at most a linear rate of convergence in the PAC model. In this setting, the goal is to establish distribution-free error bounds, which gives bounds for the worst case error over all distributions.\\
For some learning methods however, exponential error decay has been observed in practice and under low noise assumptions these rates can be proven (e.g. for stochastic gradient descent and kernel methods \cite{koltchinskii2005ExponentialConvergenceRates,audibert2007FastLearningRates,pillaud-vivien2018ExponentialConvergenceTesting}), suggesting that the distribution-free bounds might be too pessimistic to give inference on the behaviour of empirical error in applications.\\

The new approach of universal learning is to move beyond this worst-case analysis in the distribution-free setting, where we considered rates uniform in the distribution $\mathcal{D}$. This new viewpoint weakens this uniformity assumption and instead requires universality. This means that a given property (like consistency) holds for all distributions $\mathcal{D}$, however not uniformly over all distributions. 

\begin{definition}[Universal learnability \cite{bousquet2021TheoryUniversalLearning}]
    Let $\mathcal{F}$ be a binary function class over $\mathcal{X}$ and $R:\mathbb{N}\rightarrow [0,1]$ a function with $\er_{\mathcal{D}}(n)\rightarrow 0$ as $n\rightarrow\infty$.\\
    Then $\mathcal{F}$ is uniformly learnable at rate $R$, if there exists a learning algorithm $L$ who after observing $n$ i.i.d. samples outputs a hypothesis $h\in \mathcal{F}$ such that for every distribution $\mathcal{D}$ over $\mathcal{X}$, there exist $C,c>0$ for which $\mathbb{E}(\er_{\mathcal{D}}(h))\leq C\cdot \er_{\mathcal{D}}(cn)$ for all $n\in \mathbb{N}$.
\end{definition}

Similarly to the characterization before for PAC-learnability (i.e. linear rates) in case of finite VC dimension and online learnability for finite Littlestone dimension, universal learnability admits a similar characterization of convergence rates through combinatorial  properties of the function class $\mathcal{F}$. More precisely, there is exactly three possible rates of convergence: exponential, linear, and arbitrarily slow rates. One can characterize which rate is possible for a certain function class, depending on finitness of new combinatorial properties they introduced, called Littlestone tree and VC-Littlestone dimension. For further details on this new approach we refer to \cite{bousquet2021TheoryUniversalLearning}.\\

\subsubsection{On Active Learning}\label{sec:active learning}
To highlight another area of learning theory that shares certain parallels with the results we will present later, we introduce active learning. The setting of \textit{pool-based} active learning has been studied in great detail by Steve Hanneke \cite{ hanneke2009TheoreticalFoundationsActive,hanneke2012ActivizedLearningTransforming,hanneke2014TheoryDisagreementBasedActive}. In this setting, the learner has access to a collection of unlabeled examples (i.e. a pool). The learner can sequentially select samples from this pool, requesting their labels until a predefined limit on the number of label requests is reached.\\
Given this collection of unlabeled data together with selected labeled examples, the learner's goal is to find a hypothesis with minimal error on unseen data. 
This stands contrary to passive learning (hence the name active), where the learner is supplied with an entire set of labeled examples from the beginning. Note that we have two problems at hand: first, we need to identify the most informative samples to request their labels and second, based on this we need to find the best classifier.\\

Extending the pool-based setting to the case when the learner receives samples one after another (with no control over the stream), one can also study sequential/stream-based active learning  \cite{cohn1994ImprovingGeneralizationActive,freund1997SelectiveSamplingUsing, dekel2012SelectiveSamplingActive} and online selective sampling \cite{huang2020DisagreementbasedActiveLearning}. 
Similar to the online setting discussed earlier in Section \ref{sec:learning}, unlabeled samples are provided sequentially, and the learner will generate in each round a prediction for their labels. 
However, instead of being shown the true label after every prediction, the learner must now decide whether or not he wants to observe the true label. In case of a label request, we incur a cost.
The perspective on the objective of the learner is slightly different between stream-based and online selective sampling \cite{hanneke2021GeneralTheoryOnline}: 
\begin{itemize}
    \item In stream-based active learning the objective is again for a given budget of label requests to guarantee to find a classifier with low error on unseen data (note that the learner does not need to give predictions here).
    \item In online selective sampling, the goal is to provide guarantees on the expected number of label requests and the expected number of misclassified predictions after $T$ samples have been processed. One can also study the trade-off between these two quantities \cite{hanneke2021GeneralTheoryOnline}.
\end{itemize}

We define the concept of label complexity, an important quantity in pool-based and stream-based active learning. We adopt the notion of unverifiable label complexity, as first presented in \cite{balcan2010TrueSampleComplexity} and highlight that in literature there exist variations of this concept. 
\begin{definition}[Label complexity \cite{hanneke2009TheoreticalFoundationsActive}]
    Let $\mathcal{F}$ be a binary function space over $\mathcal{X}$. \\
    We say an active learning algorithm $\mathcal{A}$ achieves label complexity $\Lambda(\cdot,\cdot,\cdot,\cdot)$ for confidence bound error, if for every true labelling function $f^*\in \mathcal{F}$, for every $\varepsilon,\delta >0$, every distribution $\mathcal{D}$ over $\mathcal{X}$ and every integer $n\geq\Lambda(\varepsilon,\delta,f^*,\mathcal{D})$, then by running $\mathcal{A}$ with a budget of $n$ label requests, we can return a classifier $f\in \mathcal{F}$ such that with probability at least $1-\delta$, we have $\er_{\mathcal{D}}(f)\leq \varepsilon$.\\

    
\end{definition}

There exist different strategies when deciding whether to request a label or not. We will focus on disagreement-based active learning \cite{cohn1994ImprovingGeneralizationActive,balcan2006AgnosticActiveLearning, hanneke2014TheoryDisagreementBasedActive}, as this shows strong parallels to the sequential learning algorithm we will implement. However, there exist also different approaches to active learning, which we will not discuss further, like query by committee (QBC) algorithms \cite{freund1997SelectiveSamplingUsing} or importance weighing algorithms \cite{beygelzimer2009ImportanceWeightedActive}.\\
The first algorithm employing a disagreement-based approach in the realizable setting was outlined by Cohn, Atlas, and Ladner in 1994, commonly referred to as CAL (see Algorithm \ref{alg:cal}), after the initials of its developers \cite{cohn1994ImprovingGeneralizationActive}. The algorithm's strength lies in its simplicity: we request the label of the sample if it cannot already be inferred from existing knowledge. In a sense this is the most basic idea of what an active learning algorithm should be capable of: if there is already enough information available, we do not request a label. However, we make no further attempt of guessing if we cannot be sure. \\

\begin{algorithm}
    \caption{CAL Algorithm: disagreement-based active learning in the realizable setting}\label{alg:cal}
    \begin{algorithmic}[1]
        \State Let $\mathcal{F}_0=\mathcal{F}$, $i=0$ and $n$ is the available label budget
        \For{$t=1,\dots$}
        \State Receive sample $x_t$
        \If{$x_t\in \DIS(\mathcal{F}_{i-1})$}
        \State Request true label $y_t$ and update $i\gets i+1$
        \State Update version space $\mathcal{F}_i\gets \mathcal{F}_{i-1}^{x_t\rightarrow y_t}$ 
        \If{$i=n$} 
        \State Return any $\hat{f}_n\in \mathcal{F}_n$
        \EndIf
        \EndIf
        \EndFor
    \end{algorithmic}
\end{algorithm}

When studying these kinds of active learning algorithm, literature is focused on establishing tight lower and upper bounds on the label complexity of an active learning algorithm, in order to understand rates of convergence for the true error. See for example Hanneke's work \cite{hanneke2009TheoreticalFoundationsActive} which majorly advanced this field of research  by introducing a novel complexity measure called \textit{disagreement coefficient}. This quantity plays a crucial role in characterizing worst-case convergence rates for active learning algorithms. Generally it is also of interest to study the cases for which active learning provides an improvement over passive learning. A breakthrough result by \cite{balcan2010TrueSampleComplexity} showed that active learning always provides asymptotically better label complexity, even achieving exponential rates.  
Building on this, Hanneke \cite{hanneke2009TheoreticalFoundationsActive, hanneke2012ActivizedLearningTransforming} provides a framework, which can transform almost any passive learning algorithm into an active learning framework, with asymptotically improved label complexity. \\

The concepts stemming from two contributions of Hanneke \cite{hanneke2009TheoreticalFoundationsActive,hanneke2012ActivizedLearningTransforming} will lend inspiration to the algorithms we will implement later in Section \ref{sec:results_real} and \ref{sec:results_agno}. Among other ideas, he developed so called acitivizers for passive learning algorithms in order to (asymptotically) strictly improve label complexity bounds when compared to the respective sample complexity of the passive learning algorithm. In the realizable setting, he managed to further improve label complexity by extending beyond the concept of disagreement to considering the shattering of more than one point. In order to understand this method better, we will focus on his results and this concept of shattering $k$-points. \\

First, we define the disagreement coefficient, which was introduced by Hanneke in 2007  \cite{hanneke2007BoundLabelComplexity}. This quantity will play an important role in classifying label complexity in active learning and selective classification problems. We also introduce the notion of the disagreement core, which is related to quantities described by Hanneke in \cite{hanneke2007BoundLabelComplexity, hanneke2011RatesConvergenceActive} and was first introduced by \cite{balcan2010TrueSampleComplexity} under the name of boundary.
\begin{definition}[Disagreement Coefficient  \cite{hanneke2012ActivizedLearningTransforming}]\label{def:disagree_coeff}
    Let $\varepsilon>0$ and $\mathcal{F}$ a binary function space over $\mathcal{X}$. For $r\in [0,1]$ and a measurable function $f:\mathcal{X}\rightarrow\mathcal{Y}$ we define $\mathcal{B}_{\mathcal{F},\mathcal{D}}(f,r)=\{h\in \mathcal{F}: P_{x\sim \mathcal{D}}(f(x)\neq h(x))\leq r\}$. If $\mathcal{F}$ and $\mathcal{D}$ are clear from the context, we will abbreviate $\mathcal{B}(f,r)=\mathcal{B}_{\mathcal{F},\mathcal{D}}(f,r)$\\
    The disagreement coefficient of a classifier $f\in \mathcal{F}$ with respect to the distribution $\mathcal{D}$ over $\mathcal{X}$ is defined as 
    $$\theta_f(\varepsilon)=\sup_{r>\varepsilon}\frac{P_{x\sim\mathcal{D}}(x\in \DIS(\mathcal{B}_{\mathcal{F},\mathcal{D}}(f,r))}{r}\vee 1.$$
    We write $\theta_f=\theta_f(0)$.
\end{definition}

\begin{definition}[Disagreement Core \cite{hanneke2012ActivizedLearningTransforming}]
    Let $\mathcal{F}$ be a binary function class over $\mathcal{X}$.\\
    We define the disagreement core with respect to $\mathcal{F}$ and distribution $\mathcal{D}$ over $\mathcal{X}$ as 
    $$\partial_{\mathcal{F},\mathcal{D}}f=\lim_{r\rightarrow0}\DIS(\mathcal{B}_{\mathcal{F},\mathcal{D}}(f,r)).$$
    If $\mathcal{F}$ and $\mathcal{D}$ are clear from the context, we will equivalently write $\partial f=\partial_{\mathcal{F},\mathcal{D}}f$.
\end{definition}

It is easy to show the following result on the label complexity of the CAL Algorithm \ref{alg:cal} (see e.g. \cite{hanneke2009TheoreticalFoundationsActive}) which bounds label complexity in terms of the disagreement coefficient.
\begin{theorem}\label{theo:label_comp_cal}
    Let $\mathcal{F}$ be a binary hypothesis space over $\mathcal{X}$, $\mathcal{D}$ some distribution over $\mathcal{X}$, and $f^*\in \mathcal{F}$ is the target we want to learn, i.e. $\er_{\mathcal{D}}(f^*)=0$. Moreover, $\VCdim(\mathcal{F})=d$.\\
    Then, the CAL Algorithm achieves label complexity $\Lambda$, such that 
    $$\Lambda(\varepsilon,\delta,f^*,\mathcal{D}) \lesssim \theta_{f^*}(\varepsilon)(d\ln(\theta_{f^*}(\varepsilon)) +\log_2(\frac{\log_2(1/\varepsilon)}{\delta}))\log_2(1/\varepsilon).$$
    
\end{theorem}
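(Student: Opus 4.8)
\emph{Proof plan.} The engine is one observation about CAL: every instance $x\notin\DIS(\mathcal{F}_{\mathrm{cur}})$ (where $\mathcal{F}_{\mathrm{cur}}$ denotes the current version space) is labeled identically by all surviving hypotheses — in particular by $f^*$, which realizability keeps in the version space forever — so declining to request its label loses nothing. An easy induction then shows that after CAL has processed stream elements $x_1,\dots,x_t$, its version space equals $\{h\in\mathcal{F}:h(x_i)=f^*(x_i)\ \forall i\le t\}$, exactly the version space obtained from $t$ fully-labeled i.i.d.\ examples; in particular $\sup_{h\in\mathcal{F}_{\mathrm{cur}}}\er_{\mathcal{D}}(h)$ is non-increasing along the run. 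The plan is to partition the run into epochs $j=1,\dots,J:=\lceil\log_2(1/\varepsilon)\rceil$, epoch $j$ being the stretch during which this quantity lies in $(2^{-j},2^{-(j-1)}]$; then $V_j$, the version space at the start of epoch $j$, satisfies $V_j\subseteq\mathcal{B}(f^*,2^{-(j-1)})$, and once the quantity drops below $\varepsilon\le2^{-J}$ any returned $\hat{f}_n\in\mathcal{F}_n$ is $\varepsilon$-good. It therefore suffices to bound the total number of label requests over the $J$ epochs and take the label budget that large.

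The core estimate is that epoch $j$ contains $O\big(\theta_{f^*}(\varepsilon)(d\ln\theta_{f^*}(\varepsilon)+\ln(1/\delta'))\big)$ requests, with $\delta':=\delta/J$. Set $q_j:=P_{x\sim\mathcal{D}}(x\in\DIS(V_j))$. Since every $h\in V_j$ agrees with $f^*\in V_j$ off $\DIS(V_j)$, we have $\er_{\mathcal{D}}(h)=q_j\cdot\er_{\mathcal{D}_{|\DIS(V_j)}}(h)$ for all $h\in V_j$, where $\mathcal{D}_{|\DIS(V_j)}$ is $\mathcal{D}$ conditioned on $\DIS(V_j)$ — a distribution on which $\mathcal{F}$ still has VC dimension at most $d$. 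Now consider the subsequence of epoch-$j$ stream elements landing in the \emph{frozen} region $\DIS(V_j)$: these are i.i.d.\ from $\mathcal{D}_{|\DIS(V_j)}$; CAL's requests during the epoch form a subset of them (the current version space always sits inside $V_j$, so $\DIS(\mathcal{F}_{\mathrm{cur}})\subseteq\DIS(V_j)$); and, by the free-label identity applied within the epoch, after the first $k$ of these elements CAL's version space is exactly $\{h\in V_j:h=f^*\text{ on those }k\text{ points}\}$. Hence the realizable uniform-convergence bound (Theorem~\ref{theo:fund_stat_learn}, in its inverted form) yields, with probability $\ge1-\delta'$, $\sup_h\er_{\mathcal{D}_{|\DIS(V_j)}}(h)\lesssim(d\ln(k/d)+\ln(1/\delta'))/k$ over this version space, i.e.\ $\sup_h\er_{\mathcal{D}}(h)\lesssim q_j(d\ln(k/d)+\ln(1/\delta'))/k$. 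Plugging in $q_j\le P_{x\sim\mathcal{D}}(x\in\DIS(\mathcal{B}(f^*,2^{-(j-1)})))\le\theta_{f^*}(\varepsilon)\cdot2^{-(j-1)}$ — valid by Definition~\ref{def:disagree_coeff} precisely because $2^{-(j-1)}>\varepsilon$, which is where $j\le J$ enters — this drops below $2^{-j}$ once $k\gtrsim\theta_{f^*}(\varepsilon)(d\ln(k/d)+\ln(1/\delta'))$; since $u\le a\ln u+b$ forces $u=O(a\ln a+b)$, epoch $j$ has ended by the time $k_j=O\big(\theta_{f^*}(\varepsilon)(d\ln\theta_{f^*}(\varepsilon)+\ln(1/\delta'))\big)$ of its $\DIS(V_j)$-elements — hence at least as many as its label requests — have been seen. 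Summing $k_1,\dots,k_J$ and union-bounding the $J$ failure events of probability $\le\delta'=\delta/J$ gives $\Lambda(\varepsilon,\delta,f^*,\mathcal{D})\lesssim\theta_{f^*}(\varepsilon)\big(d\ln\theta_{f^*}(\varepsilon)+\ln(\log_2(1/\varepsilon)/\delta)\big)\log_2(1/\varepsilon)$, as claimed.

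\emph{Main obstacle.} The one genuinely delicate point is disentangling the shrinking version space from the i.i.d.\ stream. It is tempting but false to treat the \emph{requested} points as i.i.d.\ from $\mathcal{D}_{|\DIS(V_j)}$: the current disagreement region keeps shrinking, so they are adaptively selected from ever-smaller conditional distributions. The remedy is to argue about \emph{all} epoch-$j$ elements that land in the \emph{frozen} region $\DIS(V_j)$ — those are genuinely i.i.d.\ $\mathcal{D}_{|\DIS(V_j)}$ — and to use the free-label identity to recognize CAL's version space as exactly the passive one trained on that sample. This restriction to the disagreement region is also what upgrades the crude $\ln(1/\varepsilon)$ one would get from a whole-stream VC bound to the sharper $\ln\theta_{f^*}(\varepsilon)$: the effective sample size there is the small per-epoch request count $\sim\theta_{f^*}(\varepsilon)$, not the $\sim1/\varepsilon$ stream elements consumed. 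The leftover chores — the maximal-inequality care needed because an epoch boundary is a stopping time, and verifying the self-referential inequality for $k_j$ closes with the stated constants — are routine.
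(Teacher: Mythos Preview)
Your proposal is correct and follows essentially the same approach the paper sketches (and attributes to Hanneke): partition the run into $\lceil\log_2(1/\varepsilon)\rceil$ epochs in which the version-space error halves, observe that the stream elements falling in the \emph{frozen} disagreement region $\DIS(V_j)$ form a conditionally i.i.d.\ sample to which passive realizable VC bounds apply, bound the per-epoch request count via the disagreement coefficient, and sum. Your explicit identification of the ``main obstacle'' --- that the requested points themselves are not i.i.d., so one must argue about all $\DIS(V_j)$-landings and use the free-label identity to equate CAL's version space with the passive one --- is exactly the delicate point the paper alludes to when it speaks of ``conditionally i.i.d.\ samples given the disagreement region.''
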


The key idea of proving a label complexity bound like this, is to construct a set of conditionally i.i.d. samples given the disagreement region after a certain number of label requests. Through sample complexity results of the passive learning literature, we can then bound the number of i.i.d. samples needed to guarantee that we halve the disagreement region. This is shown through removing those hypothesis with large error. The final bound on the label complexity is just a question of how often we need to halve the region of disagreement, to guarantee an error less than $\varepsilon$ with probability greater than $1-\delta$. A proof of this can be found in \cite{hanneke2009TheoreticalFoundationsActive} and \cite{hanneke2014TheoryDisagreementBasedActive}.\\

An analysis of the label complexity in terms of $\theta_f(\varepsilon)$ gives rise to an understanding of the asymptotic dependence of $\theta_f(\varepsilon)$ on $\varepsilon$. Furthermore, depending on the type of asymptotic dependence on $\varepsilon$, we can achieve better label complexity rates. For example a behavior of $\theta_f(\varepsilon)=\mathcal{O}(1)$, i.e. $\theta_f<\infty$ provides strongest label complexity results, as can be seen in Theorem \ref{theo:label_comp_cal} when we can upper bound $\theta_{f^*}(\varepsilon)$ by the constant $\theta_{f^*}$. However, it is also interesting to study a behaviour like $\theta_f(\varepsilon)=o(1/\varepsilon)$, as it can be shown that this is necessary for the CAL algorithm to provide improvement over passive learning algorithms \cite{hanneke2012ActivizedLearningTransforming}. The following lemma relates the asymptotic behaviour of the disagreement coefficient with the disagreement core.

\begin{lemma}[\cite{hanneke2014TheoryDisagreementBasedActive}] \label{lem:o(1/eps)}
    For a $f\in \mathcal{F}$ the following relation holds: $\theta_f(\varepsilon)=o(1/\varepsilon)$ if and only if $P_{x\sim\mathcal{D}}(x\in \partial f)=0$.
\end{lemma}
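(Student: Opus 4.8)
The plan is to reduce the whole statement to the behaviour of the single real-valued function
$\Delta(r) := P_{x\sim\mathcal{D}}(x\in\DIS(\mathcal{B}(f,r)))$, and in particular to the identity
$P_{x\sim\mathcal{D}}(x\in\partial f)=\lim_{r\to0}\Delta(r)$. First I would record the monotonicity facts: if $r'\le r$ then $\mathcal{B}(f,r')\subseteq\mathcal{B}(f,r)$, hence $\DIS(\mathcal{B}(f,r'))\subseteq\DIS(\mathcal{B}(f,r))$, so $\Delta$ is nondecreasing and the sets $\DIS(\mathcal{B}(f,r))$ decrease as $r\downarrow0$. Consequently $\partial f=\bigcap_{r>0}\DIS(\mathcal{B}(f,r))$ is a decreasing intersection, and continuity of the measure $\mathcal{D}$ from above yields $P(x\in\partial f)=\lim_{r\to0}\Delta(r)=\inf_{r>0}\Delta(r)$. (Here I assume, as elsewhere in the paper, that the relevant disagreement regions are measurable.)

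For the direction ``$P(x\in\partial f)=0 \Rightarrow \theta_f(\varepsilon)=o(1/\varepsilon)$'' I would show directly that $\varepsilon\,\theta_f(\varepsilon)\to0$. Fix $\eta>0$. Using $\lim_{r\to0}\Delta(r)=0$, pick $\delta>0$ with $\Delta(\delta)<\eta$. For $r>\delta$ one has $\Delta(r)/r\le1/\delta$, and for $\varepsilon<r\le\delta$ monotonicity gives $\Delta(r)/r\le\Delta(\delta)/\varepsilon$; hence $\sup_{r>\varepsilon}\Delta(r)/r\le\max\{1/\delta,\ \Delta(\delta)/\varepsilon\}$, and therefore $\varepsilon\,\theta_f(\varepsilon)\le\max\{\varepsilon/\delta,\ \Delta(\delta),\ \varepsilon\}$, where the extra $\varepsilon$ accounts for the $\vee1$ in the definition. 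For $\varepsilon$ small enough (say $\varepsilon<\min\{\delta\eta,\eta\}$) the right-hand side is $<\eta$, so $\limsup_{\varepsilon\to0}\varepsilon\,\theta_f(\varepsilon)\le\eta$; letting $\eta\to0$ finishes this implication.

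For the converse I would argue by contraposition: assume $P(x\in\partial f)=c>0$. Since $\partial f\subseteq\DIS(\mathcal{B}(f,r))$ for every $r>0$, we get $\Delta(r)\ge c$ for all $r>0$, whence $\theta_f(\varepsilon)\ge\sup_{r>\varepsilon}\Delta(r)/r\ge\sup_{r>\varepsilon}c/r=c/\varepsilon$. Thus $\varepsilon\,\theta_f(\varepsilon)\ge c>0$ for every $\varepsilon>0$, so $\theta_f(\varepsilon)$ is not $o(1/\varepsilon)$, which is the contrapositive of the claim.

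I do not expect a serious obstacle: the argument is elementary once the right object, $\Delta$, is isolated. The only points needing a little care are the justification that $\partial f$ has $\mathcal{D}$-measure equal to $\lim_{r\to0}\Delta(r)$ (continuity from above for a decreasing family, together with the measurability caveat) and the bookkeeping around the $\vee1$ and the non-attained supremum in the definition of $\theta_f(\varepsilon)$ — none of which is deep.
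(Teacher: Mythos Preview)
The paper does not include its own proof of this lemma; it is merely quoted from \cite{hanneke2014TheoryDisagreementBasedActive} as a known fact. Your argument is correct and is essentially the standard one: isolating $\Delta(r)=P_{x\sim\mathcal{D}}(x\in\DIS(\mathcal{B}(f,r)))$, using monotonicity of $r\mapsto\DIS(\mathcal{B}(f,r))$ together with continuity of measure from above to identify $P(x\in\partial f)=\lim_{r\to0}\Delta(r)$, and then splitting the supremum at a scale $\delta$ for one direction and lower-bounding $\Delta(r)\ge c$ for the other. The bookkeeping around the $\vee 1$ and the open supremum is handled correctly, and the only genuine assumption you flag---measurability of the disagreement regions---is the same standing assumption used implicitly throughout the paper.
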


When simple disagreement-based methods, i.e., shattering of one point, no longer yield improvement (this is the case when $\theta_f(\varepsilon)=\Omega(1/\varepsilon)$) the concept of shattering more than one point becomes relevant. By the above Lemma \ref{lem:o(1/eps)}, this is exactly the case when $P_{x\sim\mathcal{D}}(x\in \partial f)>0$. The generalization to this setting was conceived by Hanneke in \cite{hanneke2009TheoreticalFoundationsActive,hanneke2012ActivizedLearningTransforming} and is the basis of this work. Albeit the context of this being active learning and asymptotic label complexity improvements over passive learning, we will manage to adopt this idea to derive bounds for sequential classification. We present Hanneke's improvements obtained through this generalization. This requires the introduction of the order $k$ disagreement coefficient and the shatter core for shattering $k$ points.\\


\begin{definition}[k-dim shatter core \cite{hanneke2012ActivizedLearningTransforming}]
    Let $\mathcal{F}$ be a binary function class over $\mathcal{X}$ and $k\in \mathbb{N}$.\\
    We define the $k$-dimensional shatter core with respect to $\mathcal{F}$ and distribution $\mathcal{D}$ over $\mathcal{X}$ as 
    $$\partial_{\mathcal{F},\mathcal{D}}^k f=\lim_{r\rightarrow0} \{S\in \mathcal{X}^k:\mathcal{B}_{\mathcal{F},\mathcal{D}}(f,r) \text{ shatters } S\}.$$
    As before, if $\mathcal{F}$ and $\mathcal{D}$ are clear from the context, we will equivalently write $\partial^k f=\partial_{\mathcal{F},\mathcal{D}}^k f$.
\end{definition}

\begin{definition}[order k disagreement coefficient]
    Let $\varepsilon>0$ and $\mathcal{F}$ a binary function space over $\mathcal{X}$ and $k\in \mathbb{N}$.\\
    The order-$k$ disagreement coefficient of a classifier $f\in \mathcal{F}$ with respect to the distribution $\mathcal{D}$ over $\mathcal{X}$ is defined as 
    $$\theta_f^{(k)}(\varepsilon)=\sup_{r>\varepsilon}\frac{P_{S=\{x_1,\dots, x_k\} \sim \mathcal{D}^{\otimes k}}(\mathcal{B}_{\mathcal{F},\mathcal{D}}(f,r) \text{ shatters } S)}{r}\vee 1.$$
    We furthermore define $\Tilde{d}_f=\min\{k\in \mathbb{N}: P_{S\sim \mathcal{D}^{\otimes k}}(S\in \partial^k f)=0 \}$.\\
    We abbreviate $\theta_f^{(k)}=\theta_f^{(k)}(0)$.
\end{definition}

Note that for $f\in \mathcal{F}$ with $\VCdim(\mathcal{F})=d$, we have $\Tilde{d}_f\leq d+1$, so this quantity is well defined, for a finite VC dimension. Furthermore, we have $\theta_f^{(\Tilde{d}_f)}(\varepsilon)\leq \theta_f(\varepsilon)$ for all $\varepsilon \geq 0$ and by using Lemma \ref{lem:o(1/eps)} it is always the case that $\theta_f^{(\Tilde{d}_f)}(\varepsilon)=o(1/\varepsilon)$.\\

The Shattering Algorithm was defined by Hanneke as an activizer for a passive learning algorithm \cite{hanneke2012ActivizedLearningTransforming} and revised as a general active learning strategy in \cite{hanneke2014TheoryDisagreementBasedActive}. It is of the same general structure as the CAL Algorithm, but differs in the decision of whether or not to request a label. As a reminder: for the CAL Algorithm, the label is requested at time $t$, if the current version space $\mathcal{F}_i$ $(0\leq i\leq n)$ shatters the sample $x_t$ and if not, the correct label can be inferred. For the shattering algorithm, we discuss the case, when $\Tilde{d}_f$ is known, and we set $k=\Tilde{d}_f$ (it is possible to handle the case when this is not known, through iterating through all possible values of $k$). Furthermore, we assume that the distribution $\mathcal{D}$ over $\mathcal{X}$ is also known. Note that in the case of i.i.d. samples, it is also possible to handle the case for which $\mathcal{D}$ is not known, by an estimator which is only using unlabeled data.\\

The following determines whether or not a label is requested:
$$P_{S=\{x_1,\dots, x_{k-1}\}\sim \mathcal{D}^{\otimes k-1}}(S\cup {x_t} \text{ is shattered by } \mathcal{F}_i| S \text{ is shattered by } \mathcal{F}_i)$$
If this quantity exceeds $1/2$ a label is requested and the hypothesis space is updated accordingly. Otherwise, the hypothesis space is updated with a prediction. In this setting, we predict with the label $\hat{y}_t$, which maximizes the following probability
$$P_{S=\{x_1,\dots, x_{k-1}\}\sim \mathcal{D}^{\otimes k-1}}(S \text{ is not shattered by } \mathcal{F}_i^{x_t\rightarrow 1-\hat{y}_t}| S \text{ is shattered by } \mathcal{F}_i).$$\\

The following theorem quantifies, the improvement in label complexity for the Shattering Algorithm:
\begin{theorem}
    Let $\mathcal{F}$ be a binary hypothesis space over $\mathcal{X}$, $\mathcal{D}$ some known distribution over $\mathcal{X}$, and $f^*\in \mathcal{F}$ is the target we want to learn, i.e. $\er_{\mathcal{D}}(f^*)=0$. Moreover, $\VCdim(\mathcal{F})=d$ and $k=\Tilde{d}_{f^*}$.\\
    Then, the Shattering Algorithm achieves label complexity $\Lambda$, such that 
    $$\Lambda(\varepsilon,\delta,f^*,\mathcal{D}) \lesssim \theta_{f^*}^{(\Tilde{d}_{f^*})}(\varepsilon)(d\ln(\theta_{f^*}(\varepsilon)) +\log_2(\frac{\log_2(1/\varepsilon)}{\delta}))\log_2(1/\varepsilon)$$
\end{theorem}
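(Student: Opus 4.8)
The plan is to replay the epoch-based analysis that yields the CAL label-complexity bound of Theorem~\ref{theo:label_comp_cal}, with the disagreement region $\DIS(\mathcal{F}_i)$ replaced by the region of $\mathcal{X}$ on which the Shattering Algorithm queries a label. I would partition the stream into epochs $m=1,\dots,M$ with $M=\lceil\log_2(1/\varepsilon)\rceil$, and maintain the invariant that the version space at the start of epoch $m$, written $\mathcal{F}^{(m)}$, satisfies $\mathcal{F}^{(m)}\subseteq\mathcal{B}(f^*,r_m)$ with $r_m=2^{-(m-1)}$; as soon as $r_m\le\varepsilon$, every surviving hypothesis has error $\le\varepsilon$ and we are done. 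All estimates are stated on a single good event on which the passive PAC guarantee used inside each epoch succeeds; a union bound over the $M$ epochs with per-epoch confidence $\delta/M$ controls its complement and produces the $\log_2(\log_2(1/\varepsilon)/\delta)$ term in the final bound.

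The core is the per-epoch query count. Fix an epoch $m$ and abbreviate $\mathcal{F}=\mathcal{F}^{(m)}$. A fresh draw $x\sim\mathcal{D}$ triggers a query iff $P_{S\sim\mathcal{D}^{\otimes(k-1)}}(S\cup\{x\}\text{ shattered by }\mathcal{F}\mid S\text{ shattered by }\mathcal{F})>\frac{1}{2}$, equivalently iff $P_S(S\cup\{x\}\text{ shattered by }\mathcal{F})>\frac{1}{2}\rho_{k-1}(\mathcal{F})$; since the $x$-average of this probability equals $\rho_k(\mathcal{F})$, Markov's inequality caps the query probability at $2\rho_k(\mathcal{F})/\rho_{k-1}(\mathcal{F})$, and the invariant $\mathcal{F}\subseteq\mathcal{B}(f^*,r_m)$ together with $r_m>\varepsilon$ and the definition of the order-$k$ disagreement coefficient gives $\rho_k(\mathcal{F})\le\theta_{f^*}^{(k)}(\varepsilon)\,r_m$. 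Conditioned on lying in the (frozen, epoch-$m$) query region, the queried samples are i.i.d.\ from $\mathcal{D}$ restricted to that region, so feeding them to an ERM and applying the sample-complexity \emph{upper} bound of Theorem~\ref{theo:fund_stat_learn} with $\VCdim(\mathcal{F})\le d$ shows that a number of queries which is $O\!\big(\theta_{f^*}^{(k)}(\varepsilon)(d\ln\theta_{f^*}(\varepsilon)+\log_2(M/\delta))\big)$ in this epoch drives the \emph{conditional} error of every surviving hypothesis below the threshold which, after multiplying back by the query probability, forces $\mathcal{F}^{(m+1)}\subseteq\mathcal{B}(f^*,r_m/2)=\mathcal{B}(f^*,r_{m+1})$, restoring the invariant; here I use $\theta_{f^*}^{(k)}(\varepsilon)\le\theta_{f^*}(\varepsilon)$ to simplify the logarithmic factor to the form stated. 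One also has to check that the updates performed on \emph{predicted} (non-queried) samples never delete $f^*$ from the version space and only negligibly shrink its $k$-shattering probability — this is exactly why the prediction $\hat y$ is chosen to maximise $P_S(S\text{ not shattered by }\mathcal{F}^{x\to1-\hat y}\mid S\text{ shattered by }\mathcal{F})$, and the corresponding lemma of \cite{hanneke2012ActivizedLearningTransforming} carries over.

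Summing the per-epoch count over the $M=\Theta(\log_2(1/\varepsilon))$ epochs gives $\Lambda(\varepsilon,\delta,f^*,\mathcal{D})\lesssim\theta_{f^*}^{(\Tilde{d}_{f^*})}(\varepsilon)\big(d\ln\theta_{f^*}(\varepsilon)+\log_2(\log_2(1/\varepsilon)/\delta)\big)\log_2(1/\varepsilon)$, as claimed. The choice $k=\Tilde{d}_{f^*}$ is not forced by the derivation itself, but it is the canonical one: by the order-$k$ analogue of Lemma~\ref{lem:o(1/eps)} it is the smallest $k$ with $\theta_{f^*}^{(k)}(\varepsilon)=o(1/\varepsilon)$, so the resulting bound strictly improves on the CAL bound of Theorem~\ref{theo:label_comp_cal} in precisely the regime $\theta_{f^*}(\varepsilon)=\Omega(1/\varepsilon)$ where simple disagreement-based learning ceases to beat passive learning.

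The main obstacle is making the per-epoch step rigorous once the conditioning on the $k-1$ companion points is present. Three things need care: (i) the query region moves as the version space shrinks within an epoch, so — as in CAL — it must be frozen at the epoch's initial version space, and the inclusion of the later (smaller) region is less transparent here because the query criterion is a \emph{ratio} of shattering probabilities rather than a plain set; (ii) converting a bound on the \emph{conditional} error inside the query region into an \emph{unconditional} error bound is exactly where the Markov step and the order-$k$ disagreement coefficient enter, and arranging the accounting so that the lower-order shattering probability $\rho_{k-1}(\mathcal{F})$ does not leak into the final bound is the delicate point; and (iii) the safety of predicting rather than querying rests on the combinatorial fact that ``$S\cup\{x\}$ not shattered by $\mathcal{F}$'' constrains whether ``$\mathcal{F}^{x\to b}$ shatters $S$''. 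For the complete argument I would defer to Hanneke \cite{hanneke2009TheoreticalFoundationsActive,hanneke2012ActivizedLearningTransforming,hanneke2014TheoryDisagreementBasedActive}.
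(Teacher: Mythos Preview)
The paper does not supply its own proof of this theorem; it is stated as a result of Hanneke and cited to \cite{hanneke2012ActivizedLearningTransforming,hanneke2014TheoryDisagreementBasedActive}, with only the informal remark that the first factor $\theta_{f^*}(\varepsilon)$ in the CAL bound is replaced by $\theta_{f^*}^{(\Tilde{d}_{f^*})}(\varepsilon)$. Your sketch is therefore strictly more detailed than what the paper offers, and it follows the same template the paper describes informally for Theorem~\ref{theo:label_comp_cal}: partition into $\lceil\log_2(1/\varepsilon)\rceil$ epochs, maintain the invariant $\mathcal{F}^{(m)}\subseteq\mathcal{B}(f^*,2^{-(m-1)})$, bound the per-epoch query probability via the relevant disagreement coefficient, and invoke a passive PAC upper bound on the conditionally i.i.d.\ queried sample. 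You also correctly flag the genuine technical obstacles --- the $\rho_{k-1}$ denominator from the Markov step, the moving query region, and the safety of the predicted updates --- and, like the paper, ultimately defer to Hanneke for the complete argument. There is nothing to correct; your outline matches the intended approach and goes beyond what the paper itself provides.
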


When comparing this label complexity to the one achieved by CAL algorithm, we have replaced the first factor of $\theta_{f^*}(\varepsilon)$ by $\theta_{f^*}^{(\Tilde{d}_{f^*})}(\varepsilon)$. Like explained before, aside from constant factors, this is never worse than $\theta_{f^*}(\varepsilon)$ and is significantly better, when $\theta_{f^*}(\varepsilon)=\Omega(1/\varepsilon)$, as we always have $\theta_{f^*}^{(\Tilde{d}_{f^*})}(\varepsilon)=o(1/\varepsilon)$.\\

What we take as inspiration from this, is however the fact that we use a proxy of drawing $k-1$ points $\{x_1,\dots, x_{k-1}\}$ i.i.d. from $\mathcal{D}$ and then comparing the incoming point $x_t$ to this set. For this, we are not using the previous points $x_i$ for $0\leq i < t$ directly, and therefore we don't rely on the i.i.d. assumption in this step. The information gained from the previous points is only indirectly used through the current version space $\mathcal{F}_i$.\\

\subsubsection{On Abstentions}\label{sec:abst}

We furthermore explore a different type of algorithm design, which in certain aspects exhibits a close connection to the field of active learning: classification with an option to reject, i.e. allowing the learner to abstain from making a classification. This simple but powerful idea was first introduced by Chow in his seminal results in 1957 and 1970 \cite{chow1957OptimumCharacterRecognition,chow1970OptimumRecognitionError}. By allowing a classification algorithm to abstain from predictions for uncertain cases through incurring a cost, he was the first to analyze the trade-off between rejections and errors. He furthermore derived a Bayes-optimal rejection rule, which can be implemented as a  plug-in rule. This can be viewed as a confidence based rejection rule and in case the distribution of samples is known explicitly, this is optimal.\\
Other ways to learn reject option classifiers in the offline setting are through empirical risk minimization methods, which has been explored extensively in the literature, e.g. for Support Vector Machines \cite{grandvalet2008SupportVectorMachines} or boosting methods \cite{cortes2016LearningRejection}. Important to mention in this setting, is the work of \cite{bartlett2008ClassificationRejectOption}, which uses a surrogate loss function to study classification with an option to reject as a convex optimization problem (using the double hinge loss). Classification with an option to reject has become an interesting topic once more in recent years in the context of Neural Network Learning and Deep Learning   \cite{lakshminarayanan2017SimpleScalablePredictive, corbiere2019AddressingFailurePrediction, geifman2019SelectiveNetDeepNeural}. Understanding predictions through Uncertainty Quantification and gaining confidence in the model's output has become more important than ever, in the light of more and more complex models.\\

To circle back to the study of abstentions in a sequential setting, we introduce the 'Knows What It Knows' algorithm \cite{li2008KnowsWhatIt} (in short: KWIK). This approach combines the PAC style of learning with the online setting we described earlier in Section \ref{sec:learning}. The goal is that in a sequence of adversarial chosen inputs, we try to establish a bound on the number of abstentions, whilst not allowing any misclassifications. Like the name of this algorithm already suggests: the learner is self-aware and can classify regions of certainty (what it knows) and uncertainty (what it does not know). This is also referred to as reliable learning \cite{rivest1988LearningComplicatedConcepts}. If we relax this approach to allowing a bounded number, we get a model that sits right in between online learning \cite{littlestone1988LearningQuicklyWhen} (for no abstentions) and the KWIK framework \cite{li2008KnowsWhatIt} (for no misclassifications).
It was shown in \cite{sayedi2010TradingMistakesDon} that by allowing a \textit{bounded number of misclassifications}, it is possible for certain infinite problem classes to achieve more desirable abstention bounds. This can be generalized to infinite problem classes and the bounds on expected abstentions of the algorithm, for a guaranteed number of misclassifications, can be expressed in terms of a quantity called Extended Littlestone dimension \cite{zhang2016ExtendedLittlestoneDimension}. \\ 

Another approach to designing and analyzing algorithms with respect to their expected abstention and misclassification errors, draws inspiration from active learning. This was conceived by El-Yaniv and Wiener \cite{el-yaniv2010FoundationsNoisefreeSelective} and they used the term \textit{selective classification} to describe the setting of classification with an option to reject. More specifically, they analyzed the trade-off between abstention error and misclassifications and they are the first to give confidence based guarantees on the respective true errors. \\

This can be formalized in the following way, according to \cite{el-yaniv2010FoundationsNoisefreeSelective}: a binary classifier is learnt through a function pair $(f,r)$ with $f\in \mathcal{F}$ a binary classifier and an independent rejection function $r:\mathcal{X}\rightarrow\mathcal{Y}$. The learner abstains (represented through the symbol $\perp$) for $r(x)=0$ and predicts according to some hypothesis $f(x)$ if $r(x)=1$:
\begin{equation*}
    (f,r)(x):= \begin{cases}
        \perp & \text{if } r(x)=0\\
        f(x) & \text{if } r(x)=1.
    \end{cases}
\end{equation*}

Based on this one can define the coverage and the risk of a classifier. 

\begin{definition}
    Let $(f,r)$ be a selective classifier with $f\in \mathcal{F}$ a binary function, $r:\mathcal{X}\rightarrow\mathcal{Y}$ and $\mathcal{D}$ a distribution over $\mathcal{X}$.\\
    We define the coverage of the classifier, as the expectation of the function $r$ with respect to the underlying distribution $\mathcal{D}$:
    $$\Phi(f,r)=\mathbb{E}(r(X)) \qquad \text{ where } \mathcal{X}\sim \mathcal{D}.$$
\end{definition}

\begin{definition}
    Let $(f,r)$ be a selective classifier with $f\in \mathcal{F}$ a binary function, $f^*\in\mathcal{F}$ the target, $r:\mathcal{X}\rightarrow\mathcal{Y}$, and $\mathcal{D}$ a distribution over $\mathcal{X}$.\\
    We define the risk of the classifier, as the average loss on the accepted samples (note that we use the 0-1 loss):
    $$R(f,r)=\frac{\mathbb{E}(r(X)\mathrm{1}_{\{f(X)\neq f^*(X)\}})}{\Phi(f,r)}\qquad \text{ where } \mathcal{X}\sim \mathcal{D}.$$
\end{definition}

The general objective of the learner is to produce a selective classifier $(f,r)$ with sufficiently low risk and sufficiently high coverage. This inherent trade-off can now be defined as the \textit{risk-coverage trade-off}. In order rigorously approach this, the authors first studied a \textit{perfect selective classifier} $(f,r)$, which is guaranteed to have zero risk $R(f,r)=0$ and highest possible coverage $\Phi(f,r)$. This is analogous to the objective of the KWIK algorithm \cite{li2008KnowsWhatIt}.\\

It is possible to show \cite{el-yaniv2010FoundationsNoisefreeSelective}, that the optimal strategy for issuing a prediction on a sample $x\in \mathcal{X}$ is to predict only if all classifiers in the version space \textit{agree}. This is referred to as a consistent selective strategy (CSS) and was shown to achieve monotonically increasing coverage (in the number of observed samples). Note that we can conversely state that the learner abstains if there exists functions which \textit{disagree}; this is the case exactly if the point falls in the disagreement of the current version space. This is the connection to disagreement-based active learning in the stream-based setting. \\

Building on this similarity, El-Yaniv and Wiener managed \cite{el-yaniv2012ActiveLearningPerfect} to derive general (target-independent) bounds on the coverage of the CSS through a novel complexity measure, called the compression set size. More generally, a relationship to the disagreement coefficient $\theta_f$ (see Definition \ref{def:disagree_coeff}) was established. One can show that exponential label complexity bounds for the CAL Algorithm  \ref{alg:cal} (which are characterized through $\theta$) are achievable if and only if perfect selective classification achieves fast coverage rates. And in a later collaboration between the authors and Hanneke \cite{wiener2015CompressionTechniqueAnalyzing, hanneke2014TheoryDisagreementBasedActive} they used the techniques from selective classification to give tighter coverage guarantees in active learning settings. This establishes the close connection between the field of active learning and selective classification in the realizable setting.\\

The trade-off between risk and coverage is similar to what we will observe later when studying the expected abstention error in relation to the expected misclassification in the setting of sequential learning. Furthermore, we observe a connection between the active learning framework and classification with the option to reject. We will employ this later through allowing the algorithm to abstain on certain predictions. Note that the method to use abstention to improve classification in the presence of an adversary, has been also studied for the transductive setting in \cite{goldwasser2020PerturbationsLearningGuarantees,kalai2021OptimallyAbstainingPrediction}, which is similar to having a clean-label adversary.\\

\subsubsection{On Clean-label Attacks}

Lastly, we explore the concept of clean-label attacks which has been introduced in 2018 as a type of data poisoning attack in the work of Shafahi \cite{shafahi2018PoisonFrogsTargeted}. Since then, it has been studied extensively in the Machine Learning and Deep Learning community \cite{suciu2018WhenDoesMachine,turner2018CleanLabelBackdoorAttacks, gupta2023AdversarialCleanLabel}. The rise of larger and more complex models with intricate prediction strategies, not directly obvious to users, increases the potential for adversarial attacks and manipulations that might go unnoticed.\\

The vulnerability of Deep Learning methods to adversarial attacks has been thoroughly explored in the literature, particularly focusing on so called \textit{evasion attacks} \cite{szegedy2013IntriguingPropertiesNeural, biggio2013EvasionAttacksMachine, goodfellow2014ExplainingHarnessingAdversarial}.
Here, one assumes that the adversary is attacking at test time, through alteration of the target instance, in order to achieve a misclassification. An example of this would be an adversary \textit{flipping} the true label to enforce a misclassification. In evasion attacks like this however, the adversary is assumed to have more control, than would be realistic in real-life settings, like having access to the model or  target class. \\

Clean-label attacks however represent a more realistic way of thinking about adversarial attacks. An adversary injecting clean-label samples, is influencing the model already in the training phase, with the goal to target a specific instance in the testing phase. In this setting, the adversary does not have access to the true labelling function from the hypothesis space $f^*\in \mathcal{F}$ and thus cannot interfere through flipping labels. But the adversary is able to inject arbitrary examples $x\in\mathcal{X}$, \textit{not} sampled from the true distribution $\mathcal{D}$ over $\mathcal{X}$. The label is then still generated according to the true labelling function $f^*$, making it harder for an expert observer to detect adversarial injections. Note that in general, this does not affect overall performance, as we still learn correct information about $f^*$, however the attack can be \textit{targeted}, to enforce misclassification of some specific examples. \\

To illustrate the risks associated with clean-label attacks the context of real-life applications, consider a large language model (LLM) designed for sentiment analysis, i.e. extracting opinions or specific emotions from text. For instance, a company aiming to foster positive brand associations or a political party seeking to influence public opinion could employ a clean-label attack. 
In this scenario, the company or political party acts as an adversary, intending to manipulate predictions, say a positive or negative association with the brand name/political party. Rather than gaining access to the model and altering existing predictions, which could be challenging if robust security measures are in place, the adversary might flood the internet with explicitly positive articles related to their cause. A LLM would in turn classify these articles as positive, if they are not straight up lies (clean-labels). This creates an over-representation of positive associations with the targeted name and a model trained on this classified data is more likely to associate positive sentiments with the brand name/political party. \\

This setting has first been studied from a theoretical perspective under the name of p-tampering \cite{mahloujifar2017BlockwisePTamperingAttacks,mahloujifar2020LearningPtamperingPoisoning}, which is related to a malicious noise model by Valiant \cite{valiant1985LearningDisjunctionConjunctions} where an adversary with some probability $p$ can choose an incoming sample with its label. In the $p$-tampering model, we allow an adversary to corrupt the test sample with probability $p$, but the label is then assigned according to the true labelling function. \\

We consider however a more powerful adversary, which can inject an out of distribution sample at any time, so not with a specific probability $p$. In the setting we consider, it will not be revealed to the learner which examples were adversarial, also not in hindsight. The theoretical aspects of this has been studied in \cite{blum2021RobustLearningCleanlabel}. 

\newpage

\section{Results from Original Paper in the Realizable Setting}\label{sec:results_real}
Before diving into the more general results of the paper \cite{goel2023AdversarialResilienceSequential}, we give a short motivating example of a simple disagreement-based learner, in the realizable setting with a clean-label adversary. \\
We consider the hypothesis class of thresholds $\mathcal{F}=\{\mathbbm{1}_{\{\cdot\leq a\}}:a\in [0,1]\}$ on the interval $[0,1]$. A simple disagreement-based learner (similar to the CAL Algorithm in active learning \ref{alg:cal}), chooses to predict when all classifiers from the current version space agree on its label and abstains, whenever the point falls into the disagreement region. 
It is clear that we never misclassify a sample, however the following holds for the expected number of abstentions:
\begin{itemize}
    \item In the fully stochastic setting, a classifier will abstain from predicting in expectation at most $2\log(T)$ times.
    \item In the fully adversarial setting, an adversary can sample from the disagreement region every time, thus forcing abstention linear in $T$.
\end{itemize}

With our approach in the presence of a clean-label adversary, we can achieve stochastic like guarantees for the abstention error, when we only incur a cost, if we abstain on an i.i.d. sample.

\begin{theorem}\label{theo:mis_abs_thresh}
    Let $\mathcal{F}=\{\mathbbm{1}_{\{\cdot\leq a\}}:a\in [0,1]\}$ be the class of thresholds on $[0,1]$ and $T\in\mathbb{N}^+$. \\
    We get the following bounds for a disagreement-based learner with time horizon $T$ with clean-label injections
    $$\MCE=0\quad \mathbb{E}(\AbE)\leq 2\log(T).$$   
\end{theorem}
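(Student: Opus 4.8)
The plan is to analyze the disagreement-based learner directly: it maintains a version space $\mathcal{F}_i \subseteq \mathcal{F}$ (initialized to $\mathcal{F}$), predicts the agreed-upon label whenever $\hat{x}_t \notin \DIS(\mathcal{F}_i)$, abstains whenever $\hat{x}_t \in \DIS(\mathcal{F}_i)$, and upon receiving a label updates $\mathcal{F}_i$ by restriction. The claim $\MCE = 0$ is immediate: the learner only predicts when every surviving hypothesis agrees on the label, and since $f^* \in \mathcal{F}_i$ always (labels are generated by $f^*$, so no consistent hypothesis is ever removed), the predicted label equals $f^*(\hat{x}_t) = y_t$. So the entire content is the bound on $\mathbb{E}(\AbE)$.

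For the abstention bound, the key structural observation is that for thresholds on $[0,1]$, the version space $\mathcal{F}_i$ after restriction is always of the form $\{\mathbbm{1}_{\{\cdot \le a\}} : a \in (\ell, u]\}$ (or a similar interval) determined by the largest observed positive point $\ell$ and the smallest observed negative point $u$; hence $\DIS(\mathcal{F}_i) = (\ell, u]$ up to endpoints. Crucially, abstention error is only charged on i.i.d. samples. When an i.i.d. sample $x_t \sim \mathcal{D}$ falls in $\DIS(\mathcal{F}_i) = (\ell, u]$, after seeing its label $y_t$ the disagreement interval shrinks: either $\ell$ moves up to $x_t$ (if $y_t = 1$) or $u$ moves down to $x_t$ (if $y_t = 0$). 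I would pass to the quantity $p_i := \mathcal{D}(\DIS(\mathcal{F}_i))$, the probability mass of the current disagreement region, and argue that conditioned on an i.i.d. sample landing in the disagreement region, it splits the $\mathcal{D}$-mass of that region in a way that is uniform in the relevant sense — the position of $x_t$ within $(\ell,u]$, measured by $\mathcal{D}$-mass, is uniform on $[0, p_i]$. This is the standard "random point halves the interval in expectation" argument.

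Concretely, I would couple the process to the classical fully-stochastic analysis: adversarial injections, when they fall outside $\DIS(\mathcal{F}_i)$, cost nothing and change nothing; when they fall inside, the learner abstains for free and the version space can only shrink, which only helps. So the worst case for $\mathbb{E}(\AbE)$ is exactly the fully i.i.d. stream, where the bound $2\log T$ (or $\ln T$ depending on log base; I would match the statement's constant) follows by the "doubling" / harmonic-sum argument: the expected number of i.i.d. points needed to halve the disagreement mass is $O(1)$, and we need to halve it $O(\log T)$ times before the mass drops below $1/T$, after which the expected number of further hits in $T$ rounds is $O(1)$. Summing a harmonic-type series $\sum_{t=1}^T \mathbb{E}(\mathbbm{1}_{\{x_t \in \DIS(\mathcal{F}_{i(t)})\}})$ and using that $\mathbb{E}[p_{i+1} \mid p_i, \text{hit}] = p_i/2$ yields $\sum_t \mathbb{E}[p_{i(t)}] \le 2\log T$, and since a hit happens with probability exactly $p_{i(t)}$, this bounds $\mathbb{E}(\AbE)$.

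The main obstacle I anticipate is making the monotonicity-under-adversarial-injection argument fully rigorous: one must argue carefully that interleaving arbitrary (non-i.i.d., possibly adversarially adaptive) injections between the i.i.d. samples cannot increase the expected abstention count on the i.i.d. samples. The clean way is to observe that the version space is monotone decreasing regardless of what causes an update, so at each i.i.d. round $t$ the disagreement mass $p_{i(t)}$ is stochastically dominated by what it would be with the injections removed; then apply the i.i.d. bound as an upper envelope. A secondary subtlety is handling the conditioning correctly when defining the "uniform split" — this requires that $x_t$ is drawn independently of $\mathcal{F}_{i(t)}$, which holds because the version space depends only on past rounds and the adversary commits to injecting without seeing the i.i.d. sample, so the standard argument goes through.
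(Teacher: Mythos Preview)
Your reduction to the fully i.i.d.\ stream via monotonicity of the version space is correct and is also (implicitly) what the paper does. But your core analytic claim $\mathbb{E}[p_{i+1}\mid p_i,\text{hit}]=p_i/2$ is false. Conditioned on a hit, the landing point is indeed uniform in $\mathcal{D}$-mass on $[0,p_i]$, but the new disagreement interval is not a \emph{random} half: it is always the piece that still contains the true threshold $a^*$. Writing $q=\mathcal{D}((\ell,a^*])$ and $r=p_i-q$, a direct computation gives
\[
\mathbb{E}[p_{i+1}\mid p_i,\text{hit}]
=\frac{1}{p_i}\Bigl(\int_0^q (p_i-u)\,du+\int_q^{p_i} u\,du\Bigr)
=\frac{p_i}{2}+\frac{qr}{p_i},
\]
which equals $3p_i/4$ when $q=r$ and is never $p_i/2$ unless $q\in\{0,p_i\}$. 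Your argument can be repaired with the weaker contraction $\le 3p_i/4$, but that yields a constant worse than $2$ and so does not deliver the stated bound $2\log T$ without additional work.

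The paper sidesteps this entirely with an exchangeability argument: among any $i$ i.i.d.\ labeled points, at most two---the largest point with label $1$ and the smallest point with label $0$---can lie in the disagreement region determined by the other $i-1$. Hence the probability that the $i$-th i.i.d.\ sample lands in that region is at most $2/i$, and since adversarial injections only shrink the actual disagreement region, this also bounds the abstention probability on the $i$-th i.i.d.\ sample. Summing $\sum_{i=1}^{T}2/i$ gives $2\log T$ directly. This is both simpler and delivers the exact constant; I would replace the mass-halving step with this exchangeability count.
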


\begin{proof}
    Misclassification error: Whenever we choose to predict, we predict according to the true labelling function $f^*\in\mathcal{F}$, as we only predict when all classifiers agree. Furthermore, we never eliminate the target. \\
    Abstention error: Assume the number of i.i.d. samples observed until time $t$ is exactly $i-1$. An exchangeability argument shows that the probability of a new i.i.d. point falling in the disagreement region of the other $i-1$ points is at most $2/i$. Summing over the time horizon $T$ gives the desired bound. 
\end{proof}

Note that disagreement-based classifiers like this have been extensively studied in the literature as perfect selective classifiers, as we have already introduced in Section \ref{sec:abst}. Those hypothesis classes with a sublinear abstention rate, can be characterized through a quantity called the \textit{star number} $\mathfrak{s}$ (see \cite{hanneke2014MinimaxAnalysisActive,hanneke2016RefinedErrorBounds} also for a definition), where the VC dimension of a class always serves as a lower bound for $\mathfrak{s}$. 
In the stochastic setting, we can achieve a sublinear number of abstentions in $T$ if and only if the star number is finite, in which case it is at most $\mathfrak{s}\log(T)$. One can show \cite{hanneke2014MinimaxAnalysisActive} that for the class of thresholds one has $\mathfrak{s}=2$, however already for indicators of intervals where the VC dimension is 2, the star number is infinite. This motivates an approach that moves beyond disagreement-based methods; to give guarantees for more general function classes.\\

\subsection{General VC Dimension with Known Distribution}

Motivated by the simplicity and efficiency of disagreement-based methods, \cite{goel2023AdversarialResilienceSequential} adopts this approach and develops it further to move from simple disagreement (which is the shattering of one point) to the shattering of $k$-points, where $k$ is at most the VC dimension of the hypothesis class $\mathcal{F}$. We will see that this approach enables us to give stochastic like guarantees, for even more complex problem classes:\\
We show that for a class with finite VC dimension, it is feasible to achieve sublinear misclassification error at the expense of making few abstentions, even in the setting of adversarial clean-label injections (if we allow abstention on adversarial points without incurring a cost). \\

First, we describe and motivate the approach of using the probability of shattering $k$ points before giving guarantees of the learner's misclassification and abstention error.\\
Algorithm \ref{alg:gen_vc} takes inspiration from the active learning algorithm based on shattering that Hanneke developed in \cite{hanneke2009TheoreticalFoundationsActive,hanneke2012ActivizedLearningTransforming} to improve asymptotic error guarantees of disagreement-based active learning. For an overview see Section \ref{sec:active learning}. We adopt his ideas in order to go beyond disagreement-based methods and to give guarantees for an even broader class of functions. Furthermore, he uses a proxy which is not directly dependent on the previously observed data, but leverages knowledge of the underlying distribution. This is very relevant in our setting, as we cannot assume that previously observed samples are i.i.d. Nevertheless, we can use the proxy to apply our knowledge of the underlying true sample distribution in order to decide whether to abstain or predict.\\

We will describe how Algorithm \ref{alg:gen_vc} operates.
The algorithm keeps track of a state variable $k$, which we denote as the 'level' of the algorithm. This variable is initialized by the VC dimension $d$  of the hypothesis space $\mathcal{F}$ (since, by definition, the largest set of points that can be shattered by $\mathcal{F}$ has a magnitude of $d$). Now in each round $t\in \{1,\dots, T\}$ whenever $k=0$, we have a disagreement-based learner: one abstains if the incoming point $\hat{x}_t$ falls in the disagreement region of the current hypothesis space and else predicts according to the consistent label.\\

\begin{algorithm}[H]
    \caption{Learning for general VC dimension $d$ when distribution $\mathcal{D}$ of $X$ is known}\label{alg:gen_vc}
    \begin{algorithmic}[1]
        \State Set $k=d$ and $\mathcal{F}_0=\mathcal{F}$
        \For{$t=1,\dots,T$}
        \State Receive $\hat{x}_t$
        \If{$k>0$}
        \If{$\min(\rho_k(\mathcal{F}_{t-1}^{\hat{x}_t\rightarrow 0}),\rho_k(\mathcal{F}_{t-1}^{\hat{x}_t\rightarrow 1}))\geq 0.6\rho_k(\mathcal{F}_{t-1})$} \label{alg:step_abstain}
        \State Predict $\hat{y}_t=\perp$
        \Else 
        \State Predict $\hat{y}_t=\arg \max _{j\in \{0,1\}}\rho_k(\mathcal{F}_{t-1}^{\hat{x}_t\rightarrow j})$\label{alg:step_predict}
        \EndIf        
        \State Receive true label $y_t$ and update $\mathcal{F}_{t}\gets\mathcal{F}_{t-1}^{\hat{x}_t\rightarrow y_t}$
        \If{$\rho_k(\mathcal{F}_{t})\leq \alpha_k$}
        \State $ k\gets k-1$
        \EndIf
        \Else
        \If{$\hat{x}_t\in \mathcal{S}_1(\mathcal{F}_{t-1})$}
        \State Predict $\hat{y}_t=\perp$
        \Else
        \State Predict $\hat{y}_t=f(\hat{x}_t)$ for some $f\in \mathcal{F}_{t-1}$
        \EndIf
        \State Receive true label $y_t$ and update $\mathcal{F}_{t}\gets\mathcal{F}_{t-1}^{\hat{x}_t\rightarrow y_t}$
        \EndIf
        \EndFor
    \end{algorithmic}
\end{algorithm}

For $k>0$ the learner goes through the first If loop in Step \ref{alg:step_abstain} of the algorithm. We compare the probability of shattering $k$ points with respect to two different function classes. We are interested in $\mathcal{F}_{t-1}$, which constitutes the function class $\mathcal{F}$ limited to all points that have been previously processed — those for which we know the true labels. We compare the class of functions $f\in\mathcal{F}_{t-1}$ which satisfy $f(\hat{x}_t)=0$ to the class where we restrict $f\in\mathcal{F}_{t-1}$ to satisfy  $f(\hat{x}_t)=1$. 
If both times the shattering probability is high (compared to shattering probability of $\mathcal{F}_{t-1}$) , then we \textit{abstain}. If this is not the case (i.e. the shattering probability of at least one of the classes is low) then we \textit{predict}, according to Step \ref{alg:step_predict}, the label which maximizes the probability of shattering $k$ points. \\
After receiving the true label $y_t$, we update the hypothesis class to $\mathcal{F}_{t}$ by restricting it to only contain functions $f\in \mathcal{F}_{t-1}$ for which $f(\hat{x}_t)=y_t$. Lastly, if the probability of shattering $k$ points by this function class is below some threshold $\alpha_k$, we reduce the level $k$ by one. \\

The decision to abstain or predict relies on proving that for $\hat{x}_t$ drawn independently from $\mathcal{D}$, the event where both probabilities (mapping to 0 and 1) exceed the threshold is small and can be upper-bounded, enabling us to upper-bound $\mathbb{E}(\AbE)$. This argument is based on the observation that one can relate the probabilities of shattering $k$ points with both classes $\mathcal{F}_{t-1}^{\hat{x}_t\rightarrow 0}$ \textit{and} $\mathcal{F}_{t-1}^{\hat{x}_t\rightarrow 1}$ to shattering $k+1$ points. Furthermore, we have lower and upper bounds on $\rho_k(\mathcal{F})$ and $\rho_{k+1}(\mathcal{F})$ respectively through $\alpha_{k+1}$ and $\alpha_k$, where one can choose these thresholds to allow for the best trade-off between $\mathbb{E}(\MCE)$ and $\mathbb{E}(\AbE)$. To derive a bound for the $\mathbb{E}(\MCE)$, by design of the algorithm one has that $\rho_k(\mathcal{F}_{t-1})$ reduces by a constant fraction, if a misclassification occurs. For each level $k$ this can only happen a certain number of times, as if $\rho_k(\mathcal{F}_{t-1})$ falls below $\alpha_k$, we go down one level, therefore allowing us to derive a bound for the number of misclassifications.\\

Finally, we can give the following guarantees on the expected error after $T$ samples in Theorem \ref{theo:bounds_gen_vc}. A detailed proof based on the above described outline, can be found in \cite{goel2023AdversarialResilienceSequential}.
\begin{theorem}[\cite{goel2023AdversarialResilienceSequential}]\label{theo:bounds_gen_vc}
    Let $\mathcal{F}$ be a hypothesis class with $\VCdim(\mathcal{F})=d<\infty$. Then, in the adversarial clean-label injection model with abstentions and time horizon $T$, Algorithm \ref{alg:gen_vc} with thresholds $\alpha_k=T^{-k}$ in each level $k\in \{1,\dots, d\}$ gives the following guarantee
        \begin{align*}
        \mathbb{E}(\MCE) &\leq d^2\log(T)\\
        \mathbb{E}(\AbE)&\leq 6d.
    \end{align*}
\end{theorem}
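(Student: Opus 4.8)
The plan is to prove the two bounds essentially separately, after two preliminary observations that hold deterministically, for every realization of the (possibly adversarial) stream.

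First I would record the invariants that drive everything. Since after each round the algorithm sets $\mathcal{F}_t=\mathcal{F}_{t-1}^{\hat{x}_t\rightarrow y_t}$ with $y_t=f^*(\hat{x}_t)$, the target is never removed, so $f^*\in\mathcal{F}_t$ for all $t$; consequently the version spaces only shrink and each $\rho_k(\mathcal{F}_t)$ is non-increasing in $t$. I would also note that whenever the algorithm is at level $0$ it predicts only on $\hat{x}_t\notin\DIS(\mathcal{F}_{t-1})$, where all surviving hypotheses — in particular $f^*$ — agree; hence no misclassification ever occurs at level $0$, and all misclassifications happen at levels $k\geq 1$.

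For the misclassification bound I would argue level by level. If a misclassification occurs at level $k$ in round $t$, the prediction was $\hat{y}_t=\arg\max_j \rho_k(\mathcal{F}_{t-1}^{\hat{x}_t\rightarrow j})$ and the true label is the other value, so $\mathcal{F}_t$ equals the \emph{minimizing} branch and $\rho_k(\mathcal{F}_t)=\min_j\rho_k(\mathcal{F}_{t-1}^{\hat{x}_t\rightarrow j})<0.6\,\rho_k(\mathcal{F}_{t-1})$, precisely because the algorithm chose to predict rather than abstain in Step \ref{alg:step_abstain}. Since $\rho_k$ is otherwise non-increasing, is at most $1$ when level $k$ is first entered, and the algorithm leaves level $k$ permanently once $\rho_k(\mathcal{F}_t)\leq\alpha_k=T^{-k}$, solving $0.6^m> T^{-k}$ shows at most $O(k\log T)$ misclassifications are possible at level $k$; summing $k=1,\dots,d$ gives the $d^2\log T$ bound. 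This part is in fact deterministic — the expectation in the statement is only for uniformity with the abstention bound.

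For the abstention bound the key combinatorial fact is that a $k$-set $S$ together with the point $\hat{x}_t$ is fully shattered by $\mathcal{F}_{t-1}$ (all $2^{k+1}$ labelings realizable) if and only if $S$ is shattered by \emph{both} $\mathcal{F}_{t-1}^{\hat{x}_t\rightarrow 0}$ and $\mathcal{F}_{t-1}^{\hat{x}_t\rightarrow 1}$, while each of those sub-events alone already forces $S$ to be shattered by $\mathcal{F}_{t-1}$. By inclusion–exclusion, abstaining (both sub-probabilities at least $0.6\,\rho_k(\mathcal{F}_{t-1})$) implies $P_{S\sim\mathcal{D}^{\otimes k}}(S\cup\{\hat{x}_t\}\text{ fully shattered})\geq 0.2\,\rho_k(\mathcal{F}_{t-1})$. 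Conditioning on the history and on the event that round $t$ is an i.i.d. round — legitimate because the adversary decides whether to inject \emph{before} seeing the i.i.d. draw — Markov's inequality over $\hat{x}_t=x_t\sim\mathcal{D}$ bounds the abstention probability by $\rho_{k+1}(\mathcal{F}_{t-1})/(0.2\,\rho_k(\mathcal{F}_{t-1}))=5\,\rho_{k+1}(\mathcal{F}_{t-1})/\rho_k(\mathcal{F}_{t-1})$, where I use that the expected mass of $S$'s making the pair fully shattered is at most $\rho_{k+1}(\mathcal{F}_{t-1})$. In any round with $\rho_k(\mathcal{F}_{t-1})>\alpha_k$, using $\rho_{k+1}(\mathcal{F}_{t-1})\leq\alpha_{k+1}$ once level $k+1$ has been left (and $\rho_{d+1}\equiv 0$ at the top level), this is at most $5\alpha_{k+1}/\alpha_k=5/T$; since there are at most $T$ rounds, these contribute at most $5$ in expectation. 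There is at most one round per level with $\rho_k(\mathcal{F}_{t-1})\leq\alpha_k$ (the level descends immediately after), contributing at most $d$; and at level $0$ an i.i.d. point is abstained on with probability $\rho_1(\mathcal{F}_{t-1})\leq\alpha_1=1/T$, contributing at most $1$. Altogether $\mathbb{E}(\AbE)\leq d+6\leq 6d$.

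The main obstacle will be the abstention argument: pinning down the combinatorial identity between "shattered by both restrictions" and "$(k+1)$-shattered" including the degenerate cases (where $\hat{x}_t$ coincides with a coordinate of $S$, or where $\rho_k(\mathcal{F}_{t-1})=0$ so the abstention test is vacuously triggered), rigorously justifying that one may condition on "round $t$ is i.i.d." and then treat $x_t$ as an independent $\mathcal{D}$-draw, and handling the level-boundary rounds so that the clean ratio bound $5\alpha_{k+1}/\alpha_k$ genuinely applies. The misclassification bound is comparatively routine once the $0.6$-shrinkage observation is in hand; the only real care there is bookkeeping the constants so that $\sum_{k=1}^{d}O(k\log T)$ fits under $d^2\log T$ for the relevant ranges of $d$ and $T$.
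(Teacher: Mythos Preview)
Your proposal is correct and follows essentially the same approach as the paper's outline: the misclassification bound via the deterministic $0.6$-shrinkage of $\rho_k$ at each mistake, and the abstention bound via the combinatorial link between ``shattered by both restrictions'' and $(k{+}1)$-shattering combined with the threshold ratio $\alpha_{k+1}/\alpha_k$. The edge cases you flag (degenerate $\rho_k=0$, collisions, level-boundary rounds, and the constant bookkeeping so that $d+6\leq 6d$ and $\sum_k O(k\log T)\leq d^2\log T$) are exactly the details the paper defers to \cite{goel2023AdversarialResilienceSequential}.
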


Algorithm \ref{alg:gen_vc} iterates through different levels $k$, starting with level $k=d$ where $d$ is the VC dimension of $\mathcal{F}$. Here, $k$ serves as a measure of the complexity of the hypothesis space. As the algorithm progresses this complexity is reduced, by learning more from the labeled points. To be more precise, we measure the complexity of the hypothesis space by how likely it is for the function space $\mathcal{F}_{t-1}$ to shatter $k$ points, randomly drawn from the distribution $\mathcal{D}$. This approach is similar to disagreement-based learning, but instead of focusing on whether or not we shatter one point, we want to study the probability of shattering $k$ points.\\
We manage to show that this generalization is a real improvement over disagreement-based learning. Sublinear error rates are possible for classes with VC dimension $d<\infty$ when the distribution $\mathcal{D}$ of the samples $x_t\in \mathcal{X}$ is known and the learner robust to a clean-label adversary.\\
The probability of shattering $k$-points randomly drawn from $\mathcal{D}$ and using this as a measure for certainty in our prediction, will give rise to two very important qualities:
\begin{itemize}
    \item  Due to the algorithm's design, this concept will allow keeping track of the learner's mistakes. In this scenario, at level $k$, $\rho_k(\mathcal{F}_{t-1})$  decreases by a constant. Since we can lower bound this quantity, there is a limit to how many times this reduction can occur before transitioning to a lower level. This implies that we have learned significantly from the mistakes, resulting in a reduction of complexity of the current hypothesis space. 
    \item When deciding whether to predict or abstain from a given point $\hat{x}_t$, the decision is not based directly on previous points. This is important because this set of points does not constitute an i.i.d. sample from the distribution $\mathcal{D}$ due to the adversarial injections. However, we compare the incoming point to a randomly drawn set of $k$ i.i.d. points. This serves as a proxy to circumvent the issue posed by the non i.i.d. sample whilst still leveraging the knowledge of the true distribution $\mathcal{D}$. The idea of relying on a proxy to decide predictions will be a recurring theme throughout the entire paper.
\end{itemize}

\subsection{Structure based algorithms}
In the setting where the distribution of $\mathcal{X}$ is unknown, the previous approach based on probability of shattering fails to work, as this quantity can no longer be calculated explicitly. One obvious approach would be to find a good estimate for this, however the adversarial injected out-of-distribution samples would corrupt this estimate. \\
Thus we change gears and exploit structural properties of certain problem classes to derive how different predictions influence the disagreement region. We start with a structural property of problem classes of VC dimension 1 and extend this approach to axis-aligned rectangles in $d$ dimensions, a problem class of VC dimension $2d$.

\subsubsection{VC dimension 1}

We start with providing a general algorithm for which we can prove bounds on expected misclassification and abstention error for problems of VC dimension 1. The proof we use is specifically based on a structural property of classes with VC dimension 1 and thus cannot be easily extended to arbitrary VC dimension, even though the algorithm also seems sensible in these cases. However, problems of VC dimension 1 serve as an important base cases or examples (like threshold problems in one dimension) and an understanding of these simple examples is essential.\\
The algorithm relies heavily on a concept we call \textit{leave-one-out disagreement estimate}, which will serve as a quantity to keep track of the number of points from our current sample $x\in S$ which would fall in the disagreement region of the hypothesis class if we restricted it to a set which disagrees with a chosen reference function $f$. The choice of such a reference function $f$ might seem arbitrary at first but its importance is revealed later, as the Structural Theorem \ref{theo:structure} is based on this quantity. Note that this comparison function $f$ is fixed and independent of the true labelling function $f^*$.
\begin{definition}
    Let $\mathcal{F}$ be a hypothesis class, $f\in \mathcal{F}$ be a fixed reference function and $S$ a set of realizable data points. We define the leave-one-out disagreement estimate $\Gamma$ as
    $$\Gamma(S,\mathcal{F},f)=\{x| \exists y: (x,y)\in S \text{ and } x\in \mathcal{S}_1(\mathcal{F}|_{S_f\setminus (x,y)})\}$$
    where $S_f=\{(x,y)\in S: f(x)\neq y\}$. Furthermore denote $\gamma(S,\mathcal{F},f)=|\Gamma(S,\mathcal{F},f)|$. We also define $\gamma_t:=\gamma(S_t,\mathcal{F},f)=|\Gamma(S_t,\mathcal{F},f)|$. We omit the dependence on $f$ it is clear from the context. 
\end{definition}

In order to understand better what this definition entails, we provide a simple example of a problem class with VC dimension 1; thresholds on $[0,1]$.
\begin{example}
    Let the hypothesis class consist of threshold classifiers on the interval $[0,1]=\mathcal{X}$ and $\mathcal{F}=\{\mathbbm{1}_{\{\cdot\leq a\}}:a\in [0,1]\}$ and we can choose $f$ to be the all zeros function $f(x)=0$ for all $x\in [0,1]$ (we will see later why this choice is possible). We are given a data set $S=\{(x_1,1),\dots,(x_d,1),(x_{d+1},0),\dots, (x_n,0) \}$ of $n$ labelled points which we assume to be ordered $0\leq x_1\leq \dots \leq x_d\leq x_{d+1}\leq \dots \leq x_n\leq 1$. Then the set $S_f=\{(x,y)\in S: f(x)\neq y\}=\{(x,y)\in S: 0\neq y\}=\{(x_1,1),\dots,(x_d,1)\}$ consists of the $d$ positive points. We can calculate the corresponding leave-one-out disagreement regions
    \begin{itemize}
        \item $\mathcal{S}_1(\mathcal{F}|_{S_f\setminus(x,y)})=(x_d,1]$ for all $(x,y)\in S\setminus\{(x_d,1)\}$ and 
        \item $\mathcal{S}_1(\mathcal{F}|_{S_f\setminus(x_d,1)})=(x_{d-1},1]$
    \end{itemize}
    as only if we leave out the largest positive point, the disagreement region changes. Therefore $\Gamma(S,\mathcal{F},f)=\{x_d, \dots, x_n\}$ and $\gamma(S,\mathcal{F},f)=n-d$ in this case. 
\end{example}

Now we can proceed to the construction of a learning algorithm. Similarly to the known distribution case, we will propose a label for an incoming point $\hat{x}_t$ using the reference function $f(\hat{x}_t)\in \{0,1\}$. Then we calculate the quantity $a_0=\Gamma(S_{t-1},\mathcal{F}^{\hat{x}_t\rightarrow f(\hat{x}_t)},f)$. This will serve as a prediction rule in the sense that if $a_0$ exceeds a certain threshold we choose to predict, else we opt to abstain. If we make a mistake, the size of $\Gamma(S_t,\mathcal{F},f)$ will go down. 

\begin{algorithm}
    \caption{Structure based algorithm in unknown distributions}\label{alg:vcdim1}
    \begin{algorithmic}[1]
        \State Let $f\in \mathcal{F}$ be a reference function (independent of $f^*$) and $\alpha$ the abstention threshold
        \State Set $\mathcal{F}_0=\mathcal{F}$ and $S_0=\emptyset$
        \For{$t=1,\dots, T$}
        \State Receive $\hat{x}_t$
        \State Let $a_0=|\Gamma(S_{t-1},\mathcal{F}^{\hat{x}_t\rightarrow f(\hat{x}_t)},f)|$ 
        \If{$\hat{x}_t\notin \mathcal{S}_1(\mathcal{F}_{t-1})$} 
        \State Predict $\hat{y}_t=\Tilde{f}(\hat{x}_t)$ according to some $\Tilde{f}\in \mathcal{F}_{t-1}$ 
        \ElsIf{$a_0\geq \alpha$}
        \State Predict $\hat{y}_t=f(\hat{x}_t)$
        \Else
        \State Predict $\hat{y}_t=\perp$
        \EndIf        
        \State Receive true $y_t$ and update $S_t\gets S_{t-1}\cup (\hat{x}_t,y)$ and $\mathcal{F}_t\gets\mathcal{F}_{t-1}^{\hat{x}_t\rightarrow y_t}$
        \EndFor
    \end{algorithmic}
\end{algorithm}

We will be able to prove performance guarantees for VC dimension 1 because of the Structural Theorem stated in \cite{ben-david2015NotesClassesVapnikChervonenkis}: he relates classes of VC dimension 1 to a certain type of tree ordering which induces linearity.  
This characterization was also used in \cite{blum2021RobustLearningCleanlabel} to show that a problem with VC dimension 1 under clean-label attacks is robustly learnable. In general VC dimensions one cannot use this explicit structure and thus one would need to argue differently to provide performance guarantees for Algorithm \ref{alg:vcdim1}.\\

In \cite{ben-david2015NotesClassesVapnikChervonenkis} Ben-David showed that if the functions of a hypothesis class $\mathcal{F}$ can be linearly ordered, then the VC dimension of $\mathcal{F}$ is 1. However the condition of a linear order can be relaxed to a partial one, which induces a tree, which gives equivalence.
\begin{definition}
    Given a domain $\mathcal{X}$ and a partial order $\prec$ on $\mathcal{X}$, we say that a set $I\subseteq \mathcal{X}$ is an initial segment if for all $x\in I$ and $y\in \mathcal{X}$ with $y\prec x$ we have $y\in I$. \\
    The partial order $\prec$ defines a tree ordering if for every $x\in \mathcal{X}$ the initial segment $I$ is linearly ordered, i.e. for all $x,y\in I$ then $x\prec y$ or $y\prec x$.
\end{definition}

\begin{theorem}[\cite{ben-david2015NotesClassesVapnikChervonenkis}]\label{theo:structure}
    Let $\mathcal{F}$ be a function class over some domain $\mathcal{X}$. Then the following are equivalent:
    \begin{enumerate}
        \item $\VCdim(\mathcal{F})\leq 1$
        \item There exists some tree ordering over $\mathcal{X}$ and a hypothesis $f\in \mathcal{F}$ such that every element of the new function class $$\mathcal{F}_f=\{\mathbbm{1}_{\{h(\cdot)\neq f(\cdot)\}}:h\in \mathcal{F}\}$$ is an initial segment with respect to this ordering relation. 
    \end{enumerate}
\end{theorem}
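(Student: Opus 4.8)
The plan is to prove the two implications of the equivalence separately, and to open both directions with the same reduction: pass from $\mathcal{F}$ to the XOR-ed class $\mathcal{F}_f=\{\mathbbm{1}_{\{h(\cdot)\neq f(\cdot)\}}:h\in\mathcal{F}\}$ and identify each binary function with the subset of $\mathcal{X}$ on which it equals $1$. The point is that for a fixed $f$ the map $h\mapsto \mathbbm{1}_{\{h\neq f\}}$ is a bijection $\mathcal{F}\to\mathcal{F}_f$ which preserves shattering: on any finite $S\subseteq\mathcal{X}$, flipping the labels in the coordinates where $f$ equals $1$ is a bijection of $\{0,1\}^{|S|}$, so $\mathcal{F}$ realizes every labeling of $S$ iff $\mathcal{F}_f$ does. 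Consequently $\VCdim(\mathcal{F})=\VCdim(\mathcal{F}_f)$, and taking $h=f$ shows $\emptyset\in\mathcal{F}_f$. It therefore suffices to prove: $\VCdim(\mathcal{F}_f)\leq 1$ if and only if there is a tree ordering of $\mathcal{X}$ under which every member of $\mathcal{F}_f$ is an initial segment.

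For the implication ``structure $\Rightarrow$ small VC dimension'' I would argue by contradiction. Suppose a two-point set $\{x,y\}$ is shattered; then $\mathcal{F}_f$ contains initial segments realizing all four labelings, in particular a segment $I$ with $x,y\in I$ and a segment $J$ with $y\in J$, $x\notin J$. Since an initial segment of a tree ordering is linearly ordered, $x$ and $y$ are comparable, say $x\prec y$; but $J$ is down-closed and contains $y$, hence it contains $x$ — a contradiction. This yields $\VCdim(\mathcal{F}_f)\leq 1$.

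For the converse — the substantive direction — I would construct the ordering directly from $\mathcal{F}_f$. Declare $x\preceq y$ whenever every $G\in\mathcal{F}_f$ containing $y$ also contains $x$. This relation is automatically reflexive and transitive; after discarding the ``inert'' points, on which every member of $\mathcal{F}_f$ takes the same value, and identifying points that belong to exactly the same members of $\mathcal{F}_f$, it becomes a partial order. By its very definition every $G\in\mathcal{F}_f$ is then an initial segment. The remaining task is to verify the tree condition, i.e. that every initial segment arising this way is linearly ordered, and this is precisely where $\VCdim(\mathcal{F}_f)\leq 1$ enters: if $a,b$ were incomparable elements lying in a common $G\in\mathcal{F}_f$, incomparability produces $A\in\mathcal{F}_f$ with $a\in A$, $b\notin A$ and $B\in\mathcal{F}_f$ with $b\in B$, $a\notin B$, and together with $\emptyset\in\mathcal{F}_f$ and $G$ itself these four sets shatter $\{a,b\}$, a contradiction.

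I expect the main obstacle to lie in the bookkeeping of this last direction rather than in any single clever step: one has to be careful that the constructed relation is genuinely antisymmetric (hence the quotient by ``$\mathcal{F}_f$-equivalence''), and, more importantly, that the inert points — which under the literal definition of $\preceq$ would sit below everything and thus wreck the tree structure — are removed and then re-attached harmlessly. Once the ordering is fixed, everything else (each member of $\mathcal{F}_f$ is a down-closed chain; the principal down-sets are chains) is a repeated application of the ``two incomparable points force a shattered pair'' observation used above, with no further appeal to the hypothesis needed.
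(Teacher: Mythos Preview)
The paper does not prove this theorem; it is quoted from \cite{ben-david2015NotesClassesVapnikChervonenkis}, and the only additional content the paper supplies is the explicit formula
\[
\leq^{\mathcal{F}}_f \;=\; \{(x,y):\ \forall h\in\mathcal{F}\quad h(y)\neq f(y)\ \Rightarrow\ h(x)\neq f(x)\},
\]
deferring the verification that this is a tree ordering to Lemma~5 of that reference. Your relation ``$x\preceq y$ iff every $G\in\mathcal{F}_f$ containing $y$ also contains $x$'' is exactly this formula rewritten in terms of $\mathcal{F}_f$, and your two-direction argument (initial segments in a tree cannot shatter a pair; conversely, two incomparable points with a common cover plus $\emptyset$ would shatter a pair) is essentially Ben-David's proof. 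So your proposal matches the source the paper cites.

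One small slip worth fixing: with your definition, inert points are \emph{maximal}, not minimal. If $x$ lies in no $G\in\mathcal{F}_f$, then $y\preceq x$ holds vacuously for every $y$, so $x$ sits above everything; the down-set $\{y:y\preceq x\}$ is then all of $\mathcal{X}$, which indeed need not be a chain and breaks the tree property. Your diagnosis that inert points must be excised and re-attached harmlessly (e.g.\ as isolated leaves, which is fine since no $G$ contains them) is correct --- only the word ``below'' is reversed.
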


Note the existence of the reference function $f\in \mathcal{F}$ which is chosen specifically such that the hypothesises can be transformed to initial segments is the justification for our approach. Furthermore the tree ordering in Theorem \ref{theo:structure} on $\mathcal{X}$ can be explicitly constructed by the relation 
$$\leq^\mathcal{F}_f= \{(x,y):\forall h \in \mathcal{F}\quad h(y)\neq f(y)\rightarrow h(x)\neq f(x)\}$$
which by Lemma 5 in \cite{ben-david2015NotesClassesVapnikChervonenkis} indeed defines a tree ordering.\\
\begin{example}\label{ex:vc1tree}
We give an example of a binary function class which is partially ordered.\\
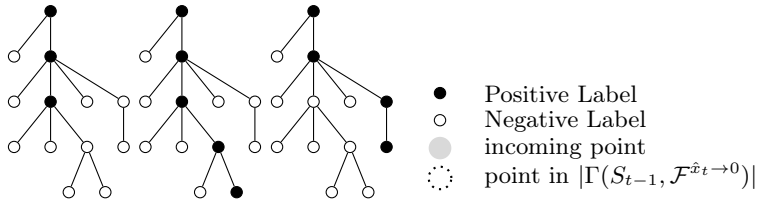
\begin{figure}[H]
    \centering
\begin{tikzpicture}[x=\XS,y=\XS*0.4,
        every node/.style={draw=black, fill=white ,circle,inner sep=1.5pt},
        rotate=270, ]
\node[fill=black] (1) at (0,0) {} ;
\node[fill=black] (2) at (1,0) {}  ;
\node (3) at (1,-2) {} ;
\node (4) at (2,4) {} ;
\node (5) at (2,2) {} ;
\node[fill=black] (6) at (2,0) {} ;
\node (7) at (2,-2) {} ;
\node (8) at (3,4) {} ;
\node (9) at (3,2) {} ;
\node (10) at (3,0) {} ;
\node (11) at (3,-2) {} ;
\node (12) at (4,3) {} ;
\node (13) at (4,1) {};
\draw (1) -- (2);
\draw (1) -- (3);
\draw (2) -- (4);
\draw (2) -- (5);
\draw (2) -- (6);
\draw (2) -- (7);
\draw (4) -- (8);
\draw (6) -- (9);
\draw (6) -- (10);
\draw (6) -- (11);
\draw (9) -- (12);
\draw (9) -- (13);
\end{tikzpicture}
\begin{tikzpicture}[x=\XS,y=\XS*0.4,
        every node/.style={draw=black, fill=white ,circle,inner sep=1.5pt},
        rotate=270, ]
\node[fill=black] (1) at (0,0) {} ;
\node[fill=black] (2) at (1,0) {}  ;
\node (3) at (1,-2) {} ;
\node (4) at (2,4) {} ;
\node (5) at (2,2) {} ;
\node[fill=black] (6) at (2,0) {} ;
\node (7) at (2,-2) {} ;
\node (8) at (3,4) {} ;
\node[fill=black] (9) at (3,2) {} ;
\node (10) at (3,0) {} ;
\node (11) at (3,-2) {} ;
\node[fill=black] (12) at (4,3) {} ;
\node (13) at (4,1) {};
\draw (1) -- (2);
\draw (1) -- (3);
\draw (2) -- (4);
\draw (2) -- (5);
\draw (2) -- (6);
\draw (2) -- (7);
\draw (4) -- (8);
\draw (6) -- (9);
\draw (6) -- (10);
\draw (6) -- (11);
\draw (9) -- (12);
\draw (9) -- (13);
\end{tikzpicture}
\begin{tikzpicture}[x=\XS,y=\XS*0.4,
        every node/.style={draw=black, fill=white ,circle,inner sep=1.5pt},
        rotate=270, ]
\node[fill=black] (1) at (0,0) {} ;
\node[fill=black] (2) at (1,0) {}  ;
\node (3) at (1,-2) {} ;
\node[fill=black] (4) at (2,4) {} ;
\node (5) at (2,2) {} ;
\node (6) at (2,0) {} ;
\node (7) at (2,-2) {} ;
\node[fill=black] (8) at (3,4) {} ;
\node (9) at (3,2) {} ;
\node (10) at (3,0) {} ;
\node (11) at (3,-2) {} ;
\node (12) at (4,3) {} ;
\node (13) at (4,1) {};
\draw (1) -- (2);
\draw (1) -- (3);
\draw (2) -- (4);
\draw (2) -- (5);
\draw (2) -- (6);
\draw (2) -- (7);
\draw (4) -- (8);
\draw (6) -- (9);
\draw (6) -- (10);
\draw (6) -- (11);
\draw (9) -- (12);
\draw (9) -- (13);
\end{tikzpicture}
\hspace{0.2cm}
\begin{tikzpicture}[x=\XS,y=\XS*1.2]
\node[anchor=north, fill=black, draw=black ,circle, inner sep=1.5pt] at (0.5,0.875) {};
\node[anchor=west] at (1.25,0.75) {\footnotesize Positive Label};
\node[anchor=north, fill=white, draw=black ,circle, inner sep=1.5pt] at (0.5,0.375) {};
\node[anchor=west] at (1.25,0.25) {\footnotesize Negative Label};
\node[anchor= north, draw=none, circle, thick , dotted,fill=gray!30, inner sep=3pt] (9) at (0.5,-0.025) {};
\node[anchor=west] at (1.25,-0.25) {\footnotesize incoming point};
\node[anchor= north, draw=black, circle, thick , dotted, fill=white, inner sep=3pt] (9) at (0.5,-0.575) {};
\node[anchor=west] at (1.25,-0.75) {\footnotesize point in $|\Gamma(S_{t-1},\mathcal{F}^{\hat{x}_t\rightarrow 0})|$};
\end{tikzpicture}
    \caption{Examples of different function classes on $\mathcal{X}$}
    \label{fig:function_tree}
\end{figure}

Assume we are given a set $\mathcal{X}$ consisting of 13 points which are partially ordered according to the tree given in Figure \ref{fig:function_tree}. We are considering the function class $$\mathcal{F}=\{f_p:\mathcal{X}\rightarrow\{0,1\}|\quad  p \text{ path on } \mathcal{X} \text{ and } f(x)=1 \text{ for } x\in p\}$$ where the number of functions in $\mathcal{F}$ is the number of all possible paths on $(\mathcal{X},\prec)$ containing the root. For three examples of such functions see Figure \ref{fig:function_tree}, where the black nodes correspond to a positive labels and the white nodes to negative labels. By the structure Theorem \ref{theo:structure} these functions are all initial segments with respect to a reference function $f(x)=0$ for all $x\in \mathcal{X}$ and thus $\VCdim(\mathcal{F})=1$. \\
    
\begin{minipage}{0.76\textwidth}
We let Algorithm \ref{alg:vcdim1} process two incoming points in order to show a correct classification from the disagreement region and a misclassification. Let the true function $f^*$ be according to the first example. The quantity $\gamma_t$ helps us to keep track of the points in the estimated disagreement region at time $t$.\\
What happens is depicted in the 5 trees on the right. We sequentially receive new points $\hat{x}_t$ and run through the decisive steps of the algorithm in order to make predictions $\hat{y}_t\in \{0,1,\perp\}$. As $T=13$, we have $\alpha=\sqrt{T/\log(T)}<3$.
\begin{enumerate}
        \item Starting at $t=11$ we have the set $S_{10}$ of 10 labelled points which we have observed so far. Note that $\gamma_{10}=|\Gamma(S_{10},\mathcal{F})|=8$. 
        \item The new point $\hat{x}_{11}$ is placed accordingly within the tree. Note that the point falls in the disagreement region $\hat{x}_{11}\in \mathcal{S}_1(\mathcal{F}_{10})$. Thus we need to calculate $a_0=|\Gamma(S_{10},\mathcal{F}^{\hat{x}_{11}\rightarrow 0},0)|=6$. As $a_0>\alpha$ we predict $\hat{y}_{11}=0$.
        \item The prediction is correct, which adds a new negative point to the tree $S_{11}=S_{10}\cup\{(\hat{x}_{11},0)\}$ and $\gamma_{11}=|\Gamma(S_{10},\mathcal{F})|=9$.
        \item We receive a new point $\hat{x}_{12}$, falling in the disagreement region $\hat{x}_{12}\in \mathcal{S}_1(\mathcal{F}_{11})$. Again we calculate $a_0=|\Gamma(S_{11},\mathcal{F}^{\hat{x}_{12}\rightarrow 0},0)|=4$ and as $a_0>\alpha$ we predict $\hat{y}_{12}=0$.
        \item This prediction was false and the true label is $y_{12}=1$ which we use to update $S_{11}$ to get $S_{12}$. Note that now $\gamma_{12}=6$. So the size of our estimated leave-one-out disagreement region reduced due to a misclassification. 
    \end{enumerate}
\end{minipage}
\begin{minipage}{0.2\textwidth}
    
\centering

\begin{tikzpicture}[x=\XS,y=\XS*0.4,
        every node/.style={draw=black, fill=white ,circle,inner sep=1.5pt},
        rotate= 270]
\node[fill=white, draw=none] (lab1)   at (0.5,3.5) {\footnotesize{1.}} ;
\node[fill=black] (1) at (0,0) {} ;
\node[fill=black] (2) at (1,0) {}  ;
\node (3) at (1,-2) {} ;
\node (4) at (2,4) {} ;
\node (7) at (2,-2) {} ;
\node (8) at (3,4) {} ;
\node (10) at (3,0) {} ;
\node (11) at (3,-2) {} ;
\node (12) at (4,3) {} ;
\node (13) at (4,1) {};
\draw (1) -- (2);
\draw (1) -- (3);
\draw (2) -- (4);
\draw (2) -- (6);
\draw (2) -- (7);
\draw (4) -- (8);
\draw (2) -- (12);
\draw (2) -- (13);
\draw (2) -- (10);
\draw (2) -- (11);
\end{tikzpicture}

\begin{tikzpicture}[x=\XS,y=\XS*0.4,
        every node/.style={draw=black, fill=white ,circle,inner sep=1.5pt},
        rotate=270, ]
\node[fill=white, draw=none] (lab2)   at (0.5,3.5) {\footnotesize{2.}} ;
\node[fill=black] (1) at (0,0) {} ;
\node[fill=black] (2) at (1,0) {}  ;
\node (3) at (1,-2) {} ;
\node (4) at (2,4) {} ;
\node (7) at (2,-2) {} ;
\node (8) at (3,4) {} ;
\node[draw=none,fill=gray!30, inner sep=3pt] (9) at (3,2) {} ;
\node (10) at (3,0) {} ;
\node (11) at (3,-2) {} ;
\node (12) at (4,3) {} ;
\node (13) at (4,1) {};
\draw (1) -- (2);
\draw (1) -- (3);
\draw (2) -- (4);
\draw (2) -- (6);
\draw (2) -- (7);
\draw (4) -- (8);
\draw (9) -- (12);
\draw (9) -- (13);
\draw (2) -- (9);
\draw (2) -- (10);
\draw (2) -- (11);

\draw[thick, dotted] (3,0) circle (3.5pt);
\draw[thick,dotted](1,0) circle (3.5pt);
\draw[thick,dotted](3,-2) circle (3.5pt);
\draw[thick,dotted] (3,4) circle (3.5pt);
\draw[thick,dotted] (2,4) circle (3.5pt);
\draw[thick,dotted] (2,-2) circle (3.5pt);

\end{tikzpicture}

\begin{tikzpicture}[x=\XS,y=\XS*0.4,
        every node/.style={draw=black, fill=white ,circle,inner sep=1.5pt},
        rotate=270, ]
\node[fill=white, draw=none] (lab3)   at (0.5,3.5) {\footnotesize{3.}} ;
\node[fill=black] (1) at (0,0) {} ;
\node[fill=black] (2) at (1,0) {}  ;
\node (3) at (1,-2) {} ;
\node (4) at (2,4) {} ;
\node (7) at (2,-2) {} ;
\node (8) at (3,4) {} ;
\node (9) at (3,2) {} ;
\node (10) at (3,0) {} ;
\node (11) at (3,-2) {} ;
\node (12) at (4,3) {} ;
\node (13) at (4,1) {};
\draw (1) -- (2);
\draw (1) -- (3);
\draw (2) -- (4);
\draw (2) -- (6);
\draw (2) -- (7);
\draw (4) -- (8);
\draw (9) -- (12);
\draw (9) -- (13);
\draw (2) -- (9);
\draw (2) -- (10);
\draw (2) -- (11);
\end{tikzpicture}

\begin{tikzpicture}[x=\XS,y=\XS*0.4,
        every node/.style={draw=black, fill=white ,circle,inner sep=1.5pt},
        rotate=270, ]
\node[fill=white, draw=none] (lab4)   at (0.5,3.5) {\footnotesize{4.}} ;
\node[fill=black] (1) at (0,0) {} ;
\node[fill=black] (2) at (1,0) {}  ;
\node (3) at (1,-2) {} ;
\node (4) at (2,4) {} ;
\node[draw=none, fill=gray!30, inner sep=3pt] (6) at (2,0) {} ;
\node (7) at (2,-2) {} ;
\node (8) at (3,4) {} ;
\node (9) at (3,2) {} ;
\node (10) at (3,0) {} ;
\node (11) at (3,-2) {} ;
\node (12) at (4,3) {} ;
\node (13) at (4,1) {};
\draw (1) -- (2);
\draw (1) -- (3);
\draw (2) -- (4);
\draw (2) -- (6);
\draw (2) -- (7);
\draw (4) -- (8);
\draw (6) -- (9);
\draw (6) -- (10);
\draw (6) -- (11);
\draw (9) -- (12);
\draw (9) -- (13);
\draw[thick,dotted](1,0) circle (3.5pt);
\draw[thick,dotted] (3,4) circle (3.5pt);
\draw[thick,dotted] (2,4) circle (3.5pt);
\draw[thick,dotted] (2,-2) circle (3.5pt);
\end{tikzpicture}

\begin{tikzpicture}[x=\XS,y=\XS*0.4,
        every node/.style={draw=black, fill=white ,circle,inner sep=1.5pt},
        rotate=270, ]
\node[fill=white, draw=none] (lab5)  at (0.5,3.5) {\footnotesize{5.}} ;
\node[fill=black] (1) at (0,0) {} ;
\node[fill=black] (2) at (1,0) {}  ;
\node (3) at (1,-2) {} ;
\node (4) at (2,4) {} ;
\node[fill=black] (6) at (2,0) {} ;
\node (7) at (2,-2) {} ;
\node (8) at (3,4) {} ;
\node (9) at (3,2) {} ;
\node (10) at (3,0) {} ;
\node (11) at (3,-2) {} ;
\node (12) at (4,3) {} ;
\node (13) at (4,1) {};
\draw (1) -- (2);
\draw (1) -- (3);
\draw (2) -- (4);
\draw (2) -- (6);
\draw (2) -- (7);
\draw (4) -- (8);
\draw (6) -- (9);
\draw (6) -- (10);
\draw (6) -- (11);
\draw (9) -- (12);
\draw (9) -- (13);
\end{tikzpicture}


\end{minipage}

\end{example}

\begin{example}\label{ex:vc1thresh}
    To provide another example, we look at the threshold functions on  $\mathcal{X}=[0,1]$: $$\mathcal{F}=\{f:\mathcal{X}\rightarrow \{0,1\}| f(x)=\mathbbm{1}_{\{x\leq a\}}: a\in [0,1]\}.$$ 
    Every element $f\in \mathcal{F}$ is an initial segment w.r.t. the comparison function $f=0$ and thus $\VCdim(\mathcal{F})=1$ according to the Structure Theorem \ref{theo:structure}. Furthermore, as the set $(\mathcal{X},\leq)$ can be totally ordered, we know that for $x,y\in \mathcal{X}$, $x\leq y$ and $f(x)=0$ it follows that $f(y)=0$ for any $f\in \mathcal{F}$.\\
    In this case the Algorithm \ref{alg:vcdim1} is simply the disagreement-based learner from Section \ref{sec:results_real}. The learner will always abstain from predicting if the new point falls in the disagreement region: for $\hat{x}_t$, $1\leq t\leq T$ if $\hat{x}_t\in \mathcal{S}_1(\mathcal{F}_{t-1})$ then only the biggest positive point in $S_{t-1}$ $x^*_{S_{t-1}}$ can fall in $\Gamma(S_{t-1},\mathcal{F}^{\hat{x}_t\rightarrow 0})$ as all other points $y\in S_{t-1}$ with $\hat{x}_t < y$ will no longer be in the disagreement region when $\hat{x}_t\rightarrow 0$ and all points $z\in S_{t-1}$ with $z < x^*_{S_{t-1}}$ will never fall in the leave-one-out disagreement region as $f(x^*_{S_{t-1}})=1$.
    Therefore $a_0=|\Gamma(S_{t-1},\mathcal{F}^{\hat{x}_t\rightarrow 0},0)|\leq 1$ and if $T\geq 2$, we will always abstain as $a_0<\alpha$.
\end{example}

Now we are equipped to show the error guarantees for Algorithm \ref{alg:vcdim1} in VC dimension 1.
\begin{theorem}
    Let $\mathcal{F}$ be a hypothesis class over some domain $\mathcal{X}$ with $\VCdim(\mathcal{F})=1$. Then the learner as described in Algorithm \ref{alg:vcdim1} with time horizon $T$ achieves the following guarantees for a parameter $\alpha=\sqrt{T/\log(T)}$
    \begin{align*}
        \MCE &\leq \sqrt{T\log(T)}\\
        \mathbb{E}(\AbE)&\leq \sqrt{T\log(T)}
    \end{align*}
\end{theorem}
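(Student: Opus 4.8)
The two halves of the theorem are essentially independent. Both rest on the leave-one-out disagreement estimate $\gamma_t=\gamma(S_t,\mathcal F,f)$ used as a potential, together with the fact (Theorem~\ref{theo:structure}) that, for the reference function $f$, the sets $\{x:h(x)\neq f(x)\}$, $h\in\mathcal F$, are exactly the initial segments of the tree order $\leq^{\mathcal F}_f$. For the misclassification bound I would first pin down when an error is possible at all: in the branch $\hat x_t\notin\mathcal S_1(\mathcal F_{t-1})$ every surviving hypothesis — in particular the target $f^*$, which is never removed because we restrict only by true labels — agrees on $\hat x_t$, so no error occurs there; the only dangerous branch is $\hat x_t\in\mathcal S_1(\mathcal F_{t-1})$ with $a_0\ge\alpha$, where the learner predicts $\hat y_t=f(\hat x_t)$. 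If this prediction is wrong, i.e. $f^*(\hat x_t)=1-f(\hat x_t)$, the key lemma to establish — reading the update through the tree order — is that conditioning on $\hat x_t$ disagreeing with $f$ resolves enough of the tree that all $a_0=|\Gamma(S_{t-1},\mathcal F^{\hat x_t\to f(\hat x_t)},f)|$ witnesses drop out of the estimate, so $\gamma_t\le\gamma_{t-1}-a_0+1\le\gamma_{t-1}-\alpha+1$, the $+1$ accounting only for $\hat x_t$ itself possibly entering $\Gamma$; this is the behaviour illustrated in Example~\ref{ex:vc1tree}. Since exactly one point is appended to $S_t$ each round, the generic bound $\gamma_t\le\gamma_{t-1}+1$ always holds, and $\gamma_0=0$, $\gamma_T\ge0$. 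Summing per-round increments over $t=1,\dots,T$ gives $0\le\gamma_T\le T-\MCE\cdot\alpha$, hence $\MCE\le T/\alpha=\sqrt{T\log T}$ at $\alpha=\sqrt{T/\log T}$.

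For the abstention bound, recall we pay only on rounds where $\hat x_t=x_t$ was genuinely i.i.d., and such a round is an abstention precisely when $x_t\in\mathcal S_1(\mathcal F_{t-1})$ and $a_0<\alpha$. The plan is an exchangeability argument over the i.i.d.\ sub-stream, in the spirit of the proof of Theorem~\ref{theo:mis_abs_thresh} but calibrated through $\alpha$: since the adversary commits to injecting (or not) before seeing the i.i.d.\ draw, at an i.i.d.\ round with $i-1$ earlier i.i.d.\ points the fresh $x_t$ is independent of $S_{t-1}$ and $\mathcal F_{t-1}$, and because extra constraints only shrink the disagreement region, $x_t\in\mathcal S_1(\mathcal F_{t-1})$ implies $x_t\in\mathcal S_1(\mathcal F|_{S^{\mathrm{iid}}_{t-1}})$. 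One then wants to turn Theorem~\ref{theo:structure} into the quantitative statement that $a_0$ is at least the number of previously seen i.i.d.\ points occupying the relevant branch of the tree through $x_t$, so that $a_0<\alpha$ forces that branch to hold fewer than $\alpha$ such points; combining this per-branch cap with a leave-one-out bound on how often a fresh i.i.d.\ point can still land in an as-yet-unresolved branch, and optimizing the resulting estimate over $\alpha$, should yield $\mathbb E(\AbE)\le\sqrt{T\log T}$ at $\alpha=\sqrt{T/\log T}$.

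I expect the abstention bound to be the main obstacle. The misclassification half is a short potential argument once the tree picture is in hand, but the abstention half needs two non-routine ingredients glued together: a quantitative relation between $a_0$, $\gamma_{t-1}$ and the position of $x_t$ in the tree (a sharpening of the purely structural Theorem~\ref{theo:structure}), and an exchangeability argument run over a sub-stream whose indices are chosen adaptively by the adversary — precisely the point where, as announced in the introduction, the original analysis of \cite{goel2023AdversarialResilienceSequential} requires correction.
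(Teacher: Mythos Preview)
Your misclassification argument is essentially the paper's. The paper makes the potential drop explicit via two inequalities: first $|\Gamma(S_t,\mathcal F)|\le|\Gamma(S_{t-1},\mathcal F^{\hat x_t\to 1-f(\hat x_t)})|+1$, and second the VC-dimension-1 fact that $|\Gamma(S_{t-1},\mathcal F^{\hat x_t\to 0})|+|\Gamma(S_{t-1},\mathcal F^{\hat x_t\to 1})|\le|\Gamma(S_{t-1},\mathcal F)|$ (the two sets are disjoint, else two points shatter). Your ``all $a_0$ witnesses drop out'' is this second inequality read through the tree; the telescoping is then identical.

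For the abstention bound, your plan has a real gap. You want to argue that $a_0$ lower-bounds the number of \emph{i.i.d.} points on the branch through $x_t$, but $a_0=|\Gamma(S_{t-1},\mathcal F^{x_t\to f(x_t)},f)|$ is computed from \emph{all} of $S_{t-1}$, adversarial points included; in particular adversarial positive points (which sit on the true path, since labels are clean) can completely rearrange which previously seen points lie in the leave-one-out disagreement set. There is no clean monotonicity that lets you strip out the adversarial contribution and retain a lower bound in terms of the i.i.d.\ sub-stream alone.

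The paper sidesteps this by inverting the quantifiers. It defines a point $(x,y)\in S$ to be \emph{attackable} if there exists some set $A_x$ of clean-label insertions after which the algorithm would abstain on $x$; it then proves (Lemma~\ref{lem:attack_points}, using the tree order and a careful case split on the minimal-$\pos$ attackable point) that any realizable dataset contains at most $\alpha$ attackable points. Since abstaining on an i.i.d.\ point certifies it was attackable with respect to the current history, exchangeability among the $i$ i.i.d.\ points seen so far gives $P(\text{abstain on $i$-th i.i.d.\ point})\le\alpha/i$, and summing yields $\alpha\log T$. The attackable-point abstraction is exactly the missing ingredient: it absorbs the adversary into an existential quantifier and reduces the problem to a purely combinatorial count on a single dataset, which is where Theorem~\ref{theo:structure} actually bites.
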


Compared to the simple disagreement-based learner for one dimensional thresholds, we pay in the misclassification error to reduce the $\mathbb{E}(\AbE)$ by a factor of $\alpha$.\\
In order to prove these bounds, we introduce use $\gamma_t=\gamma(S_t,\mathcal{F})=|\Gamma(S_t,\mathcal{F})|$ to keep track of the number of points in the estimated disagreement region at time $t$. We use the fact that if we make a mistake $\gamma_t$ will decrease by at least $\alpha$. This can be observed in the example \ref{ex:vc1tree}.

\begin{lemma}\label{lem:mis_error}
    For any $t\in \{1,\dots, T\}$ we have
    $$\gamma_t\leq \gamma_{t-1}-\alpha\cdot \mathbbm{1}_{\{\text{Misclassification at time $t$}\}}+1$$
\end{lemma}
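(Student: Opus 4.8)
The plan is to distinguish two cases according to whether a misclassification occurs at time $t$, and to track how the set $\Gamma(S_t,\mathcal{F},f)$ changes when we append the new labelled point $(\hat{x}_t, y_t)$ to $S_{t-1}$. First I would handle the trivial case: if there is no misclassification at time $t$, the claim reduces to $\gamma_t \le \gamma_{t-1} + 1$, which should follow from a monotonicity-type observation — adding one labelled point to $S_{t-1}$ can only restrict the version space $\mathcal{F}|_{S_f}$ further (shrinking disagreement regions, which tends to remove points from $\Gamma$), and the only genuinely new candidate for membership in $\Gamma(S_t,\mathcal{F},f)$ is the freshly added point $\hat{x}_t$ itself; hence at most one new point can enter. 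I would want to be slightly careful here, since the leave-one-out construction means that removing a different point from $S_f$ is what determines membership, so I would argue that for any old point $x$, its leave-one-out disagreement region under $S_t$ is contained in (or otherwise controlled by) the one under $S_{t-1}$, so $x \in \Gamma(S_t,\mathcal{F},f)$ implies $x \in \Gamma(S_{t-1},\mathcal{F},f)$.

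The substantive case is when a misclassification occurs at time $t$. By the structure of Algorithm \ref{alg:vcdim1}, a misclassification means we predicted $\hat{y}_t = f(\hat{x}_t)$ (the reference-function label) — because when $\hat{x}_t \notin \mathcal{S}_1(\mathcal{F}_{t-1})$ we predict the unique consistent label and cannot err in the realizable case — but the true label satisfied $y_t = 1 - f(\hat{x}_t)$, i.e. $f(\hat{x}_t) \neq y_t$, so $(\hat{x}_t, y_t)$ enters $S_f$. Moreover the abstention test passed, meaning $a_0 = |\Gamma(S_{t-1}, \mathcal{F}^{\hat{x}_t \to f(\hat{x}_t)}, f)| \ge \alpha$. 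The key claim is then that all of these $\ge \alpha$ points recorded in $\Gamma(S_{t-1}, \mathcal{F}^{\hat{x}_t \to f(\hat{x}_t)}, f)$ drop out of $\Gamma(S_t, \mathcal{F}, f)$ once we learn $y_t = 1 - f(\hat{x}_t)$. Intuitively: $\hat{x}_t$ is now a \emph{confirmed} point of $S_f$ lying in the disagreement region, and by the tree-ordering / initial-segment structure of Theorem \ref{theo:structure}, adding this confirmed disagreement point "below" those $\ge \alpha$ points collapses each of their leave-one-out disagreement regions so that they no longer witness shattering. This is where I expect to lean on the VC-dimension-1 structure in an essential way, exactly as the surrounding text warns.

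Putting the two cases together: in the misclassification case we lose at least $\alpha$ old members of $\Gamma$ and gain at most one new member ($\hat{x}_t$ could itself land in $\Gamma(S_t,\mathcal{F},f)$), giving $\gamma_t \le \gamma_{t-1} - \alpha + 1$; in the non-misclassification case we gain at most one, giving $\gamma_t \le \gamma_{t-1} + 1$; and both are summarized by the stated inequality with the indicator. The main obstacle, I expect, is making the "$\ge \alpha$ points all leave $\Gamma$" step rigorous: I need to pin down precisely, using the initial-segment characterization and the definition of $\Gamma$ via leave-one-out disagreement on $\mathcal{F}|_{S_f \setminus (x,y)}$, why confirming $\hat{x}_t$ as a disagreeing point simultaneously kills the leave-one-out disagreement witness for every one of those $a_0$ points — rather than for just some of them. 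I would try to set this up by identifying, in the tree ordering $\leq^{\mathcal{F}}_f$, the position of $\hat{x}_t$ relative to the relevant recorded points and arguing that it becomes the new "maximal confirmed positive-type" element along the relevant branch, analogously to the threshold computation in Example \ref{ex:vc1thresh}.
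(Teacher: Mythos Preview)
Your case split and overall accounting are correct and match the paper's: no-mistake rounds add at most one point to $\Gamma$, and in a misclassification round the $a_0\ge\alpha$ points recorded by $\Gamma(S_{t-1},\mathcal{F}^{\hat{x}_t\to f(\hat{x}_t)},f)$ are lost while at most $\hat{x}_t$ is gained. Where you diverge is in how you justify the ``drop-out'' step. You plan to invoke the tree ordering of Theorem~\ref{theo:structure} and track the position of $\hat{x}_t$ along a branch. The paper avoids this entirely: it first notes that $\Gamma(S_t,\mathcal{F})\setminus\{\hat{x}_t\}\subseteq \Gamma(S_{t-1},\mathcal{F}^{\hat{x}_t\to 1-f(\hat{x}_t)})$ (since $(S_t)_f=(S_{t-1})_f\cup\{(\hat{x}_t,1-f(\hat{x}_t))\}$), and then observes that $\Gamma(S_{t-1},\mathcal{F}^{\hat{x}_t\to 0})$ and $\Gamma(S_{t-1},\mathcal{F}^{\hat{x}_t\to 1})$ are \emph{disjoint} subsets of $\Gamma(S_{t-1},\mathcal{F})$, because a common point $x$ would mean $\{x,\hat{x}_t\}$ is shattered by $\mathcal{F}|_{(S_{t-1})_f\setminus(x,y)}$, contradicting $\VCdim(\mathcal{F})=1$. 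Chaining these gives $\gamma_t\le |\Gamma(S_{t-1},\mathcal{F}^{\hat{x}_t\to 1-f(\hat{x}_t)})|+1\le \gamma_{t-1}-a_0+1$. So the paper needs only the \emph{definition} of VC dimension~1, not the structural theorem; the tree ordering is reserved for the harder Lemma~\ref{lem:attack_points}. Your route would also work (indeed your drop-out claim is equivalent to the disjointness statement), but it is more laborious than necessary; the paper's two-line shattering argument buys a cleaner, self-contained proof of this lemma.
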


\begin{proof}
    It is easy to see that in any round with no mistake (i.e. a correct prediction or abstention) we have $\gamma_t\leq \gamma_{t-1}+1$, as at most $\hat{x}_t$ might be added to get $\Gamma(S_t,\mathcal{F})$.\\
    The only way to make a wrong prediction in a round $t$ is if $a_0\geq \alpha$, predicting $\hat{y}_t=f(\hat{x}_t)$ but the true label would actually be the opposite $y_t=1-f(\hat{x}_t)$. In this case the following holds
    \begin{equation}\label{eq:misclas_eq1}
        |\Gamma(S_t,\mathcal{F})|\leq |\Gamma(S_{t-1},\mathcal{F}^{\hat{x}_t\rightarrow 1-f(\hat{x}_t)})|+1.
    \end{equation}
    This is true because we have $S_t=S_{t-1}\cup \{(\hat{x}_t,1-f(\hat{x}_t))\}$ and as $y_t=1-f(\hat{x}_t)$ we also have $(S_t)_f=(S_{t-1})_f\cup \{(\hat{x}_t,1-f(\hat{x}_t))\}$ and therefore 
    \begin{align*}
        |\Gamma(S_{t-1},\mathcal{F}^{\hat{x}_t\rightarrow 1-f(\hat{x}_t)})|&=|\{x:\exists y (x,y)\in S_{t-1} \text{ and } x\in \mathcal{S}_1(\mathcal{F}|_{(\hat{x}_t,1-f(\hat{x}_t))\cup (S_{t-1})_f\setminus(x,y)})\}|\\
        &=|\{x:\exists y (x,y)\in S_{t-1} \text{ and } x\in \mathcal{S}_1(\mathcal{F}|_{(S_t)_f\setminus(x,y)})\}|
    \end{align*}
    where compared to $|\Gamma(S_t,\mathcal{F})|$ only at most one point $\hat{x}_t$ might be added. \\
    Furthermore note that in general VC dimension 1 the following holds
    \begin{equation}\label{eq:misclas_eq2}
        |\Gamma(S_{t-1},\mathcal{F}^{\hat{x}_t\rightarrow1-f(\hat{x}_t)})|+|\Gamma(S_{t-1},\mathcal{F}^{\hat{x}_t\rightarrow f(\hat{x}_t)})|\leq |\Gamma(S_{t-1},\mathcal{F})|
    \end{equation}
    as in the first two summands one projects $\hat{x}_t$ to 0 and the other to 1 and the corresponding two sets of points in the estimated disagreement region are disjoint (if they were not disjoint, then we would have found two points that can be shattered thus contradicting VC dimension 1).\\
    Putting these two inequalities together, in case of a mistake, the following holds
    \begin{align*}
        \gamma_t=|\Gamma(S_t,\mathcal{F})|\overset{\eqref{eq:misclas_eq1}}{\leq} &|\Gamma(S_{t-1},\mathcal{F}^{\hat{x}_t\rightarrow 1-f(\hat{x}_t)})|+1\\
        \overset{\eqref{eq:misclas_eq2}}{\leq}&|\Gamma(S_{t-1},\mathcal{F})|- |\Gamma(S_{t-1},\mathcal{F}^{\hat{x}_t\rightarrow f(\hat{x}_t)})|+1\\
        \leq &|\Gamma(S_{t-1},\mathcal{F})|-\alpha +1=\gamma_{t-1}-\alpha +1
    \end{align*}
    where the last inequality is due to the condition of predicting $a_0\geq \alpha$. Thus giving $$\gamma_t\leq \gamma_{t-1}-\alpha\cdot \mathbbm{1}_{\{\text{Misclassification at time $t$}\}}+1.$$
\end{proof}

\begin{lemma}
    Algorithm \ref{alg:vcdim1} has $\MCE \leq \frac{T}{\alpha}$.
\end{lemma}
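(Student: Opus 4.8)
The statement follows by summing the one-step inequality of Lemma \ref{lem:mis_error} over the whole time horizon. First I would recall that, by definition of the misclassification error, $\MCE = \sum_{t=1}^T \mathbbm{1}_{\{\hat{y}_t = 1 - y_t\}} = \sum_{t=1}^T \mathbbm{1}_{\{\text{Misclassification at time }t\}}$, so it suffices to bound the number of rounds in which a misclassification occurs.

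The plan is then a telescoping argument. Applying Lemma \ref{lem:mis_error} for each $t \in \{1,\dots,T\}$ and summing, the consecutive $\gamma_{t-1}$ and $\gamma_t$ terms cancel and one is left with
$$\gamma_T \leq \gamma_0 - \alpha \sum_{t=1}^T \mathbbm{1}_{\{\text{Misclassification at time }t\}} + T = \gamma_0 - \alpha \cdot \MCE + T.$$
Next I would observe the two trivial boundary facts: $\gamma_0 = |\Gamma(S_0,\mathcal{F},f)| = 0$ since $S_0 = \emptyset$ contains no points to include in the leave-one-out disagreement estimate, and $\gamma_T = |\Gamma(S_T,\mathcal{F},f)| \geq 0$ because it is a cardinality. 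Plugging these in gives $0 \leq -\alpha \cdot \MCE + T$, which rearranges to $\MCE \leq T/\alpha$.

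There is essentially no hard step here: all the combinatorial work sits inside Lemma \ref{lem:mis_error}, and what remains is a bookkeeping summation together with the nonnegativity of $\gamma_t$ and the fact that $\gamma_0 = 0$. The only point one should be slightly careful about is that $\gamma_t$ never becomes negative, so that the bound $0 \le \gamma_T$ is legitimate; this is immediate since $\Gamma(S_t,\mathcal{F},f)$ is a finite set.
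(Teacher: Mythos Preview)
Your proposal is correct and matches the paper's proof essentially line for line: both rearrange and telescope the inequality of Lemma \ref{lem:mis_error}, then use $\gamma_0=0$ (since $S_0=\emptyset$) and $\gamma_T\ge 0$ to conclude $\MCE\le T/\alpha$.
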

\begin{proof}
    We have $\MCE=\sum_{t=1}^T\mathbbm{1}_{\text{Misclassification at time $t$}}$. Thus rearranging the result from Lemma \ref{lem:mis_error} and summing gives the telescopic sum 
    $$\MCE\leq\frac{1}{\alpha}\sum_{t=1}^T(\gamma_{t-1}-\gamma_t+1)=\frac{1}{\alpha}(T+\gamma_0-\gamma_T)\leq \frac{T}{\alpha}$$
    as $\gamma_0=0$ due to $S_0=\emptyset$ and $0\leq \gamma_T \leq T$.

\end{proof}

\begin{definition}
    Let $\mathcal{F}$ be a hypothesis class over the domain $\mathcal{X}$, $f\in \mathcal{F}$ a reference function and $S$ a realizable data set. We say a point $x$ with label $y$ $(x,y)\in S$ is attackable with respect to a data set $S$, if there exists a sequence of realizable adversarial examples $A_x$ such that when the history is $S\cup A_x\setminus\{(x,y)\}$, the algorithm would abstain on $x$. In terms of Algorithm \ref{alg:vcdim1} this is equivalent to $x\in \mathcal{S}_1(\mathcal{F}|_{S\cup A_x\setminus\{(x,y)\}})$ and $|\Gamma(S\cup A_x, \mathcal{F}^{x\rightarrow f(x)},f)|<\alpha$.
\end{definition}

\begin{lemma}\label{lem:attack_points}
    Let $\mathcal{F}$ be a hypothesis class over the domain $\mathcal{X}$ with $\VCdim(\mathcal{F})=1$ and $f\in \mathcal{F}$ the reference function such that Theorem \ref{theo:structure} holds. Let $S$ be a realizable data set.\\ Then the number of attackable points is at most $\alpha$.
\end{lemma}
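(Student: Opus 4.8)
The plan is to work entirely inside the tree picture supplied by Theorem~\ref{theo:structure}. Since $\VCdim(\mathcal{F})\le 1$, fix the tree ordering $\prec$ on $\mathcal{X}$ and the reference function $f$ so that every $h\in\mathcal{F}$, after the transformation $\mathbbm{1}_{\{h(\cdot)\neq f(\cdot)\}}$, is an initial segment (a downward-closed subset) of $(\mathcal{X},\prec)$; here $f$ itself corresponds to the empty segment. A labelled point $(x,y)\in S$ is then either \emph{transformed-positive} ($y\neq f(x)$, so every consistent segment must contain $x$) or \emph{transformed-negative} ($y=f(x)$, so no consistent segment may contain $x$). First I would re-express the three ingredients of the statement in these terms: (i) for a realizable set $S'$, the version space $\mathcal{F}|_{S'}$ is the family of initial segments that contain every transformed-positive of $S'$ and avoid every transformed-negative, and by downward-closure together with realizability, $x\in\mathcal{S}_1(\mathcal{F}|_{S'})$ holds exactly when $x$ has no transformed-positive descendant in $S'$ and no transformed-negative ancestor in $S'$; (ii) a short computation shows that, whenever the forbidden-point constraint is feasible, $\Gamma(S',\mathcal{F}^{x\to f(x)},f)$ equals the set of $x''\in S'$ such that $x''$ is not a strict ancestor of any transformed-positive of $S'$ and $x$ is not an ancestor-or-equal of $x''$; (iii) consequently all the defining clauses of ``$x$ attackable'' become purely order-theoretic statements about the positions of the points of $S\cup A_x$ in the tree.

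With this dictionary in hand, I would first extract two structural facts; write $X^\ast$ for the set of points attackable with respect to $S$. Because realizable adversarial insertions can only shrink the version space and can only introduce new strict-ancestor obstructions in the formula for $\Gamma$, attackability of $x$ already forces the ``intrinsic'' property that $x$ has no transformed-positive descendant in $S$ and no transformed-negative ancestor in $S$; call such a point \emph{free}. From freeness one gets that any $\prec$-chain of attackable points has at most two elements: along a chain $z_1\prec z_2\prec\cdots$, every $z_i$ having a strictly larger attackable point must be transformed-positive (otherwise it is a transformed-negative ancestor of a free point), and two comparable transformed-positive free points contradict freeness, so the chain consists of at most one transformed-positive point together with at most one point above it. Now let $x_0$ be a $\prec$-maximal attackable point with witnessing adversarial set $A_{x_0}$, so that $|\Gamma(S\cup A_{x_0}\setminus\{(x_0,y_0)\},\mathcal{F}^{x_0\to f(x_0)},f)|<\alpha$. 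Using maximality (so $x_0$ is an ancestor-or-equal of no other attackable point) and freeness of the other attackable points (so they are strict ancestors of no transformed-positive of $S$), each attackable point other than $x_0$ is forced into this $\Gamma$-set, unless it is killed by a transformed-positive point that $A_{x_0}$ places strictly below it; by the chain bound each such added point can kill at most two attackable points. Putting the pieces together yields a bound of the form $|X^\ast|\le |\Gamma(\cdot)|+(\text{killed})+\text{const}$, which I would then have to sharpen to $|X^\ast|\le\alpha$.

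The real obstacle is precisely this last interaction with the adversarial witness: in the definition of attackable, the adversary is allowed to insert transformed-positive points below other candidate attackable points exactly in order to drive $\Gamma$ below $\alpha$, and those insertions are what threaten the clean inclusion. I expect the decisive step to be a normalization of the witness: showing that any transformed-positive point of $A_{x_0}$ lying strictly below a point of $S$ may be relocated to a fresh $\prec$-leaf without destroying the attack on $x_0$ (it still forbids $x_0$ appropriately and cannot increase $|\Gamma|$), so that without loss of generality $A_{x_0}$ adds no transformed-positive strictly below any point of $S$; then the inclusion ``every attackable point $\neq x_0$ lies in $\Gamma(S\cup A_{x_0}\setminus\{(x_0,y_0)\},\mathcal{F}^{x_0\to f(x_0)},f)$'' is exact, $x_0$ itself is not in that set, and we obtain $|X^\ast|-1\le|\Gamma(\cdot)|<\alpha$ (the remaining slack from the strict inequality being absorbed in the usual way). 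The tools I anticipate needing for the relocation argument are again the tree/chain structure and the VC-dimension-$1$ disjointness identity~\eqref{eq:misclas_eq2}.
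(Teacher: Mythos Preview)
Your overall plan---reduce to the tree picture, fix one attackable point together with its adversarial witness, and show that every other attackable point already lies in the corresponding $\Gamma$-set---is exactly the paper's strategy. The decisive difference is \emph{which} attackable point you fix. The paper introduces the projection $\pos(u)$ onto the target path $p$ (the closest ancestor of $u$ on $p$) and picks $u$ with \emph{minimal} $\pos(u)$, whereas you pick a $\prec$-\emph{maximal} attackable point $x_0$. The minimal-$\pos$ choice is the whole trick: because $\VCdim(\mathcal{F})=1$, no hypothesis can contain two $\prec$-incomparable points, so a transformed-positive $w\in A_u$ strictly deeper on the path than $\pos(u)$ would force every consistent segment to exclude $u$, contradicting $u\in\mathcal{S}_1(\mathcal{F}|_{S\cup A_u\setminus\{(u,0)\}})$. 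Hence every transformed-positive of $S\cup A_u$ sits at path-depth $\preceq\pos(u)$, and since every other attackable $v$ has $\pos(u)\prec\pos(v)$, no positive in $A_u$ can block $v$ from the leave-one-out disagreement region. The inclusion $v\in\Gamma(S\cup A_u,\mathcal{F}^{u\to 0})$ then goes through directly, with no normalization step needed.

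Your maximal choice is precisely what creates the obstacle you describe, and the proposed fix does not work. Adversarial examples are clean-label: the adversary chooses only the location, and $f^*$ supplies the label. All transformed-positive points therefore lie on the target path $p$ below the threshold $x^*$; you cannot ``relocate'' one to a fresh off-path leaf and keep it transformed-positive. Nor can you simply delete the offending positive point from $A_{x_0}$: removing a positive constraint enlarges each leave-one-out version space and hence can enlarge $\Gamma$, so the inequality $|\Gamma|<\alpha$ need not survive. The clean resolution is to abandon the $\prec$-maximal choice and instead select the attackable point with minimal $\pos$, after which your inclusion argument becomes exact and the bound $|X^\ast|\le\alpha$ follows immediately.
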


\begin{proof}
    According to Theorem \ref{theo:structure}, the reference function $f\in \mathcal{F}$ ensures that every element of $\mathcal{F}_f=\{\mathbbm{1}_{\{h(\cdot)\neq f(\cdot)\}}:h\in \mathcal{F}\}$ is an initial segment of the tree over $\mathcal{X}$ with respect to the partial order $\leq^\mathcal{F}_f$. 
    Thus we can process all functions $h\in \mathcal{F}$ to be XOR-ed with this reference function (as $f$ is fixed) to form initial segments. Therefore, we can assume w.l.o.g. that $f=0$ is the all-zeros function and work directly with a hypothesis class $\mathcal{F}$ which already consists of initial segments. \\
    Now also the true hypothesis $f^*\in\mathcal{F}$ corresponds to a path $p\subset\mathcal{X}$ and a threshold $x^*\in p$ such that 
    $$f^*(x)=1 \Leftrightarrow x\in p \text{ and } x\leq^\mathcal{F}_f x^*$$
    which holds by definition of an initial segment of a tree ordering. \\
    By the definition of an attackable point, we only need to consider points $x$ in the disagreement region of $S$ $\{(x,y)\in S: x\in \mathcal{S}_1(\mathcal{F}|_{S\setminus(x,y)})\}$. Therefore we focus on 
    $\{(x,y)\in S: x^*_S\prec x \text{ with } x^*_S=\max_{(x,y)\in S: y=1} x\}$: 
    all points in the subtree of the maximal positive point in $S$ as the labels of all other incoming points could be inferred. \\
    Note that an adversary adding negative points in the subtree of the deepest one only increases $|\Gamma(S\cup A_x,\mathcal{F})|$ and cannot decrease $|\Gamma(S\cup A_x,\mathcal{F}^{x\rightarrow 0})|$ thus if we remove all negative points 
    \begin{equation}\label{eq:remove_neg}
    \mathcal{S}_1(\mathcal{F}|_{S\cup A_x})\subset\mathcal{S}_1(\mathcal{F}|_{(S\cup A_x)_f}).
    \end{equation}
    Next, we want to bound the number of attackable points. We define for any node $u\in S$ the corresponding closest ancestor on the path $p$ by $\pos(u)\in p$.\\

    Let $u\in S$ be an attackable point with minimal $\pos(u)$ such that $x^*_S \prec\pos(u)$, ensuring $u$ is in the subtree of the deepest positive point. Then we can make the two following claims about $u$:
    \begin{enumerate}
        \item if $u$ is attackable and positive then it can only be $x_S^*$ (due to the linear order of positive points, only the biggest might fall in the leave one out disagreement region of $S$) \label{enum:attack_pos}
        \item if $u$ is attackable and negative then the number of other attackable points is less than $\alpha$. \label{enum:attack_neg}
    \end{enumerate}
    In order to show claim \ref{enum:attack_neg}, we assume that there exists another attackable point $(v,y_v)\in S$ which has $\pos(u)\prec\pos(v)$ by assumption. Then $v$ is no descendant of $u$ as else $v\notin\mathcal{S}_1(\mathcal{F}|_{S\setminus(v,y_v)})$ which would be a contradiction to $v$ being attackable. Furthermore, we can show 
    \begin{equation}\label{eq:vinu}
        v\in \mathcal{S}_1(\mathcal{F}|_{(u,0)\cup(S\cup A_u)_f\setminus(v,y_v)})
    \end{equation} 
    and as $v\in S$, this implies $v\in \Gamma(S\cup A_u,\mathcal{F}^{u\rightarrow 0})$. As $u$ was attackable we can bound $|\Gamma(S\cup A_u,\mathcal{F}^{u\rightarrow 0})|<\alpha$, therefore bounding the number of attackable points. \\
    In order to show equation \eqref{eq:vinu}, first, we know that there exists a set of adversarial examples $A_v$ such that $v\in \mathcal{S}_1(\mathcal{F}|_{S\cup A_v\setminus(v,y_v)})$ by assumption of $v$ being attackable. Using relation \eqref{eq:remove_neg} we know $v\in \mathcal{S}_1(\mathcal{F}|_{(S\cup A_v)_f\setminus(v,y_v)})$. Moreover, the set of adversarial examples $A_u$ corresponding to attacking $u$ does not contain positive points between $\pos(u)$ and $\pos(v)$ as if this was the case this would remove $u$ from the disagreement region $\mathcal{S}_1(\mathcal{F}|_{S\cup A_u\setminus (u,0)})$, making it no longer attackable. Therefore, we have the following implication $\mathcal{S}_1(\mathcal{F}|_{(S\cup A_v)_f\setminus(v,y_v)})\subset\mathcal{S}_1(\mathcal{F}|_{(S\cup A_u)_f\setminus(v,y_v)})$. Now lastly, as $v$ is no descendant of $u$, sending $u$ to zero has no impact on $v$ being in the disagreement region. Therefore giving equation \eqref{eq:vinu}.\\
    Putting together claims \ref{enum:attack_pos} and \ref{enum:attack_neg}, we see that the number of attackable examples is $\leq \alpha$.
\end{proof}

\begin{remark}
    The proof of the previous Lemma \ref{lem:attack_points} corrects a mistake from \cite{goel2023AdversarialResilienceSequential} when bounding the number of attackable points. 
\end{remark}

\begin{lemma}
    Algorithm \ref{alg:vcdim1} has $\mathbb{E}(\AbE)\leq\alpha \log(T)$.
\end{lemma}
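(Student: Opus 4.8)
The plan is to combine the deterministic bound of Lemma~\ref{lem:attack_points} on the number of attackable points with an exchangeability argument, in the same spirit as the harmonic-sum bound for thresholds in the proof of Theorem~\ref{theo:mis_abs_thresh}. First I would couple the run with an infinite i.i.d.\ sequence $Z_1, Z_2, \dots \sim \mathcal{D}$: the $j$-th time the adversary chooses \emph{not} to inject (a choice made before the corresponding fresh draw is revealed), the sample presented to the learner is $Z_j$, carrying its true label $f^*(Z_j)$. Write $N \le T$ for the number of i.i.d.\ samples occurring by time $T$ and $S'_i = \{(Z_1, f^*(Z_1)), \dots, (Z_i, f^*(Z_i))\}$; since $Z_1, \dots, Z_T$ are then genuinely i.i.d.\ from $\mathcal{D}$, the entries of each $S'_i$ are exchangeable.

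The crucial observation is that an abstention on an i.i.d.\ sample certifies that sample as attackable with respect to the clean prefix. Indeed, if the learner abstains on $Z_i$, then at that round the history is $S'_{i-1}$ together with some set $A$ of previously injected adversarial (hence realizable) examples; choosing $A_{Z_i} := A$ in the definition of attackability, the history $S'_i \cup A_{Z_i} \setminus \{(Z_i, f^*(Z_i))\} = S'_{i-1} \cup A$ is exactly the one that caused the abstention, so $Z_i$ is attackable with respect to $S'_i$. (If $Z_i$ had occurred earlier, its label would already pin the version space and the learner would be in the non-disagreement branch, so no abstention occurs; thus repetitions are harmless.) Using $N \le T$ and nonnegativity of the indicators,
\[
\AbE \;\le\; \sum_{i=1}^{N} \mathbbm{1}\{Z_i \text{ attackable w.r.t. } S'_i\} \;\le\; \sum_{i=1}^{T} \mathbbm{1}\{Z_i \text{ attackable w.r.t. } S'_i\}.
\]

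To finish, I would take expectations term by term. For fixed $i$, whether a point of $S'_i$ is attackable depends only on the unordered set $\{Z_1, \dots, Z_i\}$ (its labels being fixed by $f^*$), so $g_i := \#\{\text{attackable points of } S'_i\}$ is a symmetric function of $Z_1, \dots, Z_i$; by exchangeability $\Pr[Z_i \text{ attackable w.r.t. } S'_i] = \tfrac{1}{i}\,\mathbb{E}[g_i]$. Lemma~\ref{lem:attack_points} gives $g_i \le \alpha$ (and trivially $g_i \le i$), so this probability is at most $\min(\alpha, i)/i \le \alpha/i$, and summing gives $\mathbb{E}(\AbE) \le \sum_{i=1}^{T} \min(\alpha,i)/i \le \alpha \log(T)$ (the $\min$ with $i$ for small $i$ absorbs the additive constant of the harmonic sum). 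The step I expect to be the main obstacle is the middle one — arguing rigorously that interleaving arbitrary adversarial injections never produces abstentions on i.i.d.\ points beyond those charged to the worst-case attackability count, i.e.\ that the injection history present at the moment of an abstention is always an admissible witness $A_{Z_i}$. Everything else is routine bookkeeping together with Lemma~\ref{lem:attack_points} and exchangeability.
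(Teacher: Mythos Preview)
Your proposal is correct and follows essentially the same approach as the paper's proof: abstention on an i.i.d.\ point certifies attackability with respect to the i.i.d.\ prefix (with the injected history serving as the witness $A_{Z_i}$), Lemma~\ref{lem:attack_points} bounds the number of attackable points by $\alpha$, and exchangeability of the i.i.d.\ samples turns this into the harmonic bound $\sum_i \alpha/i \le \alpha\log(T)$. If anything, you are more explicit than the paper about the coupling and about why the injection history is an admissible witness; the ``main obstacle'' you flag is exactly the step the paper handles in one line, and your justification for it is sound.
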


\begin{proof}
    Let $n$ be the number of i.i.d. points in $S_T$. Note that if we abstain on an i.i.d. point, this point was attackable. By Lemma \ref{lem:attack_points} we know that the number of attackable examples for a dataset $S_t$ from time step $t\in \{1,\dots,T\}$, is at most $\alpha$. Assume that we have observed $i-1$ i.i.d. points so far, then the probability to abstain on the $i$-th point is bounded by $\alpha/i$ due to exchangeability. Thus
    \begin{align*}
        \mathbb{E}(\AbE)&=\mathbb{E}(\sum_{t=1}^T \mathbbm{1}_{\{c_t=0 \text{ and } \hat{y}_t=\perp\}})\\
        &=\mathbb{E}(\sum_{i=1}^n \mathbbm{1}_{\{\hat{y}_{t_i}=\perp\}})\\
        &=\sum_{i=1}^n P(\hat{y}_{t_i}=\perp)\leq \sum_{i=1}^n \frac{\alpha}{i}\leq \alpha\log(T)
    \end{align*}
    where $t_i$ is the timestep in which the incoming point is the $i$-th i.i.d. point. 
\end{proof}

\begin{remark}
    Algorithm \ref{alg:vcdim1} has undergone substantial revisions compared to its initial version presented in the earlier version of the paper \cite{goel2023AdversarialResilienceSequential}, which were based on this thesis. In order to underline the specific choices that have been made, we mention what has changed since the initial formulation of the algorithm.\\ 
    Originally, the algorithm was able to not only predict according to $f(\hat{x}_t)$ but made predictions within $\{0,1\}$ (without explicit coherence to the reference function $f$). The two quantities $a_0=|\Gamma(S_{t-1},\mathcal{F}^{\hat{x}_t\rightarrow 0},f)|$ and $a_1=|\Gamma(S_{t-1},\mathcal{F}^{\hat{x}_t\rightarrow 1},f)|$ were defined and then the learner chose to predict if $\max(a_0,a_1)\geq \alpha$ with the label $\{0,1\}$ which maximized the argument. \\
    This however led to contradictions in the proof of Lemma \ref{lem:mis_error} as the quantity $\gamma_t$ does not necessarily show the behaviour needed in order to bound the misclassification error in the same way as we do now. Specifically the quantity $\gamma_t$ does not decrease by at least $\alpha$ in case of a misclassification. An easy example for one dimensional thresholds showed this (see Example \ref{ex:vc1thresh}), as if there are $>\alpha$ negative points, then the algorithm will always predict 1 (as $a_0\leq 1$ and $a_1\geq \alpha$) while maintaining the same size of $\gamma_t$.\\
    In order to remedy this, the algorithm was simplified to only predict according to the reference function, if the threshold $\alpha$ is exceeded. Through this and the way $\Gamma$ is defined, we are able to prove the bound on the number of misclassifications rigorously by exploiting that the points which disagree with the reference function $S_f=\{(x,y):y\neq f(x)\}$ define the disagreement region in $\Gamma$. Furthermore the proof in Lemma \ref{lem:attack_points} needed to be adapted to the new algorithm. It is based on the same arguments (bounding the number of attackable points), but one has to be more careful when establishing which points are included in $\Gamma$ and which are not. With respect to this, we want to stress the importance of establishing an intuition what the leave-one-out disagreement estimate entails through easy examples as given in Example \ref{ex:vc1tree} and \ref{ex:vc1thresh}, as it is easy to confuse the disagreement region $\mathcal{S}_1(\mathcal{F}_{t-1})$ with $\Gamma(S_{t-1},\mathcal{F},f)$ and to think of which points are excluded when looking at $\Gamma(S_{t-1},\mathcal{F}^{\hat{x}_t\rightarrow f(\hat{x}_t)},f)$.\\

\end{remark}

\newpage

\subsubsection{Axis-aligned rectangles in $p$ dimensions}

The class of axis-aligned rectangles in $\mathcal{X}=\mathbb{R}^p$ is defined as a function of the following form, parametrized by $a_1,b_1,\dots,a_p,b_p\in \mathbb{R}$ with $a_i\leq b_i$ for all $i\in \{1,\dots,p\}$:
\begin{align*}
f_{(a_1,b_1,\dots,a_p,b_p)}\colon \quad \mathbb{R}^p &\longrightarrow \{0,1\}\\
    x &\longmapsto
    \begin{cases}
        1 & \text{if } a_i\leq (x)_i \leq b_i \quad \forall i \in \{1, \dots , p\}\\
        0 & \text{else }
    \end{cases}
\end{align*}
Note that the function class of axis-aligned rectangles in $\mathbb{R}^p$ has VC dimension $2p$. 

The following Algorithm \ref{alg:axis-aligned} exploits the structure of axis-aligned rectangles in $p$ dimensions to facilitate learning. For a visual description of how this algorithm decides to predict, see Figure \ref{fig:axis-aligned}.

\begin{algorithm}[h]
    \caption{Structure based algorithm for axis-aligned rectangles in dimension $p$}\label{alg:axis-aligned}
    \begin{algorithmic}[1]
        \State Set $a_1,\dots , a_p=-\infty$, $b_1,\dots,b_p=+\infty$, $\mathcal{F}_0=\mathcal{F}$ and $S_0=\emptyset$
        \For{$t=1,\dots,T$}
        \State Receive $\hat{x}_t\in \mathcal{X}\subset \mathbb{R}^p$
        \If{$y_s=0 \quad \forall s<t$} \label{alg:line:cond1}
        \State $\hat{y}_t=0$

        \ElsIf{$\hat{x}_t\notin \mathcal{S}_1(\mathcal{F}_{t-1})$} \label{alg:line:cond2}
        \State $\hat{y}_t=f(\hat{x}_t)$ for any $f\in \mathcal{F}_{t-1}$

        \ElsIf{$\exists s_1,\dots,s_\alpha<t$ and $\exists i_1,\dots, i_\alpha\in \{1,\dots,p\}$ s.t.: \par $(\hat{x}_{s_j})_{i_j}\in [(\hat{x}_t)_{i_j},a_{i_j})\cup(b_{i_j},(\hat{x}_t)_{i_j}] \quad \forall j\in \{1,\dots,\alpha\}$} \label{alg:line:cond3}
        \State $\hat{y}_t=0$

        \Else
        \State $\hat{y}_t=\perp$
        \EndIf        
        \State Receive true label $y_t$
        \State Update $S_t\gets S_{t-1}\cup\{(\hat{x}_t,y_t)\}$ and $\mathcal{F}_t\gets \mathcal{F}_{t-1}^{\hat{x}_t\rightarrow y_t}$
        \State Update $a_1,\dots,a_p,b_1,\dots,b_p$ s.t. \par
     $a_i=\min\{(x)_i:(x,1)\in S_t\}$ and $b_i=\max\{(x)_i:(x,1)\in S_t\}$
        \EndFor
    \end{algorithmic}
\end{algorithm}

\begin{theorem}\label{theo:axis_guarantee}
    Let $p,T \in \mathbb{N}^+$ and $\mathcal{F}$ be the class of axis-aligned rectangles in $\mathbb{R}^p$.\\
    Then Algorithm \ref{alg:axis-aligned} with $\alpha=\sqrt{T/\log{T}}$ satisfies 
    \begin{align*}
        \MCE&\leq p\sqrt{T\log(T)}\\
        \mathbb{E}(\AbE)&\leq 2p\sqrt{T\log(T)}+2p\log(T)
    \end{align*}
\end{theorem}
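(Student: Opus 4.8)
The plan is to follow the template of the VC‑dimension‑one analysis: describe the version space $\mathcal{F}_{t-1}$ explicitly, bound $\MCE$ directly from the update rule, and bound $\mathbb{E}(\AbE)$ by counting "attackable" points and invoking exchangeability of the i.i.d. stream. First, since every observed label — i.i.d. or injected — is produced by $f^\ast=\mathbbm{1}_{R^\ast}$ for the true rectangle $R^\ast$ (clean labels), every seen positive lies in $R^\ast$ and every seen negative lies outside it; hence the smallest enclosing box $B_{t-1}$ of the seen positives is contained in $R^\ast$, so $f^\ast\in\mathcal{F}_{t-1}$ always, and $\mathbbm{1}_{R}\in\mathcal{F}_{t-1}$ iff $R\supseteq B_{t-1}$ and $R$ avoids all seen negatives. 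Consequently, once a positive has been seen, $\hat{x}_t\in\mathcal{S}_1(\mathcal{F}_{t-1})$ iff $\hat{x}_t\notin B_{t-1}$ and the box spanned by $B_{t-1}\cup\{\hat{x}_t\}$ contains no seen negative; and a point not in $\mathcal{S}_1(\mathcal{F}_{t-1})$ has its label forced by $f^\ast$, so the Line~\ref{alg:line:cond2} prediction is always correct (never a misclassification, never an abstention). Note the maintained $[a_1,b_1]\times\cdots\times[a_p,b_p]$ is exactly $B_t$, and it is coordinatewise monotone in $t$.

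For $\MCE$: Line~\ref{alg:line:cond1} predicts $0$ only before any positive label is seen, so it is wrong in at most one round (the one containing the first positive example); Line~\ref{alg:line:cond2} is never wrong. For Line~\ref{alg:line:cond3}, suppose a misclassification occurs at time $t$, so $\hat{y}_t=0$ but $y_t=1$, whence $\hat{x}_t\in R^\ast$ and the box grows to contain $\hat{x}_t$. By the Line~\ref{alg:line:cond3} guard there are $\alpha$ witness pairs $(s_j,i_j)$ with $(\hat{x}_{s_j})_{i_j}$ lying strictly between the old boundary and $(\hat{x}_t)_{i_j}$; after the update the new boundary in coordinate $i_j$ reaches $(\hat{x}_t)_{i_j}$, so $(\hat{x}_{s_j})_{i_j}$ now lies inside $[a_{i_j},b_{i_j}]$. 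Thus each pair $(\hat{x}_{s_j},i_j)$ transitions from "strictly outside the box in coordinate $i_j$" to "inside the box in coordinate $i_j$", and by monotonicity of the box this is irreversible, so no pair $(\hat{x}_s,i)$ can witness two distinct Line~\ref{alg:line:cond3} misclassifications. There are at most $pT$ such pairs, so at most $pT/\alpha$ Line~\ref{alg:line:cond3} misclassifications, giving $\MCE\le pT/\alpha+1=p\sqrt{T\log T}+1$ (the additive constant being absorbed into the stated bound).

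For $\mathbb{E}(\AbE)$: abstention occurs only in the Line~\ref{alg:line:cond3} "else" branch. Call $(x,y)\in S$ \emph{attackable} (w.r.t. a realizable $S$) if some realizable injection sequence $A_x$ makes the algorithm abstain on $x$ with history $S\cup A_x\setminus\{(x,y)\}$. The key combinatorial claim is that for any realizable $S$ the number of attackable points is $O(p\alpha)$, say at most $2p\alpha+2p$. To see this, split into positive and negative attackable points. A positive attackable $x$ cannot be "covered" in any coordinate by injected positives (that would force $x$ inside the box), and injected positives only enlarge the box, so $x$ must be the coordinatewise min or max of the positives of $S$ in at least one coordinate — at most $2p$ points. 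A negative attackable $x$ lies outside its box in some coordinate $i$ on some side; assign $x$ to that (coordinate, side); list the negatives assigned to a fixed pair in increasing $i$‑th coordinate as $x^{(1)},\dots,x^{(m)}$. When the adversary attacks $x^{(m)}$, the points $x^{(1)},\dots,x^{(m-1)}$ are already present, all with $i$‑th coordinate $\le (x^{(m)})_i$, and to keep $x^{(m)}$ outside the box the right boundary in coordinate $i$ can be pushed only below $(x^{(m)})_i$; every earlier point above that boundary is then a Line~\ref{alg:line:cond3} witness, so the guard can fail only if there are fewer than $\alpha$ of them, giving $m\lesssim\alpha$. Summing over the $2p$ (coordinate, side) pairs proves the claim. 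Then, fixing the adversary's (non‑adaptive) injections and the positions of i.i.d. points, exchangeability of the first $i$ i.i.d. points gives that the probability of abstaining on the $i$‑th one is $\tfrac1i$ times the expected number of attackable i.i.d. points among them, hence at most $(2p\alpha+2p)/i$; summing,
\[
\mathbb{E}(\AbE)\le\sum_{i=1}^{T}\min\!\Big(1,\tfrac{2p\alpha+2p}{i}\Big)\le (2p\alpha+2p)(1+\log T),
\]
which with $\alpha=\sqrt{T/\log T}$ yields $\mathbb{E}(\AbE)\le 2p\sqrt{T\log T}+2p\log T$ after collecting terms.

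The routine parts are the version‑space description and the telescoping/transition count for $\MCE$. The main obstacle is the attackable‑points claim: one must reason carefully, coordinate by coordinate and side by side, about what the adversary can achieve subject to realizability (every injected positive must lie inside $R^\ast$) — in particular why covering a point in one coordinate forces the box far enough to destroy attackability, and how to charge each attackable negative to a distinct block of forced Line~\ref{alg:line:cond3} witnesses. This is the rectangles analogue of Lemma~\ref{lem:attack_points}, and, as there, the delicate point is keeping straight which points do and do not lie in the relevant (leave‑one‑out) disagreement region; getting the constants to match the stated $2p\sqrt{T\log T}+2p\log T$ exactly requires some additional bookkeeping in that argument.
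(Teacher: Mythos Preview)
Your misclassification argument is identical to the paper's. For the abstention bound the paper does \emph{not} go through the attackable-points abstraction; instead it bounds $P(\hat y_t=\perp)$ directly for the $n$-th i.i.d.\ sample by splitting on $y_t\in\{0,1\}$, relaxing the witness condition from the current box $[a_i,b_i]$ to the fixed true rectangle $[a_i^\ast,b_i^\ast]$ (since $[a_i,b_i]\subseteq[a_i^\ast,b_i^\ast]$, having $<\alpha$ witnesses with respect to the current box implies $<\alpha$ with respect to $R^\ast$), and then applying exchangeability to each case. The positive case gives $2p/n$ (coordinatewise extrema among the i.i.d.\ positives), the negative case gives $2p\alpha/n$, and summing yields the same $2p(\alpha+1)\log T$ you obtain. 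So the two routes are close cousins: you wrap the same combinatorics in the VC-dimension-one ``attackable points'' language, while the paper does the exchangeability count inline.

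There is, however, a genuine gap in your negative-points argument. You assign each negative attackable $x$ to a (coordinate, side) where $x$ lies ``outside its box'', but that box depends on the adversarial set $A_x$ used to attack $x$, and is therefore different for each point in your list $x^{(1)},\dots,x^{(m)}$. When you attack $x^{(m)}$, nothing forces the box's $b_i$ to stay below $(x^{(j)})_i$ for $j<m$: the adversary attacking $x^{(m)}$ may inject positives with $i$-th coordinate up to $b_i^\ast$, swallowing any $x^{(j)}$ with $(x^{(j)})_i\le b_i^\ast$ and destroying it as a witness; your phrase ``to keep $x^{(m)}$ outside the box'' does not prevent this, since $x^{(m)}$ can remain outside the box via some other coordinate. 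The fix is precisely the paper's relaxation step: assign each negative to a (coordinate, side) where it lies outside $R^\ast$ (such a coordinate exists since the point is negative). Then every injected positive has $i$-th coordinate in $[a_i^\ast,b_i^\ast]$, so the box boundary can never cross $b_i^\ast$, and every $x^{(j)}$ in the same list automatically satisfies $(x^{(j)})_i>b_i^\ast\ge b_i$, making it a Line~\ref{alg:line:cond3} witness for $x^{(m)}$. With this correction your argument goes through and matches the paper's bound.
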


\begin{proof} We first bound the misclassification and then the abstention error. \\
\textit{Misclassification error}: Misclassifications in Algorithm \ref{alg:axis-aligned} are possible for the loops in line \ref{alg:line:cond1} or line \ref{alg:line:cond3}, as in the loop of line \ref{alg:line:cond2} we always predict according to the correct consistent label. \\ 
In the first loop of line \ref{alg:line:cond1} at most one error might occur throughout the run time of the algorithm, whenever the first positive example is received. In the loop of line \ref{alg:line:cond3} a misclassification in round $t$ occurs, if $\hat{x}_t$ is a positive example with more than $\alpha$ points $\hat{x}_s$, $s<t$, for which there exist a coordinate $1\leq i\leq p $ such that $(\hat{x}_{s})_{i}\in [(\hat{x}_t)_{i},a_{i})\cup(b_{i},(\hat{x}_t)_{i}]$. Thus, after the true label $y_t=1$ is received and $a_j,b_j$ are updated for all $1\leq j \leq p$, the coordinates $(\hat{x}_s)_i$ now lie in $[a_i,b_i]$. For each point this is possible at most $p$ times and thus can also serve as an indicator to predict 0 at most $p$ times. Until time $T$ the algorithm can therefore make at most $\frac{Tp}{\alpha}+1$ misclassifications. \\

\textit{Abstention error}: Let $\hat{x}_t$ be the $n$-th i.i.d. sample such that $x_1,\dots,x_n$ are all i.i.d. examples observed with $\hat{x}_t=x_n$. We abstain on $\hat{x}_t$ only if
we have observed one positive example prior \textit{and} $\hat{x}_t\in \mathcal{S}_1(\mathcal{F}_{t-1})$ \textit{and} $\hat{x}_t$ has less than $\alpha$ other points $\hat{x}_s$, $s<t$, such that $(\hat{x}_{s})_{i}\in [(\hat{x}_t)_{i},a_{i})\cup(b_{i},(\hat{x}_t)_{i}]$.\\
This implies that if we abstain on $\hat{x}_t$, then $\hat{x}_t$ falls in the disagreement region of the version space induced by the other $x_1,\dots,x_{n-1}$ i.i.d. points and there are less than $\alpha$ other i.i.d. points $x_s$, $s<n$, with coordinates $(x_{s})_{i}\in [(x_n)_{i},a_{i})\cup(b_{i},(x_n)_{i}]$. Furthermore we fix the rectangle corresponding to the true hypothesis $[a_1^*,b_1^*]\times \dots \times [a_p^*,b_p^*]$ (note this contains all rectangles of the algorithm for all $t\in \{1,\dots,T\}$). Thus, if we abstain on $\hat{x}_t$, then there are also less than $\alpha$ other i.i.d. points $x_s$, $s<n$, with coordinates $(x_{s})_{i}\in [(x_n)_{i},a_{i}^*)\cup(b_{i}^*,(x_n)_{i}]$.\\
We differentiate between the point $\hat{x}_t$ being positive or negative and exploit that in this case these events can easily be bounded. Note that
\begin{align} \label{eq:AbE_bound}
    \begin{split}
    P(\hat{y}_t=\perp)&= P(\hat{y}_t=\perp \text{and } y_t=1 )+ P(\hat{y}_t=\perp \text{and } y_t=0 )\\
    &\leq P(\hat{x}_t \in \mathcal{S}_1(\mathcal{F}_{\{x_1,\dots,x_{n-1}\}}) \text{ and } y_t=1) + \\
    &P(\exists s_1,\dots,s_\alpha<n, \exists i_1,\dots, i_\alpha\in \{1,\dots,p\}: \\&(x_{s_j})_{i_j}\in [(\hat{x}_t)_{i_j},a_{i_j}^*)\cup(b_{i_j}^*,(\hat{x}_t)_{i_j}] \quad \forall j\in \{1,\dots,\alpha\} \text{ and }y_t=0)
    \end{split}
\end{align}
First, we use exchangeability to bound the probability that a positive $\hat{x}_t=x_n$ falls in the disagreement region of the other $x_1,\dots,x_{n-1}$ points. Note that this implies that $x_n$ lies outside of the smallest rectangle containing the positive points of $x_1,\dots,x_{n-1}$. If we exchange $x_n$ with some $x_s$, $s\leq n$, then among all those possibilities, $x_s$ falls in the disagreement region of the version space induced by the others at most $2p$ times (for every coordinate only the minimal or maximal positive point has the possibility to fall in the disagreement region). Therefore by exchangeability $$P(\hat{x}_t \in \mathcal{S}_1(\mathcal{F}|_{\{x_1,\dots,x_{n-1}\}}) \text{ and } y_t=1)\leq \frac{2p}{n}.$$
For the second term, we will also use exchangeability of the i.i.d. samples to bound the probability that for a negative example $\hat{x}_t$, there are less than $\alpha$ examples $x_s$, $s<n$, with $(x_{s})_{i}\in [(\hat{x}_t)_{i},a_{i}^*)\cup(b_{i}^*,(\hat{x}_t)_{i}]$. If this is the case then in \textit{every dimension} $i \in \{1,\dots,p\}$ the point has less than $\alpha$ coordinates of other points falling in $[(\hat{x}_t)_{i},a_{i}^*)\cup(b_{i}^*,(\hat{x}_t)_{i}]$. If $\hat{x}_t=x_n$ is swapped with another $x_s$, $s<n$, then at most $2\alpha$ swaps will also allow for less than $\alpha$ points between itself and the boundary  (exactly the $\alpha$ biggest coordinates smaller than $a_i^*$ and the $\alpha$ smallest coordinates bigger than $b_i^*$).  \\
Notice that negative points $\hat{x}_t$ with $(\hat{x}_t)_i\in [a_i^*,b_i^*]$ have 0 points falling in the interval between itself and the boundary (as this interval is empty). However, as when $\hat{x}_t$ is negative $\exists j \in \{1,\dots,p\}$ such that $(\hat{x}_t)_j\notin [a_j^*,b_j^*]$ and therefore in this dimension $j$, we need less than $\alpha$ coordinates suggesting to predict 0 and these are again only the $\alpha$ biggest smaller than $a_j^*$ and the $\alpha$ smallest bigger than $b_j^*$.\\
Therefore we can bound the second term by $\frac{2\alpha p}{n}$. Adding both bounds, we get
\begin{align*}
    \mathbb{E}(\AbE)=\mathbb{E}(\sum_{t=1}^T\mathbbm{1}_{\{f(\hat{x}_t)=\perp\}})&=\mathbb{E}(\sum_{t=1}^n\mathbbm{1}_{\{f(x_t)=\perp\}})\\
    &=\sum_{t=1}^n P(f(x_t)=\perp)\\
    &\leq \sum_{t=1}^n \frac{2p(\alpha +1)}{t}\\
    &\leq 2p(\alpha +1)\log(n)\\
    &\leq 2p(\alpha +1)\log(T)    
\end{align*}
where in the first inequality we used the bounds we derived for equation \eqref{eq:AbE_bound} and in last inequality, we used that the number of i.i.d. samples received so far $n$ is less than $T$. Setting $\alpha=\sqrt{T/\log(T)}$ gives the desired result.

\end{proof}

\begin{remark}
    The proof of the previous Theorem \ref{theo:axis_guarantee} is a more detailed version of the proof from \cite{goel2023AdversarialResilienceSequential} in order to give a rigorous argumentation for each claim. 
\end{remark}

\begin{remark}
    The way this algorithm is structured reflects in a way Algorithm $\ref{alg:vcdim1}$ for the case of VC dimension 1 and an unknown distribution, in the sense that if we chose to predict when the incoming point $\hat{x}_t$ falls in the disagreement region, then we only predict 0 (in Algorithm 2 it is according to $f(\hat{x}_t)=0$ which through a transformation of the function class according to \ref{theo:structure} w.l.o.g. can be set to 0). Furthermore, we only predict in this case if we have found enough 'evidence': for axis-aligned rectangles this is according to line \ref{alg:line:cond2} and for VC dimension 1 this is according to an estimate of the disagreement region $|\Gamma(S_{t-1},\mathcal{F}^{\hat{x}_t\rightarrow f(\hat{x}_t)},f)|$. In case of a misclassification in both cases, the points which served as evidence are removed for future consideration. \\
    This shows the structural significance of the Algorithm \ref{alg:vcdim1} which, when adapted, works also for the case of axis-aligned rectangles where $\VCdim(\mathcal{F})>1$. \\
    Again for both uses of the algorithm, a strong intuition is required and precise formulation in the corresponding proofs. 
\end{remark}

\begin{figure}
    \centering

\begin{minipage}{0.48\textwidth}
    
\begin{tikzpicture}
\begin{axis}[%
    name=plot1,
    axis lines=middle,
    xlabel= x,
    ylabel=y,
    xmin=-5, xmax=5,
    ymin=-5, ymax=5,
    xtick=\empty, ytick=\empty
]

\addplot [only marks, mark=+,mark options={very thick}] table {
0 2
1 1
3 0.5
4 0
};

\addplot [only marks, mark=-,mark options={very thick}] table {
-1 3
-2 4
-3 -1
2 -4
-4 -1.75
3 4
};

\addplot [only marks, mark= o] table {
-2 -2
};

\shade[top color=black, bottom color=black, opacity=0.1] (-3,-4) rectangle (5,3);
\shade[top color=black, bottom color=black, opacity=0.1] (-5,-1) rectangle (-3,3);
\shade[top color=black, bottom color=black, opacity=0.1] (-1,3) rectangle (5,4);

\draw[thick, dashed,fill=white] (0,0) rectangle (4,2);

\node[fill=white, draw=none]   at (-4,4) {\footnotesize{1.}} ;

\end{axis}

\end{tikzpicture}
\end{minipage}
\begin{minipage}{0.4\textwidth}
\begin{tikzpicture}
\shade[top color=black, bottom color=black, opacity=0.1] (0,0.6) rectangle (1,0.9);
\node[anchor=west] at (1.25,0.75) {\footnotesize Disagreement Region};
\draw[thick, dashed,fill=white] (0,0.1) rectangle (1,0.4);
\node[anchor=west] at (1.25,0.25) {\footnotesize Smallest Rectangle Positive Points};
\node[anchor=south] at (0.5,1) {\textbf{+}};
\node[anchor=west] at (1.25,1.25) {\footnotesize Positive Point};
\node[anchor=south] at (0.5,1.5) {\textbf{-}};
\node[anchor=west] at (1.25,1.75) {\footnotesize Negative Point};
\node[anchor=south] at (0.5,2) {$\circ$};
\node[anchor=west] at (1.25,2.25) {\footnotesize New unlabeled Point};
\draw[pattern=north west lines, pattern color=black, opacity=0.3,draw=none] (0,-0.4) rectangle (1,-0.1);
\node[anchor=west] at (1.25,-0.25) {\footnotesize $[(x)_i,a_i)\cup(b_i,(x)_i]$};
\end{tikzpicture}
\end{minipage}

\begin{tikzpicture}
\begin{axis}[
    name=plot2,
    axis x line=center,
    axis y line=none,
    xlabel= x,
    xmin=-5,xmax=5,
    ymin=-5, ymax=5,
    xtick=\empty, ytick=\empty,
    extra x ticks={0,4},
    extra x tick labels={$a_1$,$b_1$},
    ]
\addplot [only marks, mark=+,mark options={very thick}] table {
0 2
1 1
3 0.5
4 0
};

\addplot [only marks, mark=-,mark options={very thick}] table {
-1 3
-2 4
-3 -1
2 -4
-4 -1.75
3 4
};

\addplot [only marks, mark= o] table {
-2 -2
};

\draw[->] (0,2-0.25) -- (0, 0);
\draw[->] (1,1-0.25) -- (1, 0);
\draw[->] (3, 0.5-0.25) -- (3, 0);
\draw[->] (-1,3-0.25) -- (-1, 0);
\draw[->] (-2,4-0.25) -- (-2, 0);
\draw[->] (-3,-1+0.25) -- (-3, 0);
\draw[->] (2,-4+0.25) -- (2, 0);
\draw[->] (-4,-1.75+0.25) -- (-4, 0);
\draw[->] (3,4-0.25) -- (3, 0);

\draw[pattern=north west lines, pattern color=black, opacity=0.3, draw=none] (-2,-5) rectangle (0,5);

\node[fill=white, draw=none]   at (-4,4) {\footnotesize{2.}}; 

\end{axis}
\end{tikzpicture}
\begin{tikzpicture}
\begin{axis}[
    name=plot3,
    axis x line=none,
    axis y line=center,
    every axis x label/.style={at=(current axis.right of origin),anchor=west},
    ylabel= y,
    xmin=-5,xmax=5,
    ymin=-5, ymax=5,
    xtick=\empty, ytick=\empty,
    extra y ticks={0,2},
    extra y tick labels={$a_2$,$b_2$},
    ]
    \addplot [only marks, mark=+,mark options={very thick}] table {
0 2
1 1
3 0.5
4 0
};

\addplot [only marks, mark=-,mark options={very thick}] table {
-1 3
-2 4
-3 -1
2 -4
-4 -1.75
3 4
};

\addplot [only marks, mark= o] table {
-2 -2
};

\draw[->] (1-0.25,1) -- (0, 1);
\draw[->] (3-0.25, 0.5) -- (0, 0.5);
\draw[->] (4-0.25,0) -- (0, 0);
\draw[->] (-1+0.25,3) -- (0, 3);
\draw[->] (-2+0.25,4) -- (0, 4);
\draw[->] (-3+0.25,-1) -- (0, -1);
\draw[->] (2-0.25,-4) -- (0, -4);
\draw[->] (-4+0.25,-1.75) -- (0, -1.75);
\draw[->] (3-0.25,4) -- (0, 4);


\draw[pattern=north west lines, pattern color=black, opacity=0.3,draw=none] (-5,-2) rectangle (5,0);

\node[fill=white, draw=none]   at (-4,4) {\footnotesize{3.}}; 

\end{axis}
\end{tikzpicture}

\begin{minipage}{0.48\textwidth}
\begin{tikzpicture}

\begin{axis}[%
    name=plot4,
    axis lines=middle,
    xlabel= x,
    ylabel=y,
    xmin=-5, xmax=5,
    ymin=-5, ymax=5,
    xtick=\empty, ytick=\empty
]

\addplot [only marks, mark=+,mark options={very thick}] table {
0 2
1 1
3 0.5
4 0
-2 -2
};

\addplot [only marks, mark=-,mark options={very thick}] table {
-1 3
-2 4
-3 -1
2 -4
-4 -1.75
3 4
};

\shade[top color=black, bottom color=black, opacity=0.1] (-3,-4) rectangle (5,3);

\draw[thick, dashed,fill=white] (-2,-2) rectangle (4,2);

\draw (-2,0) -- (4,0);
\draw (0,-2) -- (0,2);

\node[fill=white, draw=none]   at (-4,4) {\footnotesize{4.}};

\end{axis}

\end{tikzpicture}
\end{minipage}

    \caption{An example of a classification problem for axis-aligned rectangles. Given a set of 10 points which are already assigned a positive or negative label a new point is received by the algorithm (see plot 1). The algorithm decides to predict (see plot 2 and 3) - as there are 4 points which convince us to predict 0 (take $\alpha =\sqrt{11/\log(11)}<3$). This would be a misclassification, as the new point is actually a postive point. See plot 4 for the updated disagreement region and rectangle containing the positive points.}
    \label{fig:axis-aligned}
\end{figure}
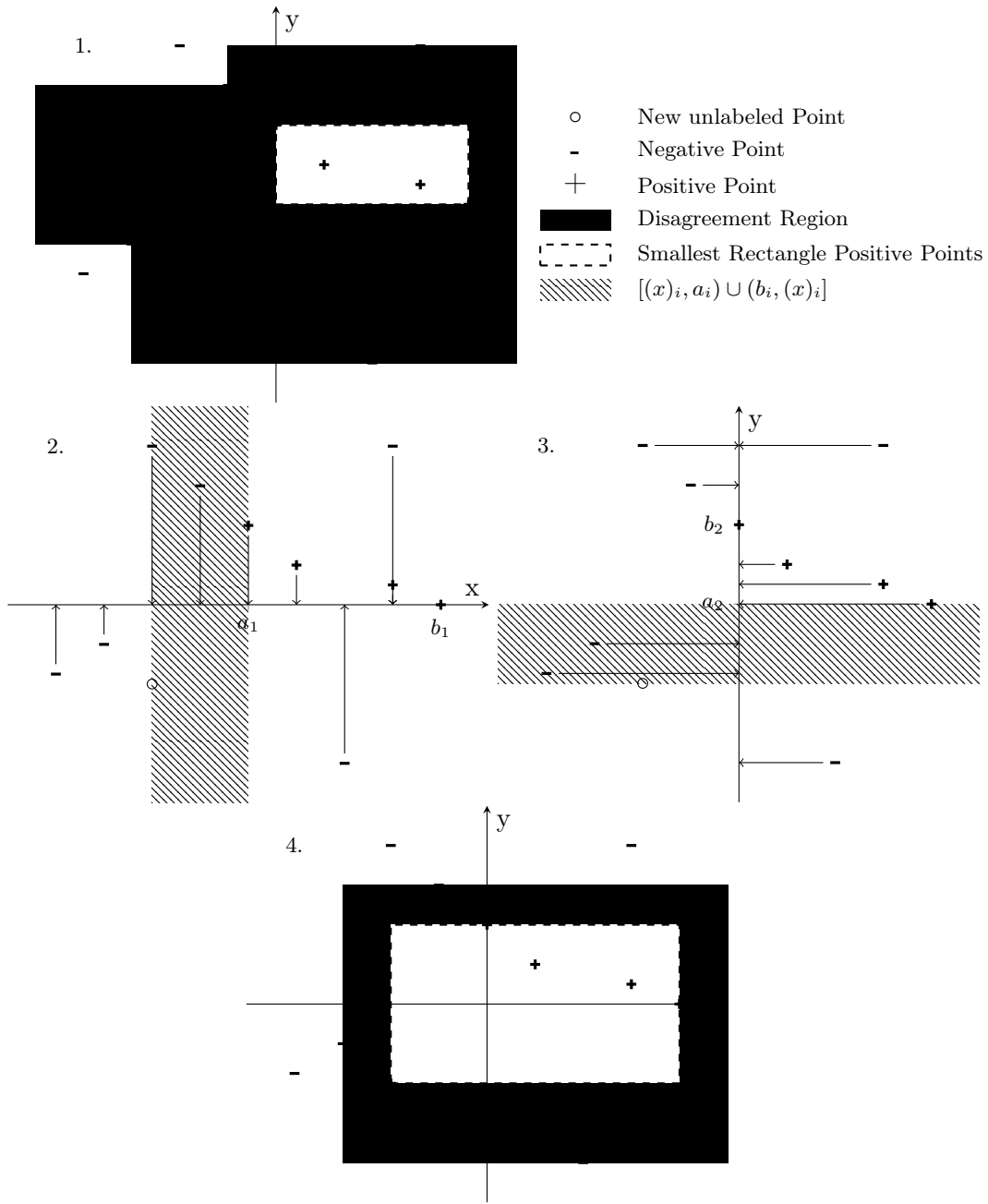

\newpage

\chapter{Agnostic Setting}\label{chap:agno}

\section{Literature for the Agnostic Setting}\label{sec:lit_agno}
\subsection{Setting}\label{sec:setting_agno}

We transition from the realizable case to the \textit{agnostic case}. This contains the realizable scenario but extends it to an even broader context where we can account for a misspecified model, random labels or inaccurate labels due to noise. The term 'agnostic' denotes a state of not knowing and this represents noise or inaccuracies in real data classification tasks better where data comes from real-life applications. The realizable setting is however easier to work with.\\

To recall, we aim to find the best possible classifier from the hypothesis space $\mathcal{F}=\{f:\mathcal{X}\rightarrow \{0,1\}\}$ through samples sequentially presented to the algorithm $(x_i,y_i)\in \mathcal{X}\times\{0,1\}$.
However in this (non-adversarial) setting, we now we have data points where the samples $x_i$ \textit{and} their corresponding labels $y_i$ are drawn i.i.d. from some fixed but generally unknown joint probability distribution $\mathcal{P}_{\mathcal{X},\mathcal{Y}}$ over $\mathcal{X}\times \mathcal{Y}$. Given the sequential nature of the online learning problem, we primarily focus on the marginal distribution over $\mathcal{X}$ (again denoted by $\mathcal{D}$), from which samples $x_i$ are drawn i.i.d., and the conditional label distribution $\eta(x)=P(Y=+1|X=x)$ where $(X,Y)\sim \mathcal{P}_{\mathcal{X},\mathcal{Y}}$. Therefore, the label 1 is assigned to a sample $x_i$ with probability $\eta(x_i)$ and the label 0 with probability $1-\eta(x_i)$. Note that in the realizable setting we had the existence of a function $f^*$ from the hypothesis space $\mathcal{F}$ such that $P(Y=f^*(x)|X=x)=1$.\\
In this light we also need to re-evaluate what it means to find 'the best classifier' from the hypothesis space, as we can no longer guarantee the existence of a function with true error zero. We adapt the notion of true error in the agnostic setting $$\er_{\mathcal{P}_{\mathcal{X},\mathcal{Y}}}(f):=P_{(X,Y)\sim \mathcal{P}_{\mathcal{X},\mathcal{Y}}}(f(X)\neq Y).$$ 
The idea is still to find the classifier which minimizes the true risk, but we acknowledge that this classifier may generate inaccurate predictions, while aiming to minimize the overall number of mistakes. The noise rate of a given hypothesis space $\mathcal{F}$ is defined as $\nu(\mathcal{F})=\inf_{f\in\mathcal{F}}\er_{\mathcal{P}_{\mathcal{X},\mathcal{Y}}}(f)$ and describes the minimum mistakes any classifier will make. One can show, that for a closed set $\mathcal{F}$ this minimum exists, i.e. for some $f^*\in \mathcal{F}$ we have $\er_{\mathcal{P}_{\mathcal{X},\mathcal{Y}}}(f^*)=\nu(\mathcal{F})$ \cite{hanneke2012ActivizedLearningTransforming}. This $f^*$ is the Bayes optimal classifier. The goal of the learning task will be to find such a classifier.\\ 

We will discuss the case for which the learner knows the \textit{true marginal distribution} of samples $x\in \mathcal{X}$ and wants to learn the hypothesis $f^*$ which best describes the true conditional label distribution in the sense that it has least true error. This is similar to what was done in \cite{goel2023AdversarialResilienceSequential} for the realizable setting. Note that the learner does not know the true conditional label distribution, or else we could simply find this classifier $f^*$ in an optimization problem.\\ 

The question that remains now, is how to introduce an adversary who is injecting clean-labels in the agnostic setting when there no longer is such a thing as a labelling function. This is a new and more general viewpoint of clean-label injections which has not been addressed in the literature prior \cite{blum2021RobustLearningCleanlabel}.
In the realizable setting, when an adversary injects a sample $\hat{x}_t$ in round $t\in \{0,\dots,1\}$, we learn something about the hypothesis class through to the label assigned by the underlying true labelling function $f^*(\hat{x}_t)$. Now we want to generalize this for the case when labels are random: if an adversary injects a sample $\hat{x}_t$ coming from an unknown distribution, then after making a prediction the learner will observe the label $y_t$ which was sampled from the correct conditional label distribution $\eta(\hat{x}_t)$. We can no longer assume that the samples $x_t\in\mathcal{X}$ presented to the learner are drawn i.i.d.. However the true labels are generated according to the correct label distribution $P(Y|X=x_t)$. For this approach to be well defined, an adversary is not allowed to sample from a region where the marginal distribution of $\mathcal{X}$ has probability mass 0. \\

In this setting with noise, we redefine the objective of reducing the expected misclassification and abstention error for time horizon $T\in \mathbb{N}^+$. We assume $\hat{x}_t$ is the (potentially adversarial) input in round $t$, $\hat{y}_t\in \{0,1,\perp\}$ the prediction the algorithm produces and $y_t\in\{0,1\}$ the true label. Then we define the errors to be 
{\allowdisplaybreaks
\begin{align*}
    \MCE&=\sum_{t=1}^T \mathbbm{1}_{\{\hat{y}_t=1-y_t\}}-\mathbbm{1}_{\{f^*(\hat{x}_t)=1-y_t\}}\\
    \AbE&=\sum_{t=1}^T\mathbbm{1}_{\{x_t \text{ was drawn i.i.d. from }\mathcal{D} \text{ and } \hat{y}_t=\perp\}}
\end{align*}}
Note that we have adapted the definition of misclassification error, as it makes sense to count the number of mistakes with respect to the true target $f^*$ and not only when we disagree with the true (but noisy) label $y_t$. Therefore, giving us an opportunity to learn the target $f^*$ and not the noise.  

\subsubsection{Noise models}\label{sec:noise}
In order to guarantee bounds on misclassification and abstention error, we will impose assumptions on the distribution $\mathcal{P}_{\mathcal{X},\mathcal{Y}}$ to characterize the noise. The conditions go from most general to most restrictive assumption on the noise 
\begin{itemize}
    \item Agnostic \cite{haussler1992DecisionTheoreticGeneralizations}: the setting as described above, without any more assumptions on $\mathcal{P}_{\mathcal{X},\mathcal{Y}}$
    \item Tsybakov-Mammen \cite{mammen1999SmoothDiscriminationAnalysis,tsybakov2004OptimalAggregationClassifiers}: For $f$ an unknown target function and $\hat{x}_t\in \mathcal{X}$, the label associated to $\hat{x}_t$ is given by $f(\hat{x}_t)$ with probability $1-\eta(\hat{x}_t)$ or $1-f(\hat{x}_t)$ with probability $\eta(\hat{x}_t)$ where $\eta(x)$ is an unknown function which satisfies the Tsybakov noise condition:
    \begin{equation}\label{eq:tsyba_noise}
        \exists\mu,\alpha\in [0,\infty) \text{ s.t. }\forall \varepsilon>0\quad P_{X\sim \mathcal{D}}(|\eta(X)-1/2|\leq\varepsilon)\leq \mu\cdot \varepsilon^\alpha
    \end{equation}
    This can be interpreted as having a true labelling function (as in the realizable case), with an adversary flipping the labels with a certain probability $\eta(x)$ which can be arbitrarily close to $1/2$ and which is unknown to the learner.
    \item Massart (Bounded Noise) \cite{massart2006RiskBoundsStatistical, sloan1988TypesNoiseData, sloan1992CorrigendumTypesNoise}: For $f$ an unknown target function and $\hat{x}_t\in \mathcal{X}$, the label associated to $\hat{x}_t$ is given by $f(\hat{x}_t)$ with probability $1-\eta(x)$ or $1-f(\hat{x}_t)$ with probability $\eta(x)$ where $\eta(x)\leq \eta < 1/2$, i.e. $|\eta(x)-1/2|\geq c$ for some $c>0$. \\
    This can be interpreted as having a true labelling function (as in the realizable case), with an adversary flipping the labels with a probability $\eta(x)$ which is bounded away from $1/2$. This is a special case of the Tsybakov Noise condition.
    \item Random Classification Noise \cite{angluin1988LearningNoisyExamples}: This is a special case of Massart Noise where $\eta(x)=\eta$ $\forall x\in \mathcal{X}$. \\
    Here an adversary flips labels with a fixed probability of $\eta<1/2$.
\end{itemize}

The setting of a clean-label adversary in a classification task with noise, is described in Protocol \ref{alg:protocol_agn}.

\begin{algorithm}[h]
\floatname{algorithm}{Protocol}
    \caption{For Sequential Predictions in the Setting with Noise}\label{alg:protocol_agn}
    \begin{algorithmic}[]
    \State Nature selects a distribution $\mathcal{D}$ over $\mathcal{X}$ and a labelling function $f^*\in \mathcal{F}$
    \State The learner does not have access to $f^*$ but has access to $\mathcal{D}$
    \State Depending on the noise, we have for all $x\in \mathcal{X}$ $\eta(x)=\eta$ (RCN), $|\eta(x)-1/2|\geq c>0$ (Massart) or an $\eta(x)$ for which the Tsybakov noise condition \ref{eq:tsyba_noise} holds
    \For{$t=1,\dots, T$}
    \State \textbf{Nature} draws $x_t\sim \mathcal{D}$ independently of previous samples 
    \State \textbf{Adversary (Clean-Label Attack)}: either injects an adversarial example from an unknown distribution $\hat{x}_t$ or does nothing $\hat{x}_t=x_t$
    \State \textbf{Nature} determines true label $f^*(\hat{x}_t)$
    \State \textbf{Adversary (Noise)}: \textit{independently} of whether the example $\hat{x}_t$ was injected or not, the adversary sets $y_t=f^*(\hat{x}_t)$ with probability $\eta(\hat{x}_t)$ or flips the label $y_t=1-f^*(\hat{x}_t)$ with probability $1-\eta(\hat{x}_t)$
    \State \textbf{Learner} is presented with $\hat{x}_t$, outputs $\hat{y}_t\in \{0,1,\perp\}$ and receives potentially noisy label $y_t$
    \EndFor
    \end{algorithmic}
\end{algorithm}

\subsection{Related Work}

\subsubsection{On Agnostic Learning}

Agnostic learning was first quantified in the PAC style setting by Haussler in 1992 \cite{haussler1992DecisionTheoreticGeneralizations}. He was the first to generalize the definition of PAC-learning, responding to the critique that the realizable assumption is too restrictive and does not represent real-life learning scenarios, where labels are not deterministically determined through the input. Maintaining minimal assumptions on the origin of the noise, we just assume the existence of a distribution $\mathcal{P}_{\mathcal{X},\mathcal{Y}}$ over $\mathcal{X}\times\mathcal{Y}$ which draws samples and labels.

\begin{definition}[Agnostic PAC-learning \cite{shalev-shwartz2014UnderstandingMachineLearning}]
    Let $\mathcal{F}$ be a binary hypothesis class over $\mathcal{X}$ and $\mathcal{P}_{\mathcal{X},\mathcal{Y}}$ a probability distribution over $\mathcal{X}\times \mathcal{Y}$.\\
    We call a hypothesis class $\mathcal{F}$ agnostic Probably Approximately Correct (agnostic PAC) learnable, if there exists a function $m_\mathcal{F}:(0,1)\times (0,1)\rightarrow \mathcal{N}$ and a learner $L$ such that the following holds:
    For any $\varepsilon, \delta\in (0,1)$ and any probability distribution $\mathcal{P}_{\mathcal{X},\mathcal{Y}}$ over $\mathcal{X}\times \mathcal{Y}$, when running the learning algorithm on $m \geq m_\mathcal{F}(\varepsilon,\delta)$ i.i.d. samples drawn from  $\mathcal{P}_{\mathcal{X},\mathcal{Y}}$ the algorithm returns a hypothesis $h$. For this $h$ we have with probability of at least $1-\delta$ (over the randomness of the samples), that for the expected true risk the following bound holds: $\er_{\mathcal{P}_{\mathcal{X},\mathcal{Y}}}(h)\leq \min_{h'\in\mathcal{F}}\er_{\mathcal{P}_{\mathcal{X},\mathcal{Y}}}(h')+\varepsilon$.
\end{definition}

In the realizable setting, this reduces to the definition of PAC learning given in Definition \ref{def:pac}, as then there exists a classifier $f^*\in\mathcal{F}$ with $\min_{h'\in\mathcal{F}}\er_{\mathcal{P}_{\mathcal{X},\mathcal{Y}}}(h')=\er_{\mathcal{P}_{\mathcal{X},\mathcal{Y}}}(f^*)=0$. Note however, that in the agnostic setting, we do not give an \textit{absolute} bound of the error of the hypothesis, but we quantify the \textit{relative} deviation from the best classifier in the class. \\

We can also give a characterization of sample complexity, for a hypothesis class to be agnostic PAC-learnable:
\begin{theorem}[Fundamental Theorem of Statistical Learning Part II]\label{theo:fund_stat_learn_part2}
 Let $\mathcal{F}$ be a binary hypothesis class over $\mathcal{X}$. Then additionally to the equivalences in Theorem \ref{theo:fund_stat_learn}, the following are equivalent:
\begin{enumerate}
    \item The hypothesis class $\mathcal{F}$ is PAC-learnable
    \item The hypothesis class $\mathcal{F}$ is agnostic PAC-learnable
\end{enumerate}
Furthermore, if $\mathcal{F}$ is agnostic PAC-learnable (with finite VC dimension $d$), we have for the sample complexity there exist absolute constants $C_1,C_2$ such that:
$$C_1\frac{d+log(1/\delta)}{\varepsilon^2}\leq 
m_\mathcal{F}\leq C_2\frac{d+\log(1/\delta)}{\varepsilon^2}.$$
\end{theorem}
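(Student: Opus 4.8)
The plan is to prove the statement in three pieces: the trivial implication, the substantive implication via agnostic uniform convergence, and the matching lower bound.

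\textbf{The easy direction and reduction of the hard one.} Agnostic PAC-learnability trivially implies PAC-learnability, since the realizable setting is exactly the special case in which $\mathcal{P}_{\mathcal{X},\mathcal{Y}}$ is consistent with some $f^*\in\mathcal{F}$, so $\inf_{h'\in\mathcal{F}}\er_{\mathcal{P}_{\mathcal{X},\mathcal{Y}}}(h')=0$ and the relative guarantee collapses to the absolute guarantee of Definition~\ref{def:pac}. For the converse I would invoke Theorem~\ref{theo:fund_stat_learn}: PAC-learnability is equivalent to $\VCdim(\mathcal{F})=d<\infty$, so it suffices to show that a class of finite VC dimension is agnostic PAC-learnable with the claimed sample complexity. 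The tool is \emph{agnostic uniform convergence}: associate to each $f\in\mathcal{F}$ its error event $A_f=\{(x,y)\in\mathcal{X}\times\mathcal{Y}:f(x)\neq y\}$, and note that the indicator of $(x,y)\in A_f$ is $f(x)$ XOR-ed with $y$; hence for any pairs with distinct $x$-coordinates the dichotomies realized by $\{A_f\}$ are an XOR-shift of those realized by $\mathcal{F}$, so $\VCdim(\{A_f:f\in\mathcal{F}\})=d$. By the Vapnik--Chervonenkis theorem — symmetrization against a ghost sample, bounding the number of relevant dichotomies on $2m$ points by $(em/d)^d$ via Sauer--Shelah, then Hoeffding and a union bound — one gets that for $m\gtrsim \frac{d\log(1/\varepsilon)+\log(1/\delta)}{\varepsilon^2}$, with probability at least $1-\delta$ one has $\sup_{f\in\mathcal{F}}\lvert\er_{S_m}(f)-\er_{\mathcal{P}_{\mathcal{X},\mathcal{Y}}}(f)\rvert\le\varepsilon/2$. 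A chaining/Dudley-type refinement, which I would cite rather than reprove, removes the spurious $\log(1/\varepsilon)$ and yields the upper bound $C_2\frac{d+\log(1/\delta)}{\varepsilon^2}$. Given uniform convergence, the ERM rule $\hat f=\arg\min_{f\in\mathcal{F}}\er_{S_m}(f)$ works in two lines: writing $f^*$ for a minimizer of $\er_{\mathcal{P}_{\mathcal{X},\mathcal{Y}}}$, we get $\er_{\mathcal{P}_{\mathcal{X},\mathcal{Y}}}(\hat f)\le\er_{S_m}(\hat f)+\varepsilon/2\le\er_{S_m}(f^*)+\varepsilon/2\le\er_{\mathcal{P}_{\mathcal{X},\mathcal{Y}}}(f^*)+\varepsilon=\nu(\mathcal{F})+\varepsilon$.

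\textbf{The lower bound.} For $m_\mathcal{F}\ge C_1\frac{d+\log(1/\delta)}{\varepsilon^2}$ I would reduce to distinguishing biased coins. Fix $x_1,\dots,x_d$ shattered by $\mathcal{F}$ and a bias $\gamma\asymp\varepsilon$, and consider the family of distributions $\mathcal{P}_b$, $b\in\{0,1\}^d$: the marginal $\mathcal{D}$ places a chosen mass $p_i$ on $x_i$, and conditionally on $x_i$ the label equals $b_i$ with probability $\tfrac12+\gamma$. Since $\{x_i\}$ is shattered there is $f_b\in\mathcal{F}$ with $f_b(x_i)=b_i$, which is Bayes optimal, and any hypothesis disagreeing with $b$ on coordinate $i$ has excess error $2\gamma p_i$ there; so forcing the output's excess error below $\varepsilon$ forces the learner to recover most of the bits $b_i$. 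Each bit is hidden behind a $\mathrm{Bernoulli}(\tfrac12\pm\gamma)$ coin, and recovering it reliably from $m$ draws needs $\Omega(1/\gamma^2)$ samples; combining this across coordinates by Assouad's lemma (for the $d/\varepsilon^2$ term) and strengthening the two-point / Le Cam--Pinsker argument to the required confidence level (for the $\log(1/\delta)/\varepsilon^2$ term) gives $m=\Omega\!\left(\frac{d+\log(1/\delta)}{\gamma^2}\right)=\Omega\!\left(\frac{d+\log(1/\delta)}{\varepsilon^2}\right)$, which is the claimed lower bound with a suitable constant $C_1$.

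\textbf{Main obstacle.} I expect the lower bound to be the hard part, and within it the clean \emph{simultaneous} extraction of the $d$-dependent and $\delta$-dependent terms: the $\Omega(d/\varepsilon^2)$ piece wants Assouad's lemma (or a careful coordinate-wise second-moment computation), the $\Omega(\log(1/\delta)/\varepsilon^2)$ piece wants a confidence-sensitive two-point construction, and one must choose the masses $p_i$ and the bias $\gamma$ so that neither obstruction lets the learner "cheat" by ignoring coordinates it cannot see while still keeping excess error below $\varepsilon$. A secondary, purely technical, obstacle on the upper-bound side is the elimination of the $\log(1/\varepsilon)$ factor: the plain growth-function/union-bound argument does not deliver it, and the optimal $O((d+\log(1/\delta))/\varepsilon^2)$ rate genuinely requires a chaining argument, so I would state the refined bound and point to the standard references rather than carry the chaining out in full.
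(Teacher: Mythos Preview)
The paper does not actually prove this theorem: it is stated without proof in the literature-review section as a classical background result (in the spirit of \cite{shalev-shwartz2014UnderstandingMachineLearning}), so there is no ``paper's own proof'' against which to compare your proposal. Your sketch is the standard textbook route --- the trivial direction, uniform convergence for the loss class $\{A_f\}$ to justify ERM (with the chaining refinement needed to kill the $\log(1/\varepsilon)$), and an Assouad/two-point lower bound on a shattered $d$-set with $\tfrac12\pm\gamma$ label noise --- and it is correct at the level of a proof plan. Since the thesis treats this as a cited fact rather than something to be established, your write-up already goes beyond what the paper does here.
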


Furthermore, we give a result on uniform convergence bounds due to Vapnik \cite{vapnik1971UniformConvergenceRelative}, which can be used to derive confidence thresholds for the deviation between empirical error of a hypothesis $f\in\mathcal{F}$ and its true error.
\begin{lemma}[Uniform convergence bounds from \cite{bousquet2004IntroductionStatisticalLearning}]\label{lem:unif_conv}
Let $\mathcal{F}$ be a binary function class with $\VCdim(\mathcal{F})=d<\infty$ and $\mathcal{P}_{\mathcal{X},\mathcal{Y}}$ a distribution over $\mathcal{X}\times\mathcal{Y}$. \\
For any $0<\delta<1$ with probability at least $1-\delta$ over the choice of sample $S_n=\{(x_1,y_1),\dots,(x_n,y_n)\}$ drawn i.i.d. from $\mathcal{P}_{\mathcal{X},\mathcal{Y}}$ any function $f\in\mathcal{F}$ satisfies
$$\er_{\mathcal{P}_{\mathcal{X},\mathcal{Y}}}(f)-\er_{S_n}(f)\leq \sigma(n,\delta,d).$$
Similarly, $\er_{S_n}(f)-\er_{\mathcal{P}_{\mathcal{X},\mathcal{Y}}}(f)\leq \sigma(n,\delta,d)$ for the same assumptions.\\ $\sigma(n,\delta,d)$ is given by $\sigma(n,\delta,d):=2\sqrt{\frac{2d\log(2ne/d)+\log(2/\delta)}{n}}$.

\end{lemma}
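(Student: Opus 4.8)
The plan is to run the classical Vapnik--Chervonenkis symmetrization argument together with the Sauer--Shelah bound on the growth function, as packaged in \cite{bousquet2004IntroductionStatisticalLearning}. Since only the $0$--$1$ loss enters, I would first pass to the induced loss class $\mathcal{L}=\{(x,y)\mapsto\mathbbm{1}_{\{f(x)\neq y\}}:f\in\mathcal{F}\}$; composing each $f\in\mathcal{F}$ with the fixed inequality test creates no new dichotomies, so $\VCdim(\mathcal{L})\le d$, and it suffices to control $\sup_{f\in\mathcal{F}}\bigl(\er_{\mathcal{P}_{\mathcal{X},\mathcal{Y}}}(f)-\er_{S_n}(f)\bigr)$ and its sign-flipped counterpart separately. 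Handling each of these two one-sided suprema with failure probability $\delta/2$ is what turns $\log(1/\delta)$ into the $\log(2/\delta)$ that appears inside $\sigma$.

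For one side, the first step is symmetrization: introduce an independent ghost sample $S_n'\sim\mathcal{P}_{\mathcal{X},\mathcal{Y}}^{\otimes n}$, replace $\er_{\mathcal{P}_{\mathcal{X},\mathcal{Y}}}(f)$ by $\er_{S_n'}(f)$, and use exchangeability of the $n$ matched pairs to reduce to a uniform deviation of $\er_{S_n'}(f)-\er_{S_n}(f)$. Conditioned on the $2n$ points of $S_n\cup S_n'$, the class realizes at most $\Pi_{\mathcal{L}}(2n)$ distinct loss patterns, where $\Pi_{\mathcal{L}}$ is the growth function (maximal number of sign patterns on a given number of points); a Hoeffding bound for each fixed pattern plus a union bound over these $\Pi_{\mathcal{L}}(2n)$ patterns controls the conditional deviation, and Sauer--Shelah supplies $\Pi_{\mathcal{L}}(2n)\le(2ne/d)^d$, i.e. $\log\Pi_{\mathcal{L}}(2n)\le d\log(2ne/d)$ --- this is where the $2n$ inside the logarithm of $\sigma$ originates.

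Combining these pieces and inverting the resulting tail bound --- or, for the sharpest constants, bounding the expected supremum by a growth-function-controlled Rademacher average and then adding a McDiarmid bounded-differences term of order $\sqrt{\log(2/\delta)/n}$, using that the supremum moves by at most $1/n$ under a single-sample swap --- gives, with probability at least $1-\delta$,
$$\sup_{f\in\mathcal{F}}\bigl|\er_{\mathcal{P}_{\mathcal{X},\mathcal{Y}}}(f)-\er_{S_n}(f)\bigr|\;\le\;2\sqrt{\frac{2d\log(2ne/d)+\log(2/\delta)}{n}}\;=\;\sigma(n,\delta,d),$$
which is exactly the claimed pair of inequalities (the first taking $\er_{\mathcal{P}_{\mathcal{X},\mathcal{Y}}}(f)-\er_{S_n}(f)$, the second the reverse difference).

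The argument has no genuinely hard step; the real work is the constant bookkeeping --- arranging that the combinatorial contribution $\sqrt{2d\log(2ne/d)/n}$ and the confidence contribution $\sqrt{\log(2/\delta)/n}$ merge into the single clean expression for $\sigma$, with the prefactor $2$ and the $2n$ inside the logarithm appearing for the right structural reason (the double-sample symmetrization) rather than as slack in the estimates. Since Lemma~\ref{lem:unif_conv} is a standard result, in the thesis we simply cite \cite{bousquet2004IntroductionStatisticalLearning} rather than reproduce the derivation.
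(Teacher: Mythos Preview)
Your proposal is correct and matches the paper's treatment: the thesis does not prove Lemma~\ref{lem:unif_conv} at all but simply states it as a standard result with a citation to \cite{bousquet2004IntroductionStatisticalLearning}, exactly as you conclude in your final sentence. The proof sketch you outline (symmetrization, Sauer--Shelah, Hoeffding/McDiarmid) is the standard derivation and is sound, but the paper itself contains no proof to compare against.
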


Similar to the generalization of PAC learning to agnostic PAC learning, online learning can also be extended to the non-realizable case. The error is now quantified using the concept of \textit{Regret}, which reflects how much the learner regrets, in hindsight, not following the predictions of the optimal hypothesis:

\begin{definition}[Regret]
    Let $\mathcal{F}$ be a binary hypothesis class over $\mathcal{X}$ and time horizon $T\in \mathbb{N}^+$.\\
    Given an online learning algorithm $L$ who sequentially predicts on samples $\{(x_1,y_1),\dots,(x_T,y_T)\}$ with predictions $\hat{y}_1,\dots,\hat{y}_T$, we define the Regret at time $T$ with respect to the best classifier in $\mathcal{F}$ as 
$$\text{Regret}_L(\mathcal{F},T)=\sum_{i=1}^T \mathbbm{1}_{\{\hat{y}_i=1-y_i\}}-\min_{f\in\mathcal{F}}\sum_{i=1}^T\mathbbm{1}_{\{f(x_i)=1-y_i\}}.$$ 

\end{definition}

Like agnostic PAC learning, this measures the relative error compared to the best classifier, whereas in the realizable setting, we were able to bound the absolute error. We say that a hypothesis class is agnostic online learnable, if we achieve Regret sublinear in the number of examples $T$. We assume, that the true labels $y_i$ cannot depend on the learner's prediction, else vanishing regret would not be achievable \cite{cover1965BehaviorSequentialPredictors}. Moreover, the learning algorithm has to be \textit{randomized} \cite{shalev-shwartz2014UnderstandingMachineLearning}. \\

One of the first to address this problem with a randomized algorithm was Littlestone and Warmuth in 1994, introducing the concept of the weighted majority algorithm for finite hypothesis classes
\cite{littlestone1994WeightedMajorityAlgorithma}. Similar models have been studied extensively \cite{cesa-bianchi1997HowUseExpert,cesa-bianchi1996OnlinePredictionConversion}.
The theoretical analysis of similar algorithms aims to find a combinatorial quantity (like Littlestone dimension or generalizations of it) to quantify optimal error bounds. Discussions on this can be found in e.g.\cite{ben-david2009AgnosticOnlineLearning} or   \cite{filmus2023OptimalPredictionUsing}. Even more in the agnostic setting (due to its relevance w.r.t. real-life data), there is a broad field of research trying to establish learning rates for the optimization problem with linear and kernel based methods \cite{cesa-bianchi2011OnlineLearningNoisy,natarajan2013LearningNoisyLabels}, also investigating different loss functions, feedback communicated back to the learner (full information setting or bandit) and assumptions on noise models. 

\subsubsection{On Agnostic Active Learning}

The literature on active learning contains results and methods closely related to the approach we will discuss later in Section \ref{sec:results_agno}. The most important aspect, in which sequential/online active learning differs from our setting, is that the learner can only observe the true label, if it is requested. Moreover, usually one studies active learning in the stochastic setting, assuming that input and labels are drawn i.i.d. from $\mathcal{P}_{\mathcal{X},\mathcal{Y}}$.\\

To give an overview over the field of agnostic active learning, we start with the \textit{pool based} setting, as this was the first active learning algorithm to be introduced for the general agnostic setting. This was conceived by  Balcan et al in 2006 \cite{balcan2006AgnosticActiveLearning}, who derived the $A^2$ (for Agnostic Active) Algorithm, with the aim to generalize the disagreement-based CAL Algorithm (see Algorithm \ref{alg:cal}). It is easy to see, that in the case of noisy or random labels, even the best classifier in the class might issue a wrong prediction on this instance. Thus relying on the label of one sample and removing all classifiers whose label predictions disagree is not a sensible approach. However, we still want to reliably reduce the region of disagreement without removing the best classifier.\\
In order to achieve this, the $A^2$ Algorithm does the following: in round $i$ of the algorithm, we request $S_i$  labels of samples from the current region of disagreement. Calculating high-confidence upper and lower bounds of $\er_{\mathcal{P}_{\mathcal{X},\mathcal{Y}}}(h)$, the true error of a hypothesis $h$, based on the samples $S_i$ can be done through results on Uniform convergence (similar to Lemma \ref{lem:unif_conv}). Then those hypothesis are eliminated for which the calculated lower bound is greater than the minimal upper bound, as they are have a significantly higher error rate compared to the best ones in the class. Note that we choose the number of label requests $S_i$ such that with high probability with each update we halve the current region of disagreement.\\

They were the first to study the most general setting of agnostic active learning, using only the assumption of i.i.d. sampled data. For additional assumptions on the problem class, the noise and distribution, they even showed exponential rates in label complexity, i.e. $\mathcal{O}(\log_2(1/\varepsilon))$ label requests to find a classifier with error less than $\varepsilon$. A comprehensive analysis, quantifying the label complexity of this algorithm in terms of the disagreement coefficient $\theta_f$, was first conducted in 2007 by Hanneke \cite{hanneke2007BoundLabelComplexity}. This work laid the foundation for subsequent analyses of active learning algorithms, which can be often discribed in terms of $\theta_f$.\\

The $A^2$ algorithm sparked the development of active learning algorithms in the general agnostic setting requiring minimal assumptions on the distribution.
It was followed by the algorithm presented by Dasgupta et al. in \cite{dasgupta2007GeneralAgnosticActive} (referred to as DHM after the authors). Their approach ensures computational feasibility and leverages the labels of samples which can be inferred and match the target with high probability ('pseudo labels'), thus representing a semi-supervised approach in active learning.\\
The DHM Algorithm keeps track of two different sets of labeled samples $S$ (where we have inferred the label) and $T$ (where the label was explicitly requested). In order to decide whether to request or infer the label of a sample $x_t$, we study the classifier $f$ which is consistent with labels in $S$ and has minimal error on $T$. If the difference in empirical error on $S\cup T$ between the classifier $f^0$ restricted to $(x_t,0)$ compared to $f^1$ restricted to $(x_t,1)$ is large, we can confidently infer the label. We assign the label with minimal empirical error and add the sample to $S$. Else, we request the label and add it to $T$. Again the bounds for the difference in empirical error for some classifier are derived from uniform convergence results as presented in Lemma \ref{lem:unif_conv}. This comparison of performance for classifiers when restricting it to either choice of label in $\mathcal{Y}$ is similar to what we do in the realizable setting in algorithm \ref{alg:gen_vc}. Again, label complexity bounds in terms of the disagreement coefficient have been studied in \cite{dasgupta2007GeneralAgnosticActive} and \cite{hanneke2011RatesConvergenceActive}.\\

Taking inspiration from this DHM Algorithm and disagreement-based learning, Hanneke adapted in \cite{hanneke2012ActivizedLearningTransforming} his shattering algorithm to also improve guarantees in the agnostic setting. This is the algorithm we discussed in Section \ref{sec:active learning} and on which the Algorithm from Goel et al. is based upon. The main difference to the realizable setting is to collect a set of labeled points (inferred or requested) and update the hypothesis space to eliminate classifiers which perform significantly worse. How badly they perform is captured by the empirical error compared to that of the minimal empirical error that is achievable by a classifier in the version space. We will adopt this notion later in Section \ref{sec:results_agno} for more details. Classical results on uniform convergence (using data-dependent Rademacher complexities or Lemma \ref{lem:unif_conv}) give guarantees that we do not eliminate the best classifier whilst shrinking the region of disagreement by a considerable amount. \\
The choice of predicting or inferring is based on the probability of shattering, same as in the realizable case.\\

He assumed Tsybakov-Mammen noise condition (see Section \ref{sec:noise}) and managed to give the following guarantees:

\begin{theorem}
    Let $\mathcal{F}$ be a binary hypothesis space over $\mathcal{X}$, $\mathcal{P}_{\mathcal{X},\mathcal{Y}}$ some distribution over $\mathcal{X}\times \mathcal{Y}$, and $f^*\in \mathcal{F}$ is the target (Bayes classifier) we want to learn, i.e. $\er_{\mathcal{P}_{\mathcal{X},\mathcal{Y}}}(f^*)=\min_{f\in\mathcal{F}}\er_{\mathcal{P}_{\mathcal{X},\mathcal{Y}}}(f)=\nu(\mathcal{F})$. Moreover, $\VCdim(\mathcal{F})=d$, $k=\Tilde{d}_{f^*}$ (see Section \ref{sec:active learning}) and the Tsybakov-Mammen noise condition holds with parameters $\kappa=\frac{1+\alpha}{\alpha}$ and $\mu$.\\
    Then, there exist a constant $c>0$ which is dependent on the distribution $\mathcal{P}_{\mathcal{X}}$ and $\mathcal{F}$, such that for all $\varepsilon,\delta\in (0,e^{-3})$ and integer $n$ with 
     $$n\geq c \theta_{f^*}^{k}(\varepsilon^{1/\kappa})\varepsilon^{2/\kappa -2}\log_2(1/(\varepsilon\delta))$$
    we can give the following guarantee for the Shattering Algorithm: The classifier $f_n\in\mathcal{F}$ which is the output of the Shattering Algorithm with $n$ label requests achieves with probability at least $1-\delta$ that $\er_{\mathcal{P}_{\mathcal{X},\mathcal{Y}}}(f_n)\leq \nu(\mathcal{F})+\varepsilon$. 
\end{theorem}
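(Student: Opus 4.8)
The plan is to lift the realizable analysis of the Shattering Algorithm to the noisy regime exactly as the $A^2$ and DHM algorithms lift CAL: the ``$x_t$ is shattered by the current version space'' test is kept as the rule deciding whether to request a label or to infer one, but the version-space update is changed from ``keep everything consistent with all observed labels'' to ``keep everything whose empirical error is within a confidence band of the best in the current version space.'' I would organise the run into epochs $j=0,1,\dots,J$ with $J\asymp\log_2(1/\varepsilon)$, carrying a version space $\mathcal{F}_j$ with the invariant $f^*\in\mathcal{F}_j$ and $\er_{\mathcal{P}_{\mathcal{X},\mathcal{Y}}}(h)-\nu(\mathcal{F})\le 2^{-j}$ for every $h\in\mathcal{F}_j$. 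The Tsybakov--Mammen condition with $\kappa=(1+\alpha)/\alpha$ supplies the standard comparison inequality $P_{x\sim\mathcal{D}}(h(x)\neq f^*(x))\le C_\mu\bigl(\er_{\mathcal{P}_{\mathcal{X},\mathcal{Y}}}(h)-\nu(\mathcal{F})\bigr)^{1/\kappa}$, so the invariant places $\mathcal{F}_j$ inside the $L_1$-ball $\mathcal{B}_{\mathcal{F},\mathcal{D}}(f^*,r_j)$ with $r_j:=C_\mu 2^{-j/\kappa}$. Stopping once $2^{-J}\le\varepsilon$ returns $f_n$ with excess risk at most $\varepsilon$, so everything reduces to (a) propagating the invariant from epoch $j$ to epoch $j+1$ and (b) summing the number of label requests over epochs.

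For (a): within epoch $j$ I draw fresh i.i.d.\ points, and on each $x_t$ I use the order-$k$ shattering proxy --- if the probability that $\{x_t\}$ together with $k-1$ fresh i.i.d.\ points is shattered by $\mathcal{F}_j$ drops below $1/2$ I \emph{infer} the label dictated by the non-shattered coordinate, otherwise I \emph{request} the true (noisy) label $y_t$. Gathering $N_j$ such labelled points and invoking Lemma~\ref{lem:unif_conv}, or rather a localized Bernstein/relative-deviation refinement of it restricted to $\mathcal{F}_j$, I eliminate every $h\in\mathcal{F}_j$ whose empirical error exceeds the minimal empirical error over $\mathcal{F}_j$ by more than twice the deviation term. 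Using $\delta_j=\delta/(j+1)^2$ in a union bound over epochs keeps $f^*$ alive throughout, since its true error is the minimum and hence its empirical error cannot be far above the empirical minimum. Because the excess loss has variance $O(r_j)$ on $\mathcal{F}_j\subseteq\mathcal{B}_{\mathcal{F},\mathcal{D}}(f^*,r_j)$, the localized bound makes the deviation term $\lesssim 2^{-(j+1)}$ already with $N_j\asymp r_j\,2^{2j}\bigl(d+\log(1/\delta_j)\bigr)=2^{(2-1/\kappa)j}\bigl(d+\log(1/\delta_j)\bigr)$ labelled points, which forces the surviving class to satisfy the epoch-$(j+1)$ invariant.

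For (b): the number of label \emph{requests} in epoch $j$ is at most the number of processed points times $P_{S\sim\mathcal{D}^{\otimes k}}(\mathcal{F}_j\text{ shatters }S)=\rho_k(\mathcal{F}_j)$, and since $\mathcal{F}_j\subseteq\mathcal{B}_{\mathcal{F},\mathcal{D}}(f^*,r_j)$ this is at most $\rho_k(\mathcal{B}_{\mathcal{F},\mathcal{D}}(f^*,r_j))\le\theta_{f^*}^{(k)}(r_j)\,r_j$ by the definition of the order-$k$ disagreement coefficient; with $k=\Tilde{d}_{f^*}$ this is finite and in fact $o(r_j)$, as the order-$k$ shatter core is null. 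As only a $\rho_k(\mathcal{F}_j)$-fraction of processed points is requested while the rest are inferred for free, harvesting the $N_j$ labelled points costs $\lesssim N_j\,\theta_{f^*}^{(k)}(r_j)\,r_j\asymp\theta_{f^*}^{(k)}(r_j)\,2^{(2-2/\kappa)j}\bigl(d+\log(1/\delta_j)\bigr)$ requests in epoch $j$, using $2^{(2-1/\kappa)j}\cdot 2^{-j/\kappa}=2^{(2-2/\kappa)j}$. Summing over $j\le J$, monotonicity of $r\mapsto\theta_{f^*}^{(k)}(r)$ pulls out $\theta_{f^*}^{(k)}(r_J)\le\theta_{f^*}^{(k)}(c'\varepsilon^{1/\kappa})$, the series $\sum_j 2^{(2-2/\kappa)j}$ is dominated by its last term $\asymp\varepsilon^{2/\kappa-2}$ (it is instead $\asymp\log_2(1/\varepsilon)$ when $\kappa=1$, still absorbed in the stated bound), and the $\log(1/\delta_j)$ collapse to $\log_2(1/(\varepsilon\delta))$; collecting constants yields $n\gtrsim c\,\theta_{f^*}^{(k)}(\varepsilon^{1/\kappa})\,\varepsilon^{2/\kappa-2}\,\log_2(1/(\varepsilon\delta))$.

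The main obstacle is carrying out step (a) quantitatively with inference in the loop. One must combine the \emph{noisy} requested labels with the \emph{noiseless} inferred ones in the same empirical-risk comparison: following the DHM argument, inference at $x_t$ is performed only when the empirical error gap between the version space restricted to $(x_t,0)$ and to $(x_t,1)$ is so large that the ``wrong'' choice would provably put $h$ above the best achievable error, so $f^*$ is never contradicted by an inferred label and the inferred subsample --- although adaptively chosen rather than i.i.d. --- does not corrupt the elimination decisions, which is a filtration/relative-deviation argument. The second delicate ingredient is extracting the Tsybakov-improved per-epoch budget $2^{(2-1/\kappa)j}$, which requires a localized uniform-convergence bound exploiting that the excess-loss variance inside $\mathcal{B}_{\mathcal{F},\mathcal{D}}(f^*,r_j)$ is $O(r_j)$ rather than $O(1)$. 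Once these are in place, tying the request count to $\theta_{f^*}^{(k)}$ at the correct radius $r_j\asymp 2^{-j/\kappa}$ and performing the geometric summation is routine, closely mirroring the realizable Shattering analysis and the $A^2$/DHM label-complexity proofs.
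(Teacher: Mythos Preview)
The paper does not prove this theorem. It appears in the literature-review subsection on agnostic active learning and is stated as a result of Hanneke, cited from \cite{hanneke2012ActivizedLearningTransforming}, with no proof or sketch given in the thesis itself. There is therefore nothing in the paper to compare your proposal against.

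For what it is worth, your outline is a faithful summary of how this type of bound is established in the source literature: epoch-based halving of excess risk, the Tsybakov comparison inequality $P(h\neq f^*)\lesssim(\er(h)-\nu)^{1/\kappa}$ to convert the excess-risk invariant into an $L_1$-radius $r_j\asymp 2^{-j/\kappa}$, a localized or Bernstein-type deviation bound exploiting the $O(r_j)$ variance of the excess loss to obtain the per-epoch sample size $N_j\asymp r_j\,2^{2j}(d+\log(1/\delta_j))$, and finally controlling the number of requested labels per epoch through $\rho_k(\mathcal{F}_j)\le\theta_{f^*}^{(k)}(r_j)\,r_j$. The geometric summation and collapse of constants you describe is routine. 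The two points you flag as delicate --- the DHM-style argument that inferred labels never contradict $f^*$ and hence do not corrupt the empirical-error comparison, and the localized uniform-convergence bound needed to get the $2^{(2-1/\kappa)j}$ scaling rather than the cruder $2^{2j}$ --- are indeed the places where the real work sits in Hanneke's paper, and your sketch correctly identifies them without filling them in.
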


A more recent line of work has managed to show exponential rates in pool-based active learning, by allowing the algorithm to abstain \cite{puchkin2021ExponentialSavingsAgnostic}. This brings us back to the concept of selective classification, a concept closely linked to active learning, as we have previously established. We can establish an equivalence between requesting a label in active learning, whenever it cannot be inferred with confidence, and abstaining on a prediction when we cannot predict with confidence, also in the agnostic setting.

The first to study the problem of selective classification in the agnostic setting from a theoretical viewpoint were El-Yaniv and Wiener in 2011 \cite{wiener2011AgnosticSelectiveClassification} who introduced a simple extension of their CSS classifier to the agnostic setting.\\
We introduce the following definition of pointwise-competitive: A selective classifier $(f,g)$ is called \textit{pointwise-competitive}, if for every $x\in\mathcal{X}$ such that $g(x)=1$ (i.e. we choose to predict), we have that $f(x)=f^*(x)$ where $f^*$ is a true risk minimizer \cite{wiener2015AgnosticPointwiseCompetitiveSelective}.\\
In the realizable setting, this was trivially the case, as we predicted with the consistent hypothesis, thus never making a mistake. This becomes relevant in the agnostic setting, where we with our predictions we want to match the target $f^*$ and not the noise. \\

The approach to agnostic selective classification is tied to the methods we have seen in agnostic active learning: classifiers are eliminated from the hypothesis space, if they have higher empirical error compared to the best in the version space. Guarantees to not eliminating the best classifier $f^*$, while achieving high coverage, can be issued through guarantees of uniform convergence (Lemma \ref{lem:unif_conv}). Choosing to predict when all classifiers agree on the predicted label, can be shown to lead to a pointwise-competitive classifier, even in the agnostic setting \cite{wiener2015AgnosticPointwiseCompetitiveSelective}.\\

Note that in the El-Yaniv and Wiener's first studies of agnostic classification in \cite{wiener2011AgnosticSelectiveClassification,wiener2015AgnosticPointwiseCompetitiveSelective}, they make an assumption on $\mathcal{F}$ and the 0-1 loss to be of Bernstein type (for details see \cite{bartlett2004LocalComplexitiesEmpirical}), which can be removed by considering an improved version of the algorithm \cite{gelbhart2019RelationshipAgnosticSelective}.
A comprehensive analysis of the relationship between coverage bounds for a pointwise-competitive learner in selective classification, label complexity bounds for a disagreement-based learner in sequential active learning, and the disagreement coefficient in the agnostic setting can be found in \cite{gelbhart2019RelationshipAgnosticSelective}.\\

To mention in the context of abstentions, the 'Knows What It Knows' Algorithm can also be adapted for the agnostic setting \cite{szita2011AgnosticKWIKLearning} and to move beyond disagreement-based predictions, one can study confidence rated predictions using abstentions to get good bounds \cite{zhang2014DisagreementBasedAgnosticActive}. Lastly, the algorithm presented in \cite{desalvo2021OnlineActiveLearning}  shows guarantees for arbitrary loss functions through a surrogate and the use of pseudolabels (inferred labels) for  sequential active learning, thus combining active learning and semi-supervised learning. \\

To conclude, we present results in online selective sampling that are closely related to what we present in Section \ref{sec:results_agno} and the bounds we aim to derive. To reiterate, online active learning is similar to sequential active learning, however with the objective of establishing bounds on expected misclassification error and the expected number of queries \cite{hanneke2021GeneralTheoryOnline}. As mentioned before, there is a close connection (established through selective classification) between the number of queries a sequential active learner makes and the number of abstentions in selective classification (which is the objective we will focus on later).\\
In the work of Huang et al. \cite{huang2020DisagreementbasedActiveLearning,huang2020SEQUENTIALDECISIONMAKING}, they study a disagreement-based online selective sampling algorithm, subject to Tsybakov Noise. The algorithm operates as follows: similar to what we have seen before in pool based active learning, we wait until we have collected a sample of a fixed size and then eliminate those classifiers which make significantly more mistakes than the best classifier in the set. Corresponding guarantees to not eliminate the best classifier and a control by how much the disagreement region shrinks with each update of the version space can be given through the uniform convergence results from Lemma \ref{lem:unif_conv}. For time horizon $T$, with this method they achieve an expected misclassification bound of order $\mathcal{O}(1)$ and expected number of label requests of order $\mathcal{O}(dT^{\frac{2-2\alpha}{2-\alpha}}\log^2(T))$, where $\alpha$ is the parameter of Tsybakov Noise and $\VCdim(\mathcal{F})=d$. See \cite{cacciarelli2024ActiveLearningData} for a comprehensive study of sequential active learning algorithms. 

\newpage

\section{Extension to the Agnostic Setting}\label{sec:results_agno}

In the realizable setting, we could put our full trust in the label of a single sample, influencing the update of the hypothesis space after each new prediction. In contrast, in the agnostic setting, we encounter a new layer of uncertainty due to random labels. It is evident that updating the hypothesis space after every label is no longer a suitable approach.\\

As already discussed for agnostic active learning algorithms in Section \ref{sec:lit_agno} (when there is no clean-label adversary) there is a natural fix:
\begin{itemize}
    \item If there is information about the noise (like e.g., Random Classification Noise or Massart Noise), then we can leverage this information to collect a bigger sample and remove classifiers which perform 'badly' (we will quantify this later).
    \item An application of the uniform convergence property can give necessary guarantees to make sure we are not eliminating the target whilst reducing the version space by a necessary amount.
\end{itemize}

Following this approach, Hanneke has already shown in his work \cite{hanneke2012ActivizedLearningTransforming}, that going from disagreement-based to shattering, does not only provide improvements in label complexity for the realizable setting, but also in the agnostic setting (with Tsybakov noise condition) with \textit{stochastic data}. The central question, we ask ourselves now:

\begin{center}
    Is it possible to design an algorithm, which is robust to clean-label injections \textbf{and} label noise, utilizing the same methods as in Section \ref{sec:results_real} from \cite{goel2023AdversarialResilienceSequential}?
\end{center}

To explore this question, we first focus on a \textit{disagreement-based} learner for the simple class of thresholds. This can be seen as a first exploration of this new problem of learning in the presence of noise with a clean-label adversary. Pursuing this approach to generalize it to hypothesis classes with VC dimension $d$ and then following the same approach as with shattering seems feasible. However, guaranteeing desirable error rates, turns out to be a significantly harder task, due to the added complexity through noise.

\subsection{Disagreement-Based Learning of Thresholds}

In Algorithm \ref{alg:agnostic_disagree}, we describe the algorithm for a disagreement-based version of learning thresholds on the interval $[0,1]$ with an adversary injecting clean-label samples and Random Classification Noise. 
The idea remains the same as in the realizable setting: the learner chooses to 
\begin{itemize}
    \item Predict, if all hypothesis from the current version space agree with their prediction on the sample.
    \item Abstain, if the sample falls in the disagreement region of the current version space.
\end{itemize}
However, the question is now: how and at what point do we update the version space? Previously, this happened after every sample, as classifiers which assigned the wrong label were necessarily not the target. In the case of random labels, this no longer holds. We adopt the strategy of common agnostic (active) learning algorithms (see for example \cite{dasgupta2007GeneralAgnosticActive,hanneke2012ActivizedLearningTransforming}). This strategy involves initially collecting a sample of size $M$ and then eliminating classifiers based on their difference in empirical errors to the classifier from the current version which achieves minimal error on this set. Classifiers which deviate a lot and produce a lot of errors are then less likely to be the target $f^*$ (as this is the Bayes optimal classifier) and are eliminated.\\ 
Again, we are faced with a trade-off between misclassification and abstention error:
\begin{enumerate}
    \item On one hand, in order to minimize the misclassification error, with high probability, we want to make sure not to eliminate the target $f^*$, such that this function continues to contribute correct labels in subsequent rounds. \label{trade-off:Mis}
    \item On the other hand, in order to minimize the abstention error and thus the region of disagreement in general, with high probability, we want to make sure we eliminate classifiers making a lot of mistakes.\label{trade-off:Abs}
\end{enumerate}

To ensure, that we remove just enough classifiers that both the above points can be satisfied, will constitute the primary challenge in the proof. This will determine the choice of $\Delta$ in step \ref{alg_line:update} of the Algorithm.\\

We can give the following guarantees for misclassification and abstention error:

\begin{theorem}\label{theo:mis_abs_noisy}
    Let $\mathcal{F}=\{\mathbbm{1}_{\{\cdot\leq a\}}:a\in [0,1]\}$ be the class of thresholds on the interval $\mathcal{X}=[0,1]$ and $\mathcal{D}$ some known marginal distribution over $\mathcal{X}$. Furthermore, we have the true labelling function $f^*\in \mathcal{F}$ and Random Classification Noise (see Section \ref{sec:noise}) with parameter $\eta<1/2$.\\
    Running Algorithm \ref{alg:agnostic_disagree} with time horizon $T$, $M=\log(T)\frac{16}{(1-2\eta)^2}$ and $\Delta=\frac{1-2\eta}{2}$, we achieve the following upper bounds on expected misclassification and abstention error
    \begin{equation*}
        \mathbb{E}(\MCE)\leq 1 \quad \mathbb{E}(\AbE)\leq 6M\cdot \frac{\log(T/(12M) + 3/2)}{\log(3/2)}+1 
    \end{equation*}
\end{theorem}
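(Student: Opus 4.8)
The proof establishes the two bounds separately; in both, the decisive event is $\mathcal{E}$, that the target $f^*$ is never discarded from the running version space when Algorithm~\ref{alg:agnostic_disagree} performs an update.

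\emph{Misclassification error.} On $\mathcal{E}$ I would show $\MCE\le 0$: the learner commits to a label in $\{0,1\}$ only when $\hat{x}_t$ lies outside the disagreement region of the current version space, so all surviving hypotheses, $f^*$ among them, agree there and $\hat{y}_t=f^*(\hat{x}_t)$, making the round's contribution $\mathbbm{1}_{\{f^*(\hat{x}_t)=1-y_t\}}-\mathbbm{1}_{\{f^*(\hat{x}_t)=1-y_t\}}=0$; on abstaining rounds it is $-\mathbbm{1}_{\{f^*(\hat{x}_t)=1-y_t\}}\le 0$. Since $\MCE\le T$ always, $\mathbb{E}(\MCE)\le T\cdot P(\mathcal{E}^{c})$, and it remains to prove $P(\mathcal{E}^{c})\le 1/T$. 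An update removes every hypothesis whose empirical $0$--$1$ error on the newly collected batch $S$ of $M$ labelled points exceeds the batch minimum by more than $\Delta$. Under Random Classification Noise the indicators $\mathbbm{1}_{\{y_i\neq f^*(\hat{x}_i)\}}$ are i.i.d.\ $\mathrm{Bernoulli}(\eta)$ \emph{even on adversarially chosen $\hat{x}_i$}, so $\hat\er_S(f^*)$ concentrates around $\eta$; and for any threshold $g$ one has $\mathbb{E}[\hat\er_S(g)\mid \hat{x}_1,\dots,\hat{x}_M]=\eta+(1-2\eta)\cdot(\text{empirical split-mass of }g\text{ vs.\ }f^*)\ge\eta$, so a union bound over the at most $M+1$ distinct behaviours of thresholds on $S$ (equivalently Lemma~\ref{lem:unif_conv} with $d=1$) gives $\min_g\hat\er_S(g)\ge\eta-\varepsilon_M$ with $\varepsilon_M=O\!\big(\sqrt{\log(MT)/M}\big)$, while $\hat\er_S(f^*)\le\eta+\varepsilon_M$. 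The choices $M=16\log(T)/(1-2\eta)^2$ and $\Delta=(1-2\eta)/2$ make $2\varepsilon_M<\Delta$ and the single-update failure probability at most $T^{-2}$; a union bound over the at most $T$ updates gives $P(\mathcal{E}^{c})\le 1/T$, hence $\mathbb{E}(\MCE)\le 1$.

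\emph{Abstention error.} I would cut the run into epochs separated by the updates. In an epoch the version space, its disagreement interval $\DIS$ and the mass $\rho:=\mathcal{D}(\DIS)$ are fixed; the learner abstains on exactly the points landing in $\DIS$, paying only for the i.i.d.\ ones. Because a fresh i.i.d.\ $x_t$ is independent of the current disagreement interval, $\mathbb{E}(\AbE)=\sum_{\text{i.i.d.\ }t}\mathbb{E}[\rho_{j(t)}]$, where $\rho_{j(t)}$ is the mass in force at round $t$. The heart of the bound is a geometric contraction of $\rho$: on the good event of the first part, together with uniform control of the empirical split-masses along $\DIS$, each update confines the surviving thresholds to a sub-interval of $\DIS$ whose $\mathcal{D}$-mass is at most a constant fraction — which the tuning fixes at $2/3$ — of $\rho$. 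Granting this, and observing that each epoch collects $M$ points in its $\DIS$, the quantitative bound follows from a renewal/time-accounting argument: once the mass has dropped to $\rho\le(2/3)^{k}$ it takes $\asymp M(3/2)^{k}$ rounds before $M$ further points accumulate in $\DIS$, so the epochs tile the horizon geometrically, at most $\log_{3/2}\!\big(T/(12M)+3/2\big)$ of them fit in $T$ rounds, and an epoch of length $\asymp 6M(3/2)^{k}$ contains $\asymp \rho\cdot 6M(3/2)^{k}\le 6M$ i.i.d.\ points of $\DIS$, i.e.\ contributes at most $6M$ to $\AbE$. Multiplying, $\mathbb{E}(\AbE)\le 6M\cdot\frac{\log(T/(12M)+3/2)}{\log(3/2)}+1$, the $+1$ covering the truncated last epoch and geometric-series rounding.

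\emph{Main obstacle.} The delicate point is the contraction step, doubly so. First it is a two-sided requirement (points~\ref{trade-off:Mis} and~\ref{trade-off:Abs}): $M$ and $\Delta$ must simultaneously be liberal enough to guarantee that every update deletes a constant fraction of $\DIS$ in $\mathcal{D}$-mass and conservative enough that no update ever deletes $f^*$, and it is precisely balancing these that forces $M=16\log(T)/(1-2\eta)^2$ and $\Delta=(1-2\eta)/2$. Second, one must show the contraction survives a clean-label adversary that can pack $\DIS$ with injected points: here the key structural facts are that an injected batch point can only \emph{raise} the empirical error of thresholds that separate it from $f^*$ (so it can only help eliminate far-away hypotheses), that the i.i.d.\ portion of a batch is distributed as $\mathcal{D}$ restricted to $\DIS$ — using that the adversary commits to injecting before seeing the i.i.d.\ draw — and hence suffices, by exchangeability, to force the shrink, and that every abstention charged to an injected point is free and never enters $\AbE$. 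Turning these into the quantitative contraction compatible with the fixed batch size $M$ is where the real work of the proof lies.
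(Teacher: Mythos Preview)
Your misclassification argument is essentially the paper's: on the event $\mathcal{E}$ that $f^*$ survives every update one gets $\MCE\le 0$, and a per-update Hoeffding bound plus union bound gives $P(\mathcal{E}^c)\le 1/T$. (The paper compares $f^*$ directly to the empirical minimiser rather than union-bounding over the $M{+}1$ thresholds, but this is cosmetic.)

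For the abstention error, however, you have misread the algorithm and this creates a real gap in the contraction step. Algorithm~\ref{alg:agnostic_disagree} does \emph{not} wait for $M$ points in $\DIS(\mathcal{F}_{i-1})$; it partitions $\DIS(\mathcal{F}_{i-1})=[a_i,b_i]$ into three sub-intervals of equal $\mathcal{D}$-mass and waits until $M$ points fall in the \emph{middle third} $D_{i-1}=[c_i^-,c_i^+]$, then computes empirical errors on those $M$ points only. This is precisely the device that forces contraction against a clean-label adversary. Your proposed mechanism (``an injected batch point can only raise the empirical error of thresholds that separate it from $f^*$'', together with exchangeability of the i.i.d.\ portion) does not yield a mass contraction: if the adversary places all $M$ batch points in a tiny cluster near $a_i$, then every threshold to the right of the cluster has the \emph{same} empirical error as $f^*$ on the batch, nothing to the right is eliminated, and the $\mathcal{D}$-mass of $\DIS$ does not shrink by any constant factor. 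The middle-third rule defeats this: whatever the adversary does, the $M$ points sit in $[c_i^-,c_i^+]$, so any hypothesis with threshold in the outer third on the opposite side of $f^*$ disagrees with $f^*$ on \emph{all} $M$ batch points; a Hoeffding bound on the (label-only) randomness then eliminates that entire outer third, giving $\rho_1(\mathcal{F}_i)\le\tfrac{2}{3}\rho_1(\mathcal{F}_{i-1})$ with probability $1-1/T^2$ (Lemma~\ref{lem:drittel}). The time-accounting then uses that an epoch needs $M$ points in a region of mass $\rho/3$, not $\rho$, which is where the factor $6M$ (rather than your $M$) comes from; the paper also justifies replacing the adversarial stream by the all-i.i.d.\ stream via a reduction argument (adversarial points only accelerate reaching $M$ middle-third points and hence only reduce $\AbE$).
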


Note that we get the following asymptotics in terms of $T$ when ignoring the noise terms: $\mathbb{E}(\MCE)\in \mathcal{O}(1)$ and $\mathbb{E}(\AbE)\in  {\mathcal{O}(\log^2(T))}$. Comparing this with results in the realizable setting (refer to Theorem \ref{theo:mis_abs_thresh}), shows that the bounds we achieve for the noisy problem are not significantly worse than those in the realizable setting.


\begin{algorithm}[H]
    \caption{Disagreement-based learning in the agnostic setting for thresholds $\mathcal{F}=\{\mathbbm{1}_{\{\cdot\leq a\}}:a\in [0,1]\}$ with known marginal distribution $\mathcal{D}$ over $\mathcal{X}=[0,1]$ }\label{alg:agnostic_disagree}
    \begin{algorithmic}[1]
        \State Set $S_0=\emptyset, i=1, \mathcal{F}_0=\mathcal{F}$
        \State Set $a_0=\min\{x\in \mathcal{X}:x\in \DIS(\mathcal{F}_0)\}$, $b_0=\max\{x\in \mathcal{X}:x\in \DIS(\mathcal{F}_0)\}$
        \State Choose $c_0^-$ and $c_0^+$ such that \par
        $P_{x\sim \mathcal{D}}(x\in [a_0,c_0^-])=P_{x\sim \mathcal{D}}(x\in [c_0^-,c_0^+])=P_{x\sim \mathcal{D}}(x\in [c_0^+,b_0])=\frac{\rho_1(\mathcal{F}_{0})}{3}$
        \State Set $D_0=[c_0^-,c_0^+]$
        \For{$t=1, \dots, T$}
        \State Receive $\hat{x}_t\in \mathcal{X}$
        \If{$\hat{x}_t\in \DIS(\mathcal{F}_{i-1})$}
        \State $\hat{y}_t=\perp$
        \Else 
        \State Predict $\hat{y}_t=f(\hat{x}_t)$ according to some $f\in\mathcal{F}_{i-1}$
        \EndIf        
        \State Receive the true label $y_t$ 
        \State Update $S_{i-1}\gets S_{i-1}\cup \{(\hat{x}_t,y_t)\}$
        \If{$|S_{i-1}\cap D_{i-1}|=M$} \label{alg_line:when_update}
        \State Set $D_{i-1}^M=S_{i-1}\cap D_{i-1}$
        \State Update version space such that error is minimal \par $\mathcal{F}_{i}\gets\{h\in \mathcal{F}_{i-1}: \text{er}_{D_{i-1}^M}(h)- \min_{f\in \mathcal{F}_{i-1}}\text{er}_{D_{i-1}^M}(f)\leq \Delta \}$ \label{alg_line:update}
        \State Set $S_i \gets \emptyset$
        \State Set $a_i=\min\{x\in \mathcal{X}:x\in \DIS(\mathcal{F}_i)\}$, $b_0=\max\{x\in \mathcal{X}:x\in \DIS(\mathcal{F}_i)\}$
        \State Choose $c_i^-$ and $c_i^+$ such that \par
        $P_{x\sim \mathcal{D}}(x\in [a_i,c_i^-])=P_{x\sim \mathcal{D}}(x\in [c_i^-,c_i^+])=P_{x\sim \mathcal{D}}(x\in [c_i^+,b_i])=\frac{\rho_1(\mathcal{F}_{i})}{3}$
        \State Set $D_i=[c_i^-,c_i^+]$
        \State Update $i\gets i+1$
        \EndIf
        \EndFor
    \end{algorithmic}
\end{algorithm}

Before we begin proving Theorem \ref{theo:mis_abs_noisy}, we present some results crucial for our argumentation. The proofs are omitted, as these are well-known results.

\begin{lemma}[Hoeffding's inequality]\label{lem:hoeff}
    Let $X_1,\dots,X_n$ be independent random variables with $a_i\leq X_i\leq b_i$, for all $i\in \{1,\dots,n\}$ almost surely.\\
    Then for all $t>0$, the following inequalities hold
    $$P(\sum_{i=1}^n (X_i - \mathbb{E}(X_i) )\geq t)\leq \exp(-\frac{2t^2}{\sum_{i=1}^n(b_i-a_i)^2})$$
    and 
    $$P(\sum_{i=1}^n (X_i - \mathbb{E}(X_i) )\leq  -t)\leq \exp(-\frac{2t^2}{\sum_{i=1}^n(b_i-a_i)^2}).$$
\end{lemma}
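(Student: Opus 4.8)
The plan is to prove the upper-tail bound via the standard Chernoff (exponential moment) method and then obtain the lower-tail bound by symmetry. First I would center the variables, writing $Y_i = X_i - \mathbb{E}(X_i)$, so that each $Y_i$ has mean zero and satisfies $Y_i \in [a_i - \mathbb{E}(X_i),\, b_i - \mathbb{E}(X_i)]$, an interval of width exactly $b_i - a_i$. For any parameter $s > 0$, applying Markov's inequality to the nonnegative random variable $\exp(s \sum_i Y_i)$ gives
$$P\Big(\sum_{i=1}^n Y_i \geq t\Big) \leq e^{-st}\, \mathbb{E}\Big[\exp\Big(s\sum_{i=1}^n Y_i\Big)\Big].$$
Since the $X_i$ (hence the $Y_i$) are independent, the exponential of the sum factorizes, yielding $\mathbb{E}[\exp(s\sum_i Y_i)] = \prod_{i=1}^n \mathbb{E}[\exp(s Y_i)]$, which reduces the problem to bounding each single-variable moment generating function separately.

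The key step, and the main obstacle, is \emph{Hoeffding's lemma}: for a zero-mean random variable $Y$ with $Y \in [a,b]$ almost surely, one has $\mathbb{E}[\exp(sY)] \leq \exp(s^2(b-a)^2/8)$. I would prove this by exploiting convexity of $y \mapsto e^{sy}$: for $y \in [a,b]$ the chord bound $e^{sy} \leq \frac{b-y}{b-a}e^{sa} + \frac{y-a}{b-a}e^{sb}$ holds, and taking expectations (using $\mathbb{E}(Y)=0$ to kill the linear term) bounds $\mathbb{E}[e^{sY}]$ by $e^{\varphi(u)}$, where $u = s(b-a)$, $p = -a/(b-a)$, and $\varphi(u) = -pu + \log(1 - p + pe^u)$. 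The crux is then a Taylor expansion of $\varphi$: verifying $\varphi(0)=0$, $\varphi'(0)=0$, and $\varphi''(u) = q(u)\big(1-q(u)\big) \leq 1/4$ (where $q(u) = pe^u/(1-p+pe^u) \in [0,1]$, so the elementary bound $x(1-x)\leq 1/4$ applies) gives $\varphi(u) \leq u^2/8$ via the Lagrange form of the remainder, which is exactly the claim.

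Applying this lemma to each $Y_i$ (whose enclosing interval has width $b_i - a_i$) and combining with the factorization gives
$$P\Big(\sum_{i=1}^n Y_i \geq t\Big) \leq \exp\Big(-st + \frac{s^2}{8}\sum_{i=1}^n (b_i - a_i)^2\Big).$$
Finally I would optimize the free parameter $s$: the exponent on the right is a convex quadratic in $s$ minimized at $s^* = 4t/\sum_{i=1}^n (b_i-a_i)^2 > 0$, and substituting $s^*$ collapses the bound to $\exp\big(-2t^2/\sum_{i=1}^n (b_i-a_i)^2\big)$, establishing the upper-tail inequality. The lower-tail inequality then follows immediately by applying the upper-tail result to the variables $-X_i$, which satisfy $-b_i \leq -X_i \leq -a_i$ so that the interval widths $b_i - a_i$ are unchanged; indeed $\{\sum_i (-X_i + \mathbb{E}(X_i)) \geq t\}$ is the same event as $\{\sum_i (X_i - \mathbb{E}(X_i)) \leq -t\}$, which yields the second stated bound with the identical right-hand side.
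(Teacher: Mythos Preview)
Your proof is correct and follows the standard Chernoff--Hoeffding route (Markov's inequality on the exponential moment, Hoeffding's lemma via convexity and a second-order Taylor bound on $\varphi$, then optimization in $s$ and symmetry for the lower tail). The paper, however, does not prove this lemma at all: it is stated among ``results crucial for our argumentation'' whose ``proofs are omitted, as these are well-known results.'' So there is nothing to compare against beyond noting that you have supplied exactly the classical argument the paper chose to skip.
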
 

\begin{lemma}[Union Bound]\label{lem:union_bound}
    Let $A_1,\dots,A_n$ be a collection of events.\\
    Then the following holds:
    $$P(\bigcup_{i=1}^n A_i)\leq \sum_{i=1}^n P(A_i).$$
\end{lemma}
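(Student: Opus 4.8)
The plan is to reduce the statement to the two basic properties of a probability measure: finite additivity on disjoint sets and monotonicity (the latter itself following from additivity and nonnegativity). The cleanest route avoids the inclusion--exclusion correction terms entirely by first replacing the $A_i$ with a pairwise disjoint family covering the same union.

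First I would define the disjointified events $B_1 := A_1$ and, for $2 \le k \le n$,
$$B_k := A_k \setminus \bigcup_{j=1}^{k-1} A_j.$$
I would then verify two structural facts: (i) the $B_k$ are pairwise disjoint, since for $k < \ell$ the set $B_\ell$ excludes by construction every point of $A_k \supseteq B_k$; and (ii) $\bigcup_{k=1}^n B_k = \bigcup_{k=1}^n A_k$, because each $B_k \subseteq A_k$ gives the inclusion $\subseteq$, while conversely any point lying in the union of the $A_i$ lies in some $A_k$ and hence belongs to $B_k$ for the \emph{smallest} such index $k$.

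With these in hand the inequality follows in two steps: by finite additivity on the disjoint family and then monotonicity $B_k \subseteq A_k \Rightarrow P(B_k) \le P(A_k)$,
$$P\Bigl(\bigcup_{i=1}^n A_i\Bigr) = P\Bigl(\bigcup_{k=1}^n B_k\Bigr) = \sum_{k=1}^n P(B_k) \le \sum_{k=1}^n P(A_k).$$

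An alternative worth mentioning is a short induction on $n$: the base case $n=1$ is an equality, and the inductive step applies the two-set identity $P(A \cup B) = P(A) + P(B) - P(A \cap B) \le P(A) + P(B)$ with $A = \bigcup_{i=1}^{n-1} A_i$ and $B = A_n$, discarding the nonnegative term $P(A \cap B)$. There is no genuine obstacle in either argument; the only point requiring a moment of care is the second structural fact in the disjointification approach --- that no point of the union is lost --- which is why I would state the ``smallest index'' selection explicitly rather than leave it implicit.
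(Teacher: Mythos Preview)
Your proof is correct; both the disjointification argument and the inductive alternative are standard and valid. The paper itself omits the proof entirely, stating that it is a well-known result, so there is no approach to compare against.
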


The following two Lemmas \ref{lem:not_remove_target} and \ref{lem:drittel} will determine the choice of $\Delta$ and $M$.

\begin{lemma} \label{lem:not_remove_target}
    Let $\mathcal{F}=\{\mathbbm{1}_{\{\cdot\leq a\}}:a\in [0,1]\}$ be the class of thresholds on the interval $\mathcal{X}=[0,1]$ and $\mathcal{D}$ some known marginal distribution over $\mathcal{X}$. Furthermore, we have the true labelling function $f^*\in \mathcal{F}$ and Random Classification Noise (see Section \ref{sec:noise}) with parameter $\eta<1/2$. We have time horizon $T\in \mathbb{N}^+$ and a given round $i\in \mathbb{N}^+$.\\
    Then, the update step \ref{alg_line:update} in round $i$ of Algorithm \ref{alg:agnostic_disagree} does not remove the target hypothesis $f^*\in \mathcal{F}$ with probability at least $1-1/T^2$ if $\Delta>2\sqrt{\frac{\log(T)}{M}}+(2\eta-1)$. This means with probability at least $1-1/T^2$ the following holds
    $$\er_{D_{i-1}^M}(f^*)-\min_{f\in \mathcal{F}_{i-1}}\er_{D_{i-1}^M}(f)<\Delta.$$
\end{lemma}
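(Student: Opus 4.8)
The plan is to show that on the collected sample $D_{i-1}^M$ of size $M$, the empirical error of the target $f^*$ cannot exceed the minimal empirical error in $\mathcal{F}_{i-1}$ by more than $\Delta$, except with small probability. First I would set up notation: write the $M$ points in $D_{i-1}^M$ as $(z_1,w_1),\dots,(z_M,w_M)$, where the $z_j$ are the inputs that landed in $D_{i-1}$ and the $w_j$ their (noisy) labels. The crucial structural observation is that, conditioned on the event that a point falls into $D_{i-1}$ and on its position, the label is still governed by the RCN channel: $w_j = f^*(z_j)$ with probability $1-\eta$ and $w_j = 1-f^*(z_j)$ with probability $\eta$, independently across $j$. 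Hence $\er_{D_{i-1}^M}(f^*) = \frac{1}{M}\sum_{j=1}^M \mathbbm{1}_{\{w_j \neq f^*(z_j)\}}$ is an average of $M$ i.i.d. Bernoulli$(\eta)$ variables, so $\mathbb{E}(\er_{D_{i-1}^M}(f^*)) = \eta$.

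Next I would control $\min_{f\in\mathcal{F}_{i-1}}\er_{D_{i-1}^M}(f)$ from below. Rather than a uniform-convergence argument over all of $\mathcal{F}_{i-1}$, the cleanest route here is to note that for \emph{any} fixed $f\in\mathcal{F}_{i-1}$, on the set of points where $f$ and $f^*$ agree the contribution to $\er_{D_{i-1}^M}(f)$ equals that of $f^*$, while on points where they disagree $f$ is wrong exactly when $f^*$ is right. Writing $k$ for the number of disagreement points among the $M$ samples, one gets $\er_{D_{i-1}^M}(f) \geq \er_{D_{i-1}^M}(f^*) - \frac{1}{M}\sum_{j:\,f(z_j)\neq f^*(z_j)} \mathbbm{1}_{\{w_j = f^*(z_j)\}} + (\text{nonneg. term})$, and the subtracted sum is at most $\frac{1}{M}\sum_{j=1}^M \mathbbm{1}_{\{w_j = f^*(z_j)\}}$, an average of i.i.d. Bernoulli$(1-\eta)$. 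So it suffices to show $\frac{1}{M}\sum_{j=1}^M \mathbbm{1}_{\{w_j \neq f^*(z_j)\}} - \frac{1}{M}\sum_{j=1}^M \mathbbm{1}_{\{w_j = f^*(z_j)\}} = \frac{2}{M}\sum_j \mathbbm{1}_{\{w_j\neq f^*(z_j)\}} - 1$ is, with high probability, at most $\Delta - (2\eta-1)$; equivalently $\frac{1}{M}\sum_j \mathbbm{1}_{\{w_j\neq f^*(z_j)\}} - \eta \le \frac{1}{2}(\Delta - (2\eta-1)) + \eta - \eta = \tfrac12(\Delta-(2\eta-1))$ — I would reconcile the constants carefully here, but the point is that the event reduces to a Hoeffding deviation of the i.i.d. Bernoulli$(\eta)$ average $\er_{D_{i-1}^M}(f^*)$ above its mean $\eta$.

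Then I would apply Hoeffding's inequality (Lemma \ref{lem:hoeff}) to $\er_{D_{i-1}^M}(f^*)$: $P\big(\er_{D_{i-1}^M}(f^*) - \eta \ge s\big) \le \exp(-2Ms^2)$. Choosing $s = \sqrt{\log(T)/M}$ makes the right-hand side $\exp(-2\log T) = T^{-2}$, giving the claimed $1-1/T^2$ confidence. Tracking through the bound above, $\er_{D_{i-1}^M}(f^*) - \min_{f\in\mathcal{F}_{i-1}}\er_{D_{i-1}^M}(f) \le 2s + (2\eta-1) = 2\sqrt{\log(T)/M} + (2\eta-1)$ on that high-probability event, which is strictly less than $\Delta$ by the hypothesis $\Delta > 2\sqrt{\log(T)/M} + (2\eta-1)$.

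The main obstacle I anticipate is the conditioning subtlety: $M$ is not a fixed index but the (random) number of rounds needed to accumulate $M$ samples inside $D_{i-1}$, and $D_{i-1}$ itself depends on the history. I would handle this by arguing that, conditioned on the identity of the window $D_{i-1}$ (which is determined by $\mathcal{F}_{i-1}$, hence by data from previous \emph{completed} rounds and adversarial injections, all independent of the fresh noise on the current batch), the noise bits $\mathbbm{1}_{\{w_j\neq f^*(z_j)\}}$ for the $M$ in-window points are i.i.d. Bernoulli$(\eta)$ regardless of whether each point was i.i.d.\ or adversarially injected — this is exactly the clean-label-with-noise protocol (Protocol \ref{alg:protocol_agn}), where the noise is applied independently of the injection decision. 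So the Hoeffding bound applies conditionally, and then one takes expectations; the bound is uniform in the window, so no union over windows is needed within a single round $i$.
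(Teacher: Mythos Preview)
Your decomposition has a sign slip, but the real gap is the reduction to a single Hoeffding bound on $\er_{D_{i-1}^M}(f^*)$ alone. Writing $X_j := 2\,\mathbbm{1}_{\{w_j\neq f^*(z_j)\}} - 1 \in\{-1,+1\}$, the exact identity is
\[
\er_{D_{i-1}^M}(f^*) - \er_{D_{i-1}^M}(f) \;=\; \frac{1}{M}\sum_{j\in D_f} X_j,
\qquad D_f=\{j:f(z_j)\neq f^*(z_j)\},
\]
and your plan amounts to bounding the right-hand side, uniformly over $f\in\mathcal{F}_{i-1}$, by $\frac{1}{M}\sum_{j=1}^M X_j = 2\,\er_{D_{i-1}^M}(f^*)-1$. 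This inequality is \emph{false}: whenever some $X_j=-1$, the minimiser can choose its threshold so that $D_f$ avoids exactly those $j$, making $\sum_{j\in D_f}X_j>\sum_{j=1}^M X_j$. A concrete two-point counterexample: take $M=2$, $f^*(z_1)=f^*(z_2)=1$, $w_1=0$, $w_2=1$; then $\er_{D_{i-1}^M}(f^*)=\tfrac12$, but the threshold separating $z_1$ and $z_2$ achieves empirical error $0$, so the gap equals $\tfrac12$ while $2\,\er_{D_{i-1}^M}(f^*)-1=0$. The issue is not constants; the step ``extend the subtracted sum to all $j$'' moves the bound in the wrong direction once you keep track of which term is being dropped.

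The paper proceeds differently: it fixes the empirical minimiser $f_M^*$, restricts to its disagreement set $D_M^{f_M^*}$ of size $k=|D_M^{f_M^*}|$, and applies Hoeffding to $\sum_{j\in D_M^{f_M^*}} X_j$, a sum of $k$ i.i.d.\ $\{\pm1\}$-variables with mean $2\eta-1$, to get
\[
P\Bigl(\er_{D_{i-1}^M}(f^*)-\er_{D_{i-1}^M}(f_M^*)\geq \Delta\Bigr)
\;\le\; \exp\Bigl(-\tfrac{(M\Delta - k(2\eta-1))^2}{2k}\Bigr),
\]
and then argues this is at most $1/T^2$ once $\Delta>2\sqrt{\log(T)/M}+(2\eta-1)$ by optimising over $k\le M$. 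The point is that both the (negative) mean and the variance of the relevant sum scale with $k$, and it is this interplay that produces the threshold; bounding only by the full average $\er_{D_{i-1}^M}(f^*)$ throws that structure away. Your final paragraph on conditioning is correct and matches the paper's justification: conditional on the positions $z_1,\dots,z_M$ (which are determined by past rounds and the adversary, hence independent of the fresh noise on this batch), the noise bits $\mathbbm{1}_{\{w_j\neq f^*(z_j)\}}$ are i.i.d.\ Bernoulli$(\eta)$ by the clean-label protocol.
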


\begin{proof}
    We set $f_M^*:=\arg\min_{f\in \mathcal{F}_{i-1}}$ and $D_M^f:=\DIS(\{f^*,f\})\cap D_{i-1}^M$ which denotes the points, on which some $f\in\mathcal{F}_{i-1}$ and $f^*$ disagree. \\
    First note for all $t\in \{0,\dots,T\}$ that given $\hat{x}_t$ the labels $\hat{y}_t$ are drawn i.i.d. according to 
    $$\hat{y}_t=\begin{cases} f^*(\hat{x}_t) & \text{ with probability } 1-\eta \\
    1-f^*(\hat{x}_t) & \text{ with probability } \eta.
    \end{cases}$$
    Thus, when $\hat{x}_t\in D_M^{f_M^*}$, i.e. $f^*(\hat{x}_t)\neq f_M^*(\hat{x}_t)$ (or more explicitly $f^*(\hat{x}_t)=1-f_M^*(\hat{x}_t)$) the following holds
    $$\mathbbm{1}_{\{f^*(\hat{x}_t)=1-\hat{y}_t\}}-\mathbbm{1}_{\{f_M^*(\hat{x}_t)=1-\hat{y}_t\}}=\begin{cases} 1 & \text{ with probability } \eta \\
    -1 & \text{ with probability } 1-\eta\end{cases}$$
    which is a bounded random variable with values in $\{-1,1\}$. 
    Note that we assume the samples $\hat{x}_t$ are given and the randomness is over the i.i.d. labels $y_t$ (due to the clean-label adversary).\\
    Given $\hat{x}_t\notin D_M^{f_M^*}$, this difference is equal to zero, as $f^*$ and $f_M^*$ agree on $\hat{x}_t$. We can calculate the expectation $\mathbb{E}(\mathbbm{1}_{\{f^*(\hat{x}_t)=1-\hat{y}_t\}}-\mathbbm{1}_{\{f_M^*(\hat{x}_t)=1-\hat{y}_t\}})=1\cdot \eta -1\cdot (1-\eta)=2\eta-1 <0$ for $\hat{x}_t\in D_M^{f_M^*}$. Now we can bound the probability
    {\allowdisplaybreaks
    \begin{align*}
        P(\er_{D_{i-1}^M}(f^*)&-\er_{D_{i-1}^M}(f_M^*)\geq\Delta) =P(\frac{1}{M}\sum_{(\hat{x}_t,\hat{y}_t)\in D_{i-1}^M}\mathbbm{1}_{\{f^*(\hat{x}_t)=1-\hat{y}_t\}}-\mathbbm{1}_{\{f_M^*(\hat{x}_t)=1-\hat{y}_t\}}\geq \Delta)\\
        =&P(\sum_{(\hat{x}_t,\hat{y}_t)\in D_{i-1}^M}\mathbbm{1}_{\{f^*(\hat{x}_t)=1-\hat{y}_t\}}-\mathbbm{1}_{\{f_M^*(\hat{x}_t)=1-\hat{y}_t\}}\geq M\Delta)\\
        =&P(\sum_{(\hat{x}_t,\hat{y}_t)\in D_M^{f_M^*}}\mathbbm{1}_{\{f^*(\hat{x}_t)=1-\hat{y}_t\}}-\mathbbm{1}_{\{f_M^*(\hat{x}_t)=1-\hat{y}_t\}}\geq M\Delta)\\
        =&P(\sum_{(\hat{x}_t,\hat{y}_t)\in D_M^{f_M^*}}\mathbbm{1}_{\{f^*(\hat{x}_t)=1-\hat{y}_t\}}-\mathbbm{1}_{\{f_M^*(\hat{x}_t)=1-\hat{y}_t\}}-|D_M^{f_M^*}|(2\eta-1)\geq M\Delta-|D_M^{f_M^*}|(2\eta-1))\\
        \leq& \exp(-\frac{2(M\Delta-|D_M^{f_M^*}|(2\eta-1))^2}{4|D_M^{f_M^*}|})
    \end{align*}}
    where the inequality is due to Hoeffding's inequality \ref{lem:hoeff}, as $|D_M^{f_M^*}|(2\eta-1)$ is the expectation of the sum and $M\Delta-|D_M^{f_M^*}|(2\eta-1)>0$. We get that this is less than $1/T^2$ by choosing a $\Delta$ such that (note $|D_M^{f_M^*}|\leq M$) $$\frac{2}{M}\sqrt{|D_M^{f_M^*}|\log(T)}+\frac{|D_M^{f_M^*}|}{M}(2\eta-1)\leq 2\sqrt{\frac{\log(T)}{M}}+(2\eta-1)\leq \Delta.$$
    The claim then follows by taking the complement. 
\end{proof}

\begin{lemma}\label{lem:drittel}
    Let $\mathcal{F}=\{\mathbbm{1}_{\{\cdot\leq a\}}:a\in [0,1]\}$ be the class of thresholds on the interval $\mathcal{X}=[0,1]$ and $\mathcal{D}$ some known marginal distribution over $\mathcal{X}$. Furthermore, we have the true labelling function $f^*\in \mathcal{F}$ and Random Classification Noise (see Section \ref{sec:noise}) with parameter $\eta<1/2$. We have time horizon $T\in \mathbb{N}^+$ and round $i\in \mathbb{N}^+$. Furthermore we define $D_M^f:=\DIS(\{f^*,f\})\cap D_{i-1}^M$ the set of samples in the middle third of the disagreement region, on which some $f\in \mathcal{F}$ and the target $f^*$ disagree.\\
    Then, for the choice of $M=\log(T)\frac{16}{(1-2\eta)^2}$ we have that with probability at least $1-1/T^2$ that the update step \ref{alg_line:update} in round $i$ of Algorithm \ref{alg:agnostic_disagree} removes all hypothesis $f\in \mathcal{F}$ which disagree with $f^*$ on more than $M/2$ samples. This means, that with probability at least $1-1/T^2$ we get that for all $f\in \mathcal{F}$ with $|D_M^f|>M/2$ that 
    $$\er_{D_{i-1}^M}(f)-\min_{f\in \mathcal{F}_{i-1}}\er_{D_{i-1}^M}(f)\geq\Delta.$$
    We claim that this reduces the region of disagreement by at least 1/3 of its size.
\end{lemma}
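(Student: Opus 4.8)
The plan is to condition on the (possibly adversarial) samples $\hat x_1,\dots,\hat x_T$ and take probability only over the i.i.d.\ labels, exactly as in the proof of Lemma~\ref{lem:not_remove_target}. Fix a round $i$ and a hypothesis $f\in\mathcal F_{i-1}$ with $|D_M^f|=n>M/2$. Since $\min_{f'\in\mathcal F_{i-1}}\er_{D_{i-1}^M}(f')\le\er_{D_{i-1}^M}(f^*)$, it suffices to show $\er_{D_{i-1}^M}(f)-\er_{D_{i-1}^M}(f^*)\ge\Delta$ to conclude that $f$ is discarded in step~\ref{alg_line:update}. On each of the $n$ points of $D_M^f$ (where $f$ and $f^*$ disagree) the per-point error difference $\mathbbm 1_{\{f(\hat x_t)=1-\hat y_t\}}-\mathbbm 1_{\{f^*(\hat x_t)=1-\hat y_t\}}$ equals $+1$ with probability $1-\eta$ and $-1$ with probability $\eta$, hence is an i.i.d.\ variable in $\{-1,1\}$ of mean $1-2\eta$; on all other points of $D_{i-1}^M$ it is $0$. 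Thus $\er_{D_{i-1}^M}(f)-\er_{D_{i-1}^M}(f^*)=\tfrac1M\sum_{j=1}^n Z_j$ with $Z_j$ i.i.d.\ in $\{-1,1\}$ of mean $1-2\eta$, whose expectation $\tfrac{n(1-2\eta)}{M}$ strictly exceeds $\Delta=\tfrac{1-2\eta}{2}$ precisely because $n>M/2$.

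Next I would invoke Hoeffding's inequality (Lemma~\ref{lem:hoeff}) on the lower tail: with $\tau=n(1-2\eta)-M\Delta=(1-2\eta)(n-M/2)>0$,
\[
P\!\left(\er_{D_{i-1}^M}(f)-\er_{D_{i-1}^M}(f^*)<\Delta\right)\le\exp\!\left(-\frac{\tau^2}{2n}\right)=\exp\!\left(-\frac{(1-2\eta)^2(n-M/2)^2}{2n}\right).
\]
For the extreme case $n=M$ (all $M$ samples of the middle third land between $f$ and $f^*$) this equals $\exp(-(1-2\eta)^2M/8)$, which is exactly $T^{-2}$ for the choice $M=\tfrac{16\log T}{(1-2\eta)^2}$; this is the event that actually drives the shrinkage below. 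For the full assertion "all $f$ with $|D_M^f|>M/2$ are removed'' one combines this with a union bound (Lemma~\ref{lem:union_bound}): for the threshold class $\DIS(\{f^*,f_a\})=[\min(a,a^*),\max(a,a^*)]$, so intersecting with the $M$ points of $D_{i-1}^M$ produces only $O(M)$ distinct sets $D_M^{f}$, and for those whose size $n$ is a fixed constant factor above $M/2$ the exponent above comfortably dominates $2\log T+\log(2M)$.

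Finally I would derive the $1/3$ shrinkage deterministically on the resulting $(1-1/T^2)$ event. Write $\DIS(\mathcal F_i)=(\alpha,\beta]$ with $\alpha=\inf\{a:f_a\in\mathcal F_i\}$ and $\beta=\sup\{a:f_a\in\mathcal F_i\}$; since $f^*$ survives, $\alpha\le a^*\le\beta$. Let $n_\le$ and $n_\ge$ be the numbers of middle-third samples that are $\le a^*$ respectively $\ge a^*$, so $n_\le+n_\ge=M$. If $\alpha<c_{i-1}^-$ there is a surviving $f_a$ with $a<c_{i-1}^-$; a short case check on whether $a^*$ lies below, inside, or above $[c_{i-1}^-,c_{i-1}^+]$ shows $\DIS(\{f^*,f_a\})$ contains either all $M$ middle-third samples or at least $n_\le$ of them, so survival forces $n_\le\le M/2$; symmetrically $\beta>c_{i-1}^+$ forces $n_\ge\le M/2$. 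As $n_\le+n_\ge=M$ these cannot both be strict, hence $\alpha\ge c_{i-1}^-$ or $\beta\le c_{i-1}^+$, i.e. $\DIS(\mathcal F_i)\subseteq[c_{i-1}^-,b_{i-1}]$ or $\DIS(\mathcal F_i)\subseteq[a_{i-1},c_{i-1}^+]$, each of mass $\tfrac23\rho_1(\mathcal F_{i-1})$ by the trisection; therefore $\rho_1(\mathcal F_i)\le\tfrac23\rho_1(\mathcal F_{i-1})$. The main obstacle is the borderline configuration where $n_\le$ and $n_\ge$ are (near) $M/2$: there the Hoeffding exponent $(1-2\eta)^2(n-M/2)^2/(2n)$ is too weak to reach $T^{-2}$ and, correspondingly, neither outer third is guaranteed to be eliminated. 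I expect this to be handled either by choosing $M$ with a slightly larger constant so the union bound over the $O(M)$ relevant sets still closes, or by a small perturbation of the trisection points $c_{i-1}^{\pm}$ ruling out the exact tie, or by absorbing the degenerate sample configuration into the failure probability — and flagging precisely this point is, I think, the delicate part of the argument.
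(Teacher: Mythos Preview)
Your approach is the same as the paper's: replace the empirical minimizer by $f^*$ via $\er_{D_{i-1}^M}(f^*)\ge\min_{f'}\er_{D_{i-1}^M}(f')$, write the error difference as $\tfrac1M\sum Z_j$ with $Z_j\in\{-1,1\}$ i.i.d.\ of mean $1-2\eta$ supported on $D_M^f$, and apply Hoeffding's inequality. Your bound $\exp\!\bigl(-\tfrac{(1-2\eta)^2(n-M/2)^2}{2n}\bigr)$ is exactly what the paper obtains.

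Where you go further is in flagging the borderline configuration $n\approx M/2$. This is well-spotted: at that point the paper simply substitutes $|D_M^f|\to M$ in both numerator and denominator of the exponent, arriving at $\exp(-M(1-2\eta)^2/8)=T^{-2}$, but that substitution is \emph{not} monotone --- for $M/2<n<M$ one has $\tfrac{(n-M/2)^2}{2n}<\tfrac{M}{8}$, so the inequality chain as written in the paper does not hold for those $n$. The paper also does not carry out a union bound over the different $f$; your observation that for thresholds there are only $O(M)$ distinct sets $D_M^f$ is the right way to handle the ``for all $f$'' quantifier.

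Your deterministic derivation of the $1/3$-shrinkage is more explicit than the paper's, which simply asserts that classifiers with thresholds in the far outer third ``are exactly the ones which disagree with $f^*$ on more than $M/2$ points'' and leaves it at that. In the case the main theorem actually uses --- eliminating the outer third on the far side of $a^*$ --- one has $n=M$ whenever $a^*$ lies outside the middle third, and then a single Hoeffding event of probability $\le T^{-2}$ suffices (all such $f$ restrict identically to $D_{i-1}^M$, so no union bound is needed there). Your concern about the exact tie $n_\le\approx n_\ge\approx M/2$ when $a^*$ sits inside the middle third is legitimate and the paper does not address it; a constant-factor slack in $M$ or in the trisection points would be the cleanest fix, as you suggest.
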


\begin{proof}
    First, we define $f_M^*:=\arg\min_{f\in \mathcal{F}_{i-1}}$ and note $\er_{D_{i-1}^M}(f^*)-\er_{D_{i-1}^M}(f_M^*)\geq 0 $ which holds by definition of the minimal empirical error. Thus, giving us an upper bound
    {\allowdisplaybreaks
    \begin{align*}
        P(\er_{D_{i-1}^M}(f)-&\er_{D_{i-1}^M}(f_M^*)<\Delta)\\
        &=P(\er_{D_{i-1}^M}(f)-\er_{D_{i-1}^M}(f^*)+\er_{D_{i-1}^M}(f^*)- \er_{D_{i-1}^M}(f_M^*)<\Delta)\\
        &<P(\er_{D_{i-1}^M}(f)-\er_{D_{i-1}^M}(f^*) <\Delta).
    \end{align*}}
    We continue by bounding the probability that the difference in empirical error of $f$ and $f^*$ is small. As in the proof of Lemma \ref{lem:not_remove_target}, we first look at how the random labels differ in the disagreement region.
    Again we assume the samples $\hat{x}_t$ are given and the randomness is over the i.i.d. labels $y_t$ (due to the clean-label adversary).
    This time we get for a given $\hat{x}_t\in D_M^f$ that 
     $$\mathbbm{1}_{\{f(\hat{x}_t)=1-\hat{y}_t\}}-\mathbbm{1}_{\{f^*(\hat{x}_t)=1-\hat{y}_t\}}=\begin{cases} 1 & \text{ with probability } 1-\eta \\
    -1 & \text{ with probability } \eta \end{cases}$$ 
    which is again a bounded random variable with values in $\{-1,1\}$ but with expectation $\mathbb{E}(\mathbbm{1}_{\{f(\hat{x}_t)=1-\hat{y}_t\}}-\mathbbm{1}_{\{f^*(\hat{x}_t)=1-\hat{y}_t\}})=1\cdot(1-\eta)-1\cdot\eta=1-2\eta>0$, given $\hat{x}_t\in D_M^f$ (else the difference is equal to 0). Similar as in the previous Lemma, we can now bound the probability of this difference being small with a Hoeffding  bound (see Lemma \ref{lem:hoeff}).
    {\allowdisplaybreaks
    \begin{align*}
        P(\er_{D_{i-1}^M}(f)&-\er_{D_{i-1}^M}(f_*)<\Delta) =P(\frac{1}{M}\sum_{(\hat{x}_t,\hat{y}_t)\in D_{i-1}^M}\mathbbm{1}_{\{f(\hat{x}_t)=1-\hat{y}_t\}}-\mathbbm{1}_{\{f^*(\hat{x}_t)=1-\hat{y}_t\}}<\Delta)\\
        =&P(\sum_{(\hat{x}_t,\hat{y}_t)\in D_{i-1}^M}\mathbbm{1}_{\{f(\hat{x}_t)=1-\hat{y}_t\}}-\mathbbm{1}_{\{f^*(\hat{x}_t)=1-\hat{y}_t\}}< M\Delta)\\
        =&P(\sum_{(\hat{x}_t,\hat{y}_t)\in D_M^{f}}\mathbbm{1}_{\{f(\hat{x}_t)=1-\hat{y}_t\}}-\mathbbm{1}_{\{f^*(\hat{x}_t)=1-\hat{y}_t\}}< M\Delta)\\
        =&P(\sum_{(\hat{x}_t,\hat{y}_t)\in D_M^{f}}\mathbbm{1}_{\{f(\hat{x}_t)=1-\hat{y}_t\}}-\mathbbm{1}_{\{f^*(\hat{x}_t)=1-\hat{y}_t\}}-|D_M^{f}|(1-2\eta)< M\Delta-|D_M^{f}|(1-2\eta))\\
        \leq& \exp(-\frac{2(M\Delta-|D_M^{f}|(1-2\eta))^2}{4|D_M^{f}|})\\
        \leq& \exp(-\frac{(M\Delta-M(1-2\eta))^2}{2M})=\exp(-M\frac{(\Delta-(1-2\eta))^2}{2})
    \end{align*}}
    Note, that in order to apply the Hoeffding bound, it must hold that $$M\Delta-|D_M^f|(1-2\eta)<0 \Leftrightarrow \Delta<\frac{|D_M^f|(1-2\eta)}{M}$$ which gives us a condition on the choice of $\Delta$. If we choose $\Delta=\frac{1-2\eta}{2}$ then the above condition is only satisfied, if $|D_M^f|>M/2$, which holds exactly for the hypothesis, we aim to eliminate. \\
    Plugging in this choice for $\Delta$ gives an upper bound of $\exp(-M\frac{(1-2\eta)^2}{8})$. Furthermore, for $M=\log(T)\frac{16}{(1-2\eta)^2}$ we get the desired $1/T^2$ and taking the complement establishes that with high probability, we eliminate all $f\in\mathcal{F}_{i-1}$ for which $|D_M^f|>M/2$.\\
    
    Note that the goal is to eliminate those hypothesis from the version space, which are far from the target and make a lot of mistakes. Due to the construction of $D_{i-1}$ being the middle third of the disagreement region and the hypothesis class being thresholds, the target's $f^*$ true threshold has to fall in the $\DIS(\mathcal{F}_{i-1})$ and at least $M/2$ samples are either on the left or on the right of this threshold. See Figure \ref{fig:thirds} . We want to eliminate those classifiers $f\in \mathcal{F}_{i-1}$, which have their threshold in the left- or rightmost third of the disagreement region, depending which one is further away from the true threshold. And those are exactly the ones which disagree with $f^*$ on more than $M/2$ points. Therefore, with each update, we eliminate at least those classifiers, which have their threshold in the left- or rightmost third, effectively reducing the region of disagreement by $1/3$ of its probability mass with high probability. This means with probability $1-1/T^2$ we get 
    \begin{equation*}       
    \rho_1(\mathcal{F}_{i-1})\frac{2}{3}\geq \rho_1(\mathcal{F}_{i}).
    \end{equation*}
    
\end{proof}

\begin{remark}
    We want to give a bit more intuition about the partitioning of the disagreement region in thirds in Algorithm \ref{alg:agnostic_disagree}. This particular design is such that we can guarantee that with each update, we eliminate a least one third of the disagreement region:\\
    Note that in round $i$ of the algorithm, the three intervals $[a_i,c_i^-], [c_i^-,c_i^+]$ and $[c_i^+,b_i]$ cover the disagreement region $\DIS(\mathcal{F}_{i-1})$ and they are chosen, as such to have the same probability mass. The learner waits with the update of $\mathcal{F}_{i-1}$, until $M$ points fell in the interval $[c_i^-,c_i^+]$ and based on the empirical errors on this set, eliminates hypothesis in Step \ref{alg_line:update}. Why do we require, that $M$ points (the set of labeled points $D_{i-1}^M$) fall in an interval which contains $1/3$ of the probability mass of the disagreement region?\\
    This is due to the fact, that we cannot guarantee that the samples are i.i.d. and thus cover the space according to the distribution $\mathcal{D}$. However, if we enforce that $M$ points fall in the interval $[c_i^-,c_i^+]$, due to the \textit{structure} of thresholds on the line, we can quantify, that a bad hypothesis disagrees with $f^*$ on at least $M/2$ samples. Like we argue in the proof of Lemma \ref{lem:drittel}, eliminating hypothesis like this, results in a reduction of the probability mass in the disagreement region by a third. \\
    Another option would be to have a fixed number of samples to be processed in every round, until we update, i.e. $|S_{i-1}|=M$ in Step \ref{alg_line:when_update}. However, then because of the adversarial examples, we cannot quantify easily by how much we shrink the probability mass in the region of disagreement with each update (note that in the case of i.i.d. samples, one can use uniform convergence). Introducing this cutting into thirds allows us to make the connection between removing bad predictors and the probability mass associated with removing them. \\
\end{remark}

\begin{figure}[H]
    \centering
\begin{tikzpicture}
  \begin{axis}[
    x=\XS*0.7,y=1.5cm,
    axis lines=middle, 
    axis y line=none,
    axis line style={-}, 
    xlabel={$x$}, 
    ylabel={$y$},
    xlabel style={at={(ticklabel* cs:1.0)}, anchor= west}, 
    ylabel style={at={(ticklabel* cs:1.0)}, anchor= south},
    ymin=-1, 
    ymax=2, 
    xmin=-2, 
    xmax=14, 
    xtick={0,4,8,12}, 
    xticklabels={$a_i$,$c_i^-$,$c_i^+$,$b_i$},
    major tick length=10pt, 
    minor tick length=5pt,
    tick style={black, thick},
  ]
    \addplot+[const plot, no marks, draw=black!70, thick] coordinates {(-2,0) (9,0) (9,1) (14,1)} node[below,pos=.85,black] {$f$};
   \addplot+[const plot, no marks, draw=black, thick] coordinates {(-2,0) (3,0) (3,1) (14,1)} node[above,pos=0.18,black] {$f^*$};

    \addplot [only marks,mark=*,mark size=2.5pt, mark options={fill=white}] table {
    4 1
    4.75 1
    5 1
    5.5 0
    6.25 1
    6.5 1
    7.0 1
    7.5 0
    7.75 1
};

    \draw[fill=black, opacity=0.1, draw=none] (8,0) rectangle (12,1);
     \draw [decorate,decoration={brace,amplitude=3pt,raise=2ex}]
  (0,1) -- (12,1) node[midway,yshift=+1.8em]{\small $\DIS(\mathcal{F}_{i-1})$};

  \end{axis}
  \end{tikzpicture}
  \begin{tikzpicture}
  \begin{axis}[
    x=\XS*0.7,y=1.5cm,
    axis lines=middle, 
    axis y line=none,
    axis line style={-}, 
    xlabel={$x$}, 
    xlabel style={at={(ticklabel* cs:1.0)}, anchor= west}, 
    ymin=-1, 
    ymax=2, 
    xmin=-2, 
    xmax=14, 
    xtick={0,4,8,12}, 
    xticklabels={$a_i$,$c_i^-$,$c_i^+$,$b_i$},
    major tick length=10pt, 
    minor tick length=5pt,
    tick style={black, thick},
  ]
    \addplot+[const plot, no marks, draw=black!50, thick] coordinates {(-2,0) (1,0) (1,1) (14,1)} node[below,pos=0.32,black] {$f$};
   \addplot+[const plot, no marks, draw=black, thick] coordinates {(-2,0) (8.5,0) (8.5,1) (14,1)} node[below,pos=0.83,black] {$f^*$};

    \addplot [only marks,mark=*,mark size=2.5pt, mark options={fill=white}] table {
    4 0
    4.75 0
    5 0
    5.5 1
    6.25 0
    6.5 0
    7.0 0
    7.5 1
    7.75 0
};

    \draw[fill=black, opacity=0.1, draw=none] (0,0) rectangle (4,1);
     \draw [decorate,decoration={brace,amplitude=3pt,raise=2ex}]
  (0,1) -- (12,1) node[midway,yshift=+1.8em]{\small $\DIS(\mathcal{F}_{i-1})$};

  \end{axis}
\end{tikzpicture}
\vspace{0.25cm}

  \begin{tikzpicture}
  \begin{axis}[
    x=\XS*0.7,y=1.5cm,
    axis lines=middle, 
    axis y line=none,
    axis line style={-}, 
    xlabel={$x$}, 
    xlabel style={at={(ticklabel* cs:1.0)}, anchor= west}, 
    ymin=-2, 
    ymax=2, 
    xmin=-2, 
    xmax=14, 
    xtick={0,4,8,12}, 
    xticklabels={$a_i$,$c_i^-$,$c_i^+$,$b_i$},
    major tick length=10pt, 
    minor tick length=5pt,
    tick style={black, thick},
  ]
    \addplot+[const plot, no marks, draw=black!50, thick] coordinates {(-2,0) (9,0) (9,1) (14,1)} node[below,pos=0.83,black] {$f$};
   \addplot+[const plot, no marks, draw=black, thick] coordinates {(-2,0) (6,0) (6,1) (14,1)} node[above,pos=0.24,black] {$f^*$};

    \addplot [only marks, mark=*, mark size=2.5pt,mark options={fill=white}] table {
    4 0
    4.75 0
    5 0
    5.5 1
    6.25 1
    6.5 1
    7.0 1
    7.5 0
    7.75 1
};

    \draw[fill=black, opacity=0.1, draw=none] (8,0) rectangle (12,1);    
    \draw [decorate,decoration={brace,amplitude=3pt,mirror,raise=4ex}]
  (0,0) -- (12,0) node[midway,yshift=-3em]{\small $\DIS(\mathcal{F}_{i-1})$};

  \end{axis}
  \end{tikzpicture}
  \begin{tikzpicture}
\node[anchor=east] at (1,0.2) {$\circ$};
\node[anchor=west] at (1.25,0.2) {\small Samples $\hat{x}_t$ in $[c_i^-,c_i^+]$ with noisy labels $y_t\in\{0,1\}$ collected in round $i$ };
\draw[fill=black, opacity=0.1, draw=none] (0,-0.4) rectangle (1,-0.1);
\node[anchor=west,align=left] at (1.25,-0.25) {\small Region where points with disagreement to $f^*$ \small is larger than $M/2$ points (out of $M$) };
\end{tikzpicture}
\vspace{0.25cm}
    \caption{An example of how the target threshold disagrees with 'bad' hypothesis on more than $M/2$ points, where $M=9$.
    From the examples of the first row, we see that if the target $f^*$ has its threshold within the first interval $[a_i,c_i^-]$, respectively the last $[c_i^+,b_i]$, then functions $f\in\mathcal{F}_{i-1}$ with their threshold in the last interval $[c_i^+,b_i]$, respectively the first  $[a_i,c_i^-]$, disagree with $f^*$ on \textit{all} $M=9$ points in $[c_i^-,c_i^+]$, i.e. $f(\hat{x}_t)\neq f(\hat{x}_t)$.\\
    In the second row we see an example, when the threshold of the target $f^*$ falls within the interval $[c_i^-,c_i^+]$. In this case, on at least one side of the threshold, there are than $M/2$ samples. For this example, we have $5>M/2$ samples right of the threshold. Thus, the goal is that all hypothesis with thresholds in the shaded area on the right, should be eliminated with high probability, as they would classify these $5$ points incorrectly (compared to the target). Note that 2 points are noisy (opposite label of $f^*$) in every example. }
    \label{fig:thirds}
\end{figure}

We are now ready to prove the main theorem, allowing us to bound the misclassification and abstention error: 

\begin{proof}[Proof of Theorem \ref{theo:mis_abs_noisy}]\label{proof:mis_abs_noisy}
    \textit{Misclassification error}: First, we note that for this choice of $M$ and $\Delta$, we get that $\Delta=\frac{1-2\eta}{2}>2\sqrt{\frac{\log(T)}{M}}+(2\eta-1)$, as $(2\eta-1)<0$.\\
    We have shown in Lemma \ref{lem:not_remove_target}, that for some fixed round $i$ we have
    \allowdisplaybreaks{\begin{align*}
        P(\er_{D_{i-1}^M}(f^*)-&\min_{f\in \mathcal{F}_{i-1}}\er_{D_{i-1}^M}(f)>\Delta)\\
        &\leq P(\er_{D_{i-1}^M}(f^*)-\min_{f\in \mathcal{F}_{i-1}}\er_{D_{i-1}^M}(f)>2\sqrt{\frac{\log(T)}{M}}+(2\eta-1))\leq 1/T^2
    \end{align*}}
    thus removing the target hypothesis in round $i$ with probability $1/T^2$.\\
    
    By a union bound (see Lemma \ref{lem:union_bound}), with probability at most $1/T$ there exists an $i\in\{1,\dots,T\}$ such that $\er_{D_{i-1}^M}(f^*)-\min_{f\in \mathcal{F}_{i-1}}\er_{D_{i-1}^M}(f)>\Delta$. Therefore, with probability $1-1/T$, we do not remove $f^*$ from the version space in any round $i$.\\
    Note, that in case $f^*\in \mathcal{F}_i$ for all $i$, we have $\MCE=0$, as if we choose to predict on any sample $\hat{x}_t$ in round $i$, then $\hat{x}_t\notin \DIS(\mathcal{F}_{i-1})$ and all functions in the version space agree on $\hat{x}_t$. We predict with some $f\in \mathcal{F}_{i-1}$ and as $f^*\in \mathcal{F}_{i-1}$ for all rounds $i$, $\hat{y}_t=f(\hat{x}_t)=f^*(\hat{x}_t)$. This gives for all $t$: $$\mathbbm{1}_{\{\hat{y}_t=1-y_t\}}-\mathbbm{1}_{\{f^*(\hat{x}_t)=1-y_t\}}=\mathbbm{1}_{\{f^*(\hat{x}_t)=1-y_t\}}-\mathbbm{1}_{\{f^*(\hat{x}_t)=1-y_t\}}=0$$
    which implies $\MCE=0$.
    As we can guarantee $f^*\in \mathcal{F}_i$ for all $i\in \{0,\dots,T\}$ with probability at least $1-1/T$, we get $P(\MCE=0)\geq 1-1/T$. \\
    Now a simple calculation using the law of total expectation shows (use abbreviation of misclassification error as \text{MisE}$_T$):
    \allowdisplaybreaks{
    \begin{align*}
        \mathbb{E}(\text{MisE}_T)&=P(\text{MisE}_T\leq 0)\cdot \mathbb{E}(\text{MisE}_T|\text{MisE}_T\leq 0) + 
        P(\text{MisE}_T> 0)\cdot \mathbb{E}(\text{MisE}_T|\text{MisE}_T> 0)\\
        &\leq  P(\text{MisE}_T >0)\cdot T= (1- P(\text{MisE}_T \leq 0))\cdot T\\
        &=(1- P(\text{MisE}_T=0)-P(\text{MisE}_T < 0))\cdot T\leq (1-(1-1/T))\cdot T=1/T\cdot T=1
    \end{align*}}
    where the first inequality follows as $\mathbb{E}(\text{MisE}_T|\text{MisE}_T\leq 0)\leq0$ and $\mathbb{E}(\text{MisE}_T|\text{MisE}_T> 0)\leq T$, as the misclassification error is at most $T$.\\
    
    \textit{Abstention error}:

    Given the observed samples $\{\hat{x}_1,\dots,\hat{x}_T\}$ until time $T$, we assume that among these, there have been $m$ samples drawn i.i.d. from $\mathcal{D}$: $\{\hat{x}_{i_1},\dots,\hat{x}_{i_m}\}$ with $0\leq m\leq T$ and $1\leq i_1\leq \dots \leq i_m\leq T$. For simplicity, we set $\Tilde{x}_j=\hat{x}_{i_j}$ for all $j\in\{1,\dots,m\}$ with the corresponding predictions $\Tilde{y}_j=\hat{y}_{i_j}$. Assume that when running the algorithm, we have undergone $N$ rounds of updates of $\mathcal{F}_i$, $N\leq T$. Denote the number of i.i.d. samples in round $i$ by $K_i$ for $i\in\{0,\dots,N\}$ and set $K_0=0$. Note, that $K_1,\dots, K_n$ as well as the number of rounds $N$ are random and depend on how many samples were needed, to get $M$ samples in the middle third of the disagreement region. \\

    Now we can rewrite:
    \allowdisplaybreaks{
    \begin{align*}
        \mathbb{E}(\AbE)&=\mathbb{E}(\sum_{t=1}^T\mathbbm{1}_{\{x_t \text{ was drawn i.i.d. from }\mathcal{D} \text{ and } \hat{y}_t=\perp\}})\\
        &=\mathbb{E}(\sum_{i=1}^m \mathbbm{1}_{\{\Tilde{y}_t=\perp\}})\\
        &=\mathbb{E}(\sum_{i=1}^N \sum_{j=1}^{K_i} \mathbbm{1}_{\{\Tilde{y}_{t_i +j}=\perp\}})\quad \text{ where } t_i=\sum_{l=1}^{i-1}K_l\\
        &=\mathbb{E}(\sum_{i=1}^N \sum_{j=1}^{K_i}    \mathbbm{1}_{\{ \Tilde{x}_{t_i +j} \in \DIS(\mathcal{F}_{i-1})\}})\\
        \end{align*}}

    To obtain an upper bound on the abstention error, we study its behaviour in the worst case, which is the setting for which all points are indeed i.i.d. (i.e. $m=T$) as we can abstain on adversarial points for free. To make this rigorous, we argue through a reduction argument. Assume we have two Algorithms $\mathcal{A}$ and $\mathcal{A}'$ which are given the same stream of data $\{\hat{x}_1,\dots, \hat{x}_T\}$. Assume the algorithm $\mathcal{A}'$ has access to an oracle which can identify which $m$ samples $\hat{x}_i$ were sampled i.i.d. from $\mathcal{D}$ and which were injected. The injected points are subsequently completely removed from consideration and $\mathcal{A}'$ is then equal to Algorithm \ref{alg:agnostic_disagree} but runs \textit{only} on the $m$ i.i.d. points, while $\mathcal{A}$ still processes all $T$ points. Now it holds
    \begin{equation}\label{eq:reduction}
    \mathbb{E}(\AbE \text{ of }\mathcal{A})\leq \mathbb{E}(\AbE \text{ of }\mathcal{A}').
    \end{equation}
    This is because of the following: As we only pay in abstention error if we abstain on i.i.d. points, both algorithms have abstention error at most $m$. However in $\mathcal{A}$, we have potentially more steps where we update the hypothesis space, as the adversarial points might only help to reach $M$ points in $[c_i^-,c_i^+]$ faster (or they have no influence at all).  With each update, the disagreement region shrinks by $1/3$ of its size and with that the possibility that an i.i.d point falls in $[c_i^-,c_i^+]$ also shrinks. Comparing the probability of $\Tilde{x}_j$ falling in the current disagreement region is thus smaller for Algorithm $\mathcal{A}$ than for $\mathcal{A}'$ for all $j\in\{1,\dots,m\}$. As we have $\mathbb{E}(\AbE)=\mathbb{E}(\sum_{i=1}^m \mathbbm{1}_{\{\Tilde{y}_t=\perp\}})=\sum_{i=1}^m P_{x\sim\mathcal{D}}(x \text{ falls in current disagreement region})$, the inequality from Equation \eqref{eq:reduction} holds. Furthermore, using $m=T$ gives an upper bound for the expected abstention error of $\mathcal{A}$, which is the quantity we want to bound.\\
    
    Next, we understand how $K_i$ determines the number of rounds $N$. Note that it has to hold that $\sum_{i=1}^N K_i=m = T$ (where we use $m=T$ by the previous argumentation). We want to find a lower bound on $K_i$ which guarantees that with high probability at least $M$ samples fell into the middle third in round $i$. 
    We assume that we wait until a \textit{fixed} amount of i.i.d. samples $\Tilde{K}_i$ is collected in each round. Then, we want to guarantee with high probability, that we induce an update of $\mathcal{F}_{i-1}$. By our choice of $M\in\mathcal{O}(\log(T))$, it is easy to see using a Hoeffding bound that given we wait until $\Tilde{K}_i=6M/\rho_1(\mathcal{F}_{i-1})$, then with probability $1-1/T^2$ there are $M$ points in $[c_i^-,c_i^+]$ (note that $\mathbb{E}(K_i)=3M/\rho_1(\mathcal{F}_{i-1})$).\\

    Assume that we wait until $\Tilde{K}_i=6M/\rho_1(\mathcal{F}_{i-1})$ i.i.d. samples are collected and then we perform the update. As we have seen, with high probability this guarantees that there are actually at least $M$ samples in $[c_i^-,c_i^+]$. By a union bound, we have on an event $E$ with probability $1-1/T$, that in all rounds there are at least $M$ samples in the middle third. Using Theorem \ref{lem:drittel} gives that on an event $G$ with probability $1-1/T$ (by a union bound) we have that $$\rho_1(\mathcal{F}_{i})\leq \frac{2}{3}\rho_1(\mathcal{F}_{i-1})\dots \leq (\frac{2}{3})^i$$ 
    for all rounds $i$, where we assume $\DIS(\mathcal{F}_0)=\mathcal{X}=1$. This gives on an event $E\cap G$ the following equation that we want to solve for the number of rounds $N$:
    \begin{equation}\label{eq:solve_N_2}
        \sum_{i=1}^N \frac{6M}{(\frac{2}{3})^i}=T.
    \end{equation}
     Solving this geometric sum for $N$ gives us $N=\frac{\log(T/(12M) + 3/2)}{\log(3/2)}$.\\
     On the event $(E\cap G)^C$ with probability less than $1/T$ we can find an upper bound for $N$ through using $\Tilde{K}_i\geq 6M$ the trivial bound $N\leq T/6M$.

This allows us to finally bound the abstention error by using the law of total expectation (note that $\Tilde{K}_i$ is fixed and thus not a random variable):
\allowdisplaybreaks{
    \begin{align*}
        \mathbb{E}(\sum_{i=1}^N& \sum_{j=1}^{\Tilde{K}_i} \mathbbm{1}_{\{\Tilde{y}_{t_i +j}=\perp\}})\\
        =\mathbb{E}(&\sum_{i=1}^N \sum_{j=1}^{\Tilde{K}_i} \mathbbm{1}_{\{\Tilde{y}_{t_i +j}=\perp\}}|E\cap G)\cdot P(E\cap G)\\
        &+\mathbb{E}(\sum_{i=1}^N \sum_{j=1}^{\Tilde{K}_i} \mathbbm{1}_{\{\Tilde{y}_{t_i +j}=\perp\}}|(E\cap G)^C)\cdot P((E\cap G)^C)\\
        &\leq \sum_{i=1}^{\frac{\log(T/12M + 3/2)}{\log(3/2)}}\sum_{j=1}^{\Tilde{K}_i} \mathbb{E}(\mathbbm{1}_{\{\Tilde{y}_{t_i +j}=\perp\}})+\sum_{i=1}^{T/(6M)}\sum_{j=1}^{\Tilde{K}_i} \mathbb{E}(\mathbbm{1}_{\{\Tilde{y}_{t_i +j}=\perp\}})\cdot 1/T\\
        &=\sum_{i=1}^{\frac{\log(T/12M + 3/2)}{\log(3/2)}}\Tilde{K}_i\cdot\rho_1(\mathcal{F}_{i-1})+\sum_{i=1}^{T/(6M)}\Tilde{K}_i\cdot\rho_1(\mathcal{F}_{i-1})\cdot 1/T\\
        &\leq \frac{\log(T/12M + 3/2)}{\log(3/2)}\cdot 6M + \frac{T}{6M}\cdot 6M\cdot \frac{1}{T}\\
        &= 6M\cdot \frac{\log(T/12M + 3/2)}{\log(3/2)}+1.
    \end{align*}
    }
where we used that on $E\cap G$, respective $(E\cap G)^C$, we have found an upper bound on $N$. Moreover, $P(E\cap G)\leq 1$ and $P((E\cap G)^C)\leq 1/T$.

\end{proof}

\begin{remark}
    The extension from Random Classification Noise to Massart Noise is straight-forward. To recall, the label of $\hat{x}_t$ is given by $f^*(\hat{x}_t)$ with probability $1-\eta(\hat{x}_t)$ and $1-f^*(\hat{x}_t)$ with probability $\eta(\hat{x}_t)$, where $\eta(x)\leq \eta<1/2$. Thus we can use the upper bound $$2\eta(x)-1\leq 2\eta-1$$ and use this throughout the proofs of Lemma \ref{lem:not_remove_target} and \ref{lem:drittel}, which will give the same bounds on the choice of $M$ and $\Delta$.
\end{remark}

\begin{remark}
    Using the Structure Theorem \ref{theo:structure} one could exploit the linear structure of the disagreement region for general VC classes 1 and derive a similar disagreement-based method, as we have presented in Algorithm \ref{alg:agnostic_disagree}. Note that we do rely on this linear order of the disagreement region quite heavily in this algorithm (especially in the partition into thirds and the guarantee of not eliminating $f^*$). An  extension to general VC dimension $d>1$ would require a different approach due to increased complexity.
\end{remark}

\newpage

\subsection{Beyond Disagreement-Based Learning for Thresholds }\label{sec:beyond_disagree}

The previous example of disagreement-based learning of thresholds in one dimension, provides us with an understanding of the added difficulties, introduced by a clean-label adversary. The primary difficulty lies in updating the version space. Following a similar strategy as demonstrated in \cite{goel2023AdversarialResilienceSequential}, transitioning from a basic disagreement-based learner to the framework presented in Algorithm \ref{alg:gen_vc}, is our goal in this agnostic setting as well. In Algorithm \ref{alg:agnostic_disagree_ver2} we draft a possible version of this algorithm, which still needs some adaptions in order to guarantee desirable bounds on misclassification and abstention error. This will be discussed below. 

\begin{algorithm}[H]
    \caption{Beyond disagreement-based learning in the agnostic setting for thresholds $\mathcal{F}=\{\mathbbm{1}_{\{\cdot\leq a\}}:a\in [0,1]\}$ with known marginal distribution $\mathcal{D}$ over $\mathcal{X}=[0,1]$}\label{alg:agnostic_disagree_ver2}
    \begin{algorithmic}[1]
        \State Set $S_0=\emptyset, i=1, \mathcal{F}_0=\mathcal{F}$
        \For{$t=1, \dots, T$}
        \State Receive $\hat{x}_t\in \mathcal{X}$

        \If{$\rho_1(\mathcal{F}_{i-1})\geq \alpha$}
        
        \If{$\min(\rho_1(\mathcal{F}_{i-1}^{\hat{x}_t\rightarrow 0}),\rho_1(\mathcal{F}_{i-1}^{\hat{x}_t\rightarrow 1}))\geq 0.6\rho_1(\mathcal{F}_{i-1})$} \label{alg:step_abstain_agno}
        \State Predict $\hat{y}_t=\perp$
        \Else 
        \State Predict $\hat{y}_t=\arg \max _{j\in \{0,1\}}\rho_1(\mathcal{F}_{i-1}^{\hat{x}_t\rightarrow j})$\label{alg:step_predict_agno}
        \EndIf         

        \Else 
        \If{$\hat{x}_t\in\DIS(\mathcal{F}_{i-1})$}
        \State Predict $\hat{y}_t=\perp$
        \Else
        \State Predict $\hat{y}_t=f(\hat{x}_t)$ according to some $f\in\mathcal{F}_{i-1}$
        \EndIf
        \EndIf
        
        \State Receive the true label $y_t$ 
        \State Update $S_{i-1}\gets S_{i-1}\cup \{(\hat{x}_t,y_t)\}$
        \If{$|S_{i-1}|=M$}
        \State Update version space such that error is minimal 
        \par $\mathcal{F}_{i}\gets\{h\in \mathcal{F}_{i-1}: \text{er}_{S_{i-1}}(h)- \min_{f\in \mathcal{F}_{i-1}}\text{er}_{S_{i-1}}(f)\leq \Delta \}$
        \State Set $S_i \gets \emptyset$
        \State Update $i\gets i+1$
        \EndIf
        \EndFor
    \end{algorithmic}

\end{algorithm}

This Algorithm differs from Algorithm \ref{alg:agnostic_disagree} in the abstention and prediction steps (see Line \ref{alg:step_abstain_agno} and \ref{alg:step_predict_agno}) when $\rho_1(\mathcal{F}_{i-1})$ is larger than some constant $\alpha$. This is based on the design of Algorithm \ref{alg:gen_vc}, which as shown in Section \ref{sec:results_real}, is in general VC dimension $d$ a robust learning algorithm. For learning thresholds on the interval $[0,1]$ the following holds:
\begin{itemize}
    \item If $\hat{x}_t \notin\DIS(\mathcal{F}_{i-1})$, then $\exists j\in\{0,1\}$ such that $\rho_1(\mathcal{F}_{i-1}^{\hat{x}_t\rightarrow 1-j})=0$, where $f(\hat{x}_t)=j$ for all $f\in \mathcal{F}_{i-1}$. This is because if all functions agree on $\hat{x}_t$, there exists no function $g\in\mathcal{F}_{i-1}$ such that $g(\hat{x}_t)=1-j$, therefore $\mathcal{F}_{i-1}^{\hat{x}_t\rightarrow 1-j}=\emptyset$. Thus, as long as $\rho_1(\mathcal{F}_{i-1}^{\hat{x}_t\rightarrow j})=\rho_1(\mathcal{F}_{i-1})>0=\rho_1(\mathcal{F}_{i-1}^{\hat{x}_t\rightarrow 1-j})$, we choose to predict with the consistent label and incur no misclassification error, if the target $f^*$ is in $\mathcal{F}_{i-1}$.
    \item If $\hat{x}_t \in \DIS(\mathcal{F}_{i-1})$ and $\rho_1(\mathcal{F}_{i-1}) \geq \alpha$, then due to the structure of thresholds on $[0,1]$, we have the following $$\DIS(\mathcal{F}_{i-1}^{\hat{x}_t\rightarrow 0})\cup \DIS(\mathcal{F}_{i-1}^{\hat{x}_t\rightarrow 1})=\DIS(\mathcal{F}_{i-1})\setminus\{\hat{x}_t\}$$
    as the first two quantities are disjoint intervals (due to the linear order on $[0,1]$). From this it follows that $$\rho_1(\mathcal{F}_{i-1}^{\hat{x}_t\rightarrow 0})+\rho_1(\mathcal{F}_{i-1}^{\hat{x}_t\rightarrow 1})\leq\rho_1(\mathcal{F}_{i-1})$$
    with equality if $P_{x\sim\mathcal{D}}(x=\hat{x}_t)=0$.
    Therefore, $\min(\rho_1(\mathcal{F}_{i-1}^{\hat{x}_t\rightarrow 0}),\rho_1(\mathcal{F}_{i-1}^{\hat{x}_t\rightarrow 1}))\leq 0.5\rho_1(\mathcal{F}_{i-1})$ and the condition in Step \ref{alg:step_abstain_agno} is \textit{never} satisfied. We will \textit{never} abstain and always choose to predict, according to the label $j$ which maximizes $\rho_1(\mathcal{F}_{i-1}^{\hat{x}_t\rightarrow j})$.\\
    \item If $\hat{x}_t \in \DIS(\mathcal{F}_{i-1})$ and $\rho_1(\mathcal{F}_{i-1}) < \alpha$ , by choosing $\alpha=1/T$, we can bound the expected abstention error by a constant.
    \end{itemize}
    
The way we designed Algorithm \ref{alg:agnostic_disagree_ver2} with $\alpha=1/T$, we have an absolute bound on the abstention error, as we never abstain as long as $\rho_1(\mathcal{F}_{i-1}) \geq \alpha$. Only when the probability for $x\sim\mathcal{D}$ to fall in $\DIS(\mathcal{F}_{i-1})$ is small, we have the possibility to abstain. In this case, as the probability for an i.i.d point to fall in $\DIS(\mathcal{F}_{i-1})$ is small, we can bound the expected abstention error: $\mathbb{E}(\AbE)\leq 1$.\\

The main difference to the realizable setting is the misclassification error. It is unclear, how we should choose to approach the step of updating the hypothesis space. The method, we derived earlier in Algorithm \ref{alg:agnostic_disagree}, to wait until we have collected a sample of size $M$ in the middle third of the disagreement region, would in this case be a weak point. A clean-label adversary could inject an arbitrary amount of samples either in $[a_i,c_i^-]$ or $[c_i^+,b_i]$, keeping us from updating the hypothesis space and thus denying us to learn from mistakes. In the realizable setting, after every misclassification, we were able to update the hypothesis space. Based on our prediction strategy, the probability of a point falling in the disagreement region reduces by a factor of at least $0.6$. \\

So the key this time seems to be, to collect a sample of fixed size $M$ or wait until we have made a fixed amount of misclassifications in the disagreement region (accounting for noise of course) such that after an updating step we can be certain to reduce the region of disagreement by a fixed amount, like it was possible in the realizable setting in Algorithm \ref{alg:gen_vc} or for disagreement-based learning in Algorithm \ref{alg:agnostic_disagree}. This remains to be subject of further considerations. \\

In general, this approach appears feasible, especially when considering a specific problem class like thresholds, where wan can use structural properties of the class. The trade-off between misclassification error and abstention error offers a powerful tool to protect against challenges in the classification problem. As clean-label injections still provide an opportunity to learn something about the problem class, we have to find ways to circumvent the distributional assumptions we usually impose.

\newpage

\chapter{Conclusion}

\section{Next Steps}
The extension to the agnostic setting was discussed as a possible next direction of the strategies presented in \cite{goel2023AdversarialResilienceSequential}. We have been the first to analyze this from a theoretical viewpoint in this thesis with the conception of an adversary injecting clean-label samples in the agnostic setting. Chapter \ref{chap:agno} can be considered as an initial exploration of this subject. In these sections, we discuss possible approaches for handling the update of the hypothesis space, which is a challenging aspect of such an algorithm. Based on these explorations, we can identify concrete next steps to derive an adversarial robust algorithm in the presence of noise:

\begin{itemize}
    \item Extension to more general noise models: In Algorithm \ref{alg:agnostic_disagree_ver2}, we heavily depend on the assumption of Random Classification Noise or Massart noise.\\
    Generally, it would desirable to obtain error bounds also in the case of Tsybakov (like for Hanneke's agnostic shattering algorithm \cite{hanneke2012ActivizedLearningTransforming}) or even more ambitiously the general agnostic case (like for the $A^2$ or DHM algorithm \cite{balcan2006AgnosticActiveLearning,dasgupta2007GeneralAgnosticActive}). 
    Deriving more general results appears to be a challenging task since we cannot rely on results of uniform convergence, due to the clean-label injections. Therefore, it might be necessary for the selection of $\Delta$ in the updating step, to be connected to noise parameters, in order to give guarantees.
    \item Going beyond disagreement-based learning: As was already discussed in Section \ref{sec:beyond_disagree}, it is conceivable that we can achieve similar guarantees for Algorithm \ref{alg:agnostic_disagree_ver2}. It is essential to define when and how to update the hypothesis class, in order to give good guarantees similar to the disagreement-based version of Algorithm \ref{alg:agnostic_disagree}. 
    \item Leveraging structural properties of more general hypothesis classes: We would hope to generalize Algorithm \ref{alg:agnostic_disagree} and \ref{alg:agnostic_disagree_ver2} in order to exploit structural properties of different hypothesis classes in a style similar to what was accomplished for thresholds. An extension to hypothesis classes of general VC dimension 1 and axis-aligned rectangles seems natural, as was done in Section \ref{sec:results_real}. Studying classes with finite VC dimension  would also allow us to use the proxy of the probability of shattering $k$ points in deciding whether to abstain or predict, instead of only simply using the disagreement region. 
    \item General classes with finite VC dimension: The overall goal would be to move away from the structural dependence of $\mathcal{F}$ and derive bounds for general VC dimension $d$, like for Algorithm \ref{alg:gen_vc}. The main issue is again that we cannot rely on uniform convergence results. 
\end{itemize}

\section{Conclusion}
In the first part of the thesis, we explore the context of the paper 'Adversarial Resilience in Sequential Prediction via Abstention' \cite{goel2023AdversarialResilienceSequential} with a thorough overview of previous results on active learning and disagreement-based methods, the use of abstention in classification tasks and previous study of adversarial clean-label injections. Based on this, we explore an algorithm from \cite{goel2023AdversarialResilienceSequential} based on the probability of shattering $k$ randomly drawn points, leveraging the knowledge of the underlying distribution. The algorithm is constructed in such a way, that in case of a false classification, we have high information gain. The learner is allowed to abstain in case he is uncertain. We attain desirable bounds in case of finite VC dimension for the expected number of misclassifications and abstentions on non-corrupted points.\\
In situations where the true distribution of samples is unknown, we conduct a thorough analysis of the methods outlined in \cite{goel2023AdversarialResilienceSequential} and present corrections to their argumentation. These methods are based on structural arguments of the hypothesis class to achieve similar bounds. More specifically, we focus on problem classes of general VC dimension 1 and axis-aligned rectangles, where we exploit their specific structure. \\

In the second part of the thesis, we aim to generalize the approach of \cite{goel2023AdversarialResilienceSequential} from the realizable to the more general  agnostic setting. The assumption of realizability, while simplifying theoretical analysis, does not accurately represent real-life data distribution, as perfect separability of data is rarely the case, thus motivating our analysis.  We introduce the concept of agnostic learning with clean-label injections, a topic that has not been theoretically studied in the existing literature (see \cite{blum2021RobustLearningCleanlabel}).
This exploration involves discussing previous results in agnostic active learning literature, reestablishing the connection to classification with an option to reject. For the case of known marginal distribution of samples, we succeed in deriving a disagreement-based learner for thresholds on an interval, which is robust towards a clean-label adversary subject to Random Classification Noise. This can be seen as a first step towards deriving more general strategies in the agnostic setting, similar to the method based on shattering from \cite{goel2023AdversarialResilienceSequential}.

\newpage


\newpage
\addcontentsline{toc}{chapter}{References}

\bibliographystyle{alpha}
\bibliography{ref}

\addcontentsline{toc}{chapter}{Declaration of Originality}
\includepdf{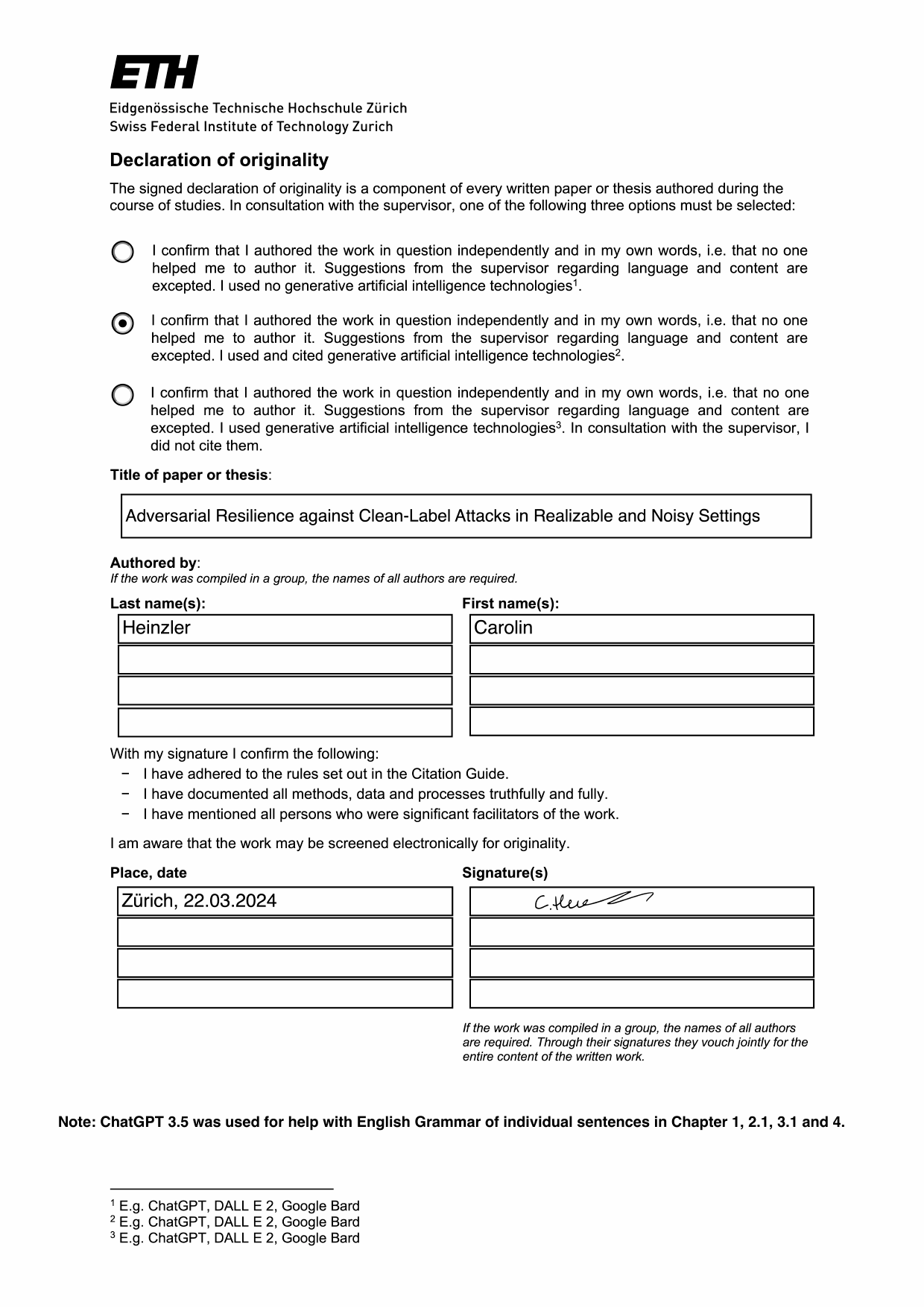}


\end{document}